\newtheorem{theorem}{Theorem}[section]
\newtheorem{lemma}[theorem]{Lemma}
\newtheorem{proposition}[theorem]{Proposition}
\newtheorem{corollary}[theorem]{Corollary}
\newtheorem{definition}[theorem]{Definition}
\newtheorem{remark}[theorem]{Remark}
\title{The Limits of AI Explainability: \\An Algorithmic Information Theory Approach}
\author{Shrisha Rao}
\date{}
\begin{document}

\maketitle

\begin{abstract}
  This paper establishes a theoretical foundation for understanding the fundamental limits of AI explainability through algorithmic information theory.  We formalize explainability as the approximation of complex models by simpler ones, quantifying both approximation error and explanation complexity using Kolmogorov complexity.  Our key theoretical contributions include: (1) a complexity gap theorem proving that any explanation significantly simpler than the original model must differ from it on some inputs; (2) precise bounds showing that explanation complexity grows exponentially with input dimension but polynomially with error tolerance for Lipschitz functions; and (3) a characterization of the gap between local and global explainability, demonstrating that local explanations can be significantly simpler while maintaining accuracy in relevant regions.  We further establish a regulatory impossibility theorem proving that no governance framework can simultaneously pursue unrestricted AI capabilities, human-interpretable explanations, and negligible error.  For AI governance, we introduce the concept of regulatory feasibility regions and demonstrate that tiered regulatory approaches can optimally navigate the inherent trade-offs between capability, interpretability, and accuracy.  These results highlight considerations likely to be relevant to the design, evaluation, and oversight of explainable AI systems.
\end{abstract}

\keywords{Algorithmic information theory, explainable AI, Kolmogorov complexity, model interpretability, complexity-error trade-offs, AI regulation}

\section{Introduction}

Artificial intelligence systems increasingly influence critical aspects of society, from healthcare diagnostics to financial decision-making and autonomous transportation~\cite{Rudin2019, Bhatt2020, Arrieta2020}.  As these systems grow more complex and their decisions more consequential, the ability to explain their behavior becomes essential for building trust, enabling effective oversight, and facilitating human-AI collaboration~\cite{Doshi-Velez2017, Miller2019}.  This requirement has given rise to the field of explainable AI (XAI), which seeks to develop methods for making complex AI systems interpretable to humans while maintaining high performance~\cite{Gunning2019, Adadi2018}.

Despite significant progress in developing practical explanation techniques, from feature attribution methods~\cite{Lundberg2017, Ribeiro2016} to prototype-based approaches~\cite{Chen2019} and counterfactual explanations~\cite{Wachter2017}, the field has lacked a proper foundation for understanding the fundamental limits of explainability.  Without such a foundation, we cannot systematically analyze the trade-offs inherent in explanation generation or establish provable guarantees about explanation quality.  This gap has led to a proliferation of heuristic methods with unclear theoretical properties and has hindered the development of principled evaluation frameworks.

The challenge in establishing such a foundation stems from the difficulty in formally defining and quantifying key concepts such as ``interpretability," ``simplicity," and ``fidelity."  These concepts involve cognitive aspects of human understanding as well as information-theoretic notions of complexity and approximation.  Previous theoretical work has approached this problem from various angles, including statistical learning theory~\cite{Ruping2006}, information theory~\cite{Zhou2021}, and complexity theory~\cite{Herman2017}, but a comprehensive theoretical model has remained pending.

This paper addresses this gap by establishing a theoretical foundation for quantifying the fundamental limits of explainability in AI systems using concepts from algorithmic information theory, approximation theory, and computational complexity.  Our approach formalizes the intuitive notion that there exists an inherent trade-off between model complexity and fidelity of explanation, providing theoretical tools for analyzing when and to what extent complex models can be explained using simpler, more interpretable representations.

Our key contributions include:

\begin{enumerate}
    \item A formal definition of explanation error (Definition~\ref{def:error-function}) based on Kolmogorov complexity, which provides a theoretically sound measure of model simplicity independent of specific representations.
    
    \item The Complexity Gap Theorem (Theorem~\ref{thm:complexity-gap}), which establishes that any explanation significantly simpler than the original model must necessarily differ from it on some inputs, formalizing the intuition that simplification entails information loss.
    
    \item Quantitative bounds on the error-complexity trade-off (Theorem~\ref{thm:error-complexity})) for different function classes, including smooth Lipschitz functions (Theorems~\ref{thm:lipschitz-explainability},~\ref{thm:lipschitz-lower-bound}), demonstrating how the difficulty of explanation scales with input dimension and function complexity.

    \item A theoretical analysis of the explainability of common model classes (Section~\ref{subsec:model-specific}), including linear models, decision trees, and neural networks, connecting our abstract approach to practical explanation methods.
      
    \item A characterization of the gap between local and global explainability (Section~\ref{subsec:local-global}), showing that local explanations can be significantly simpler than global ones while maintaining accuracy over a desired range (Theorem~\ref{thm:local-complexity}).
    
    \item A showing that no regulatory framework can simultaneously pursue all three of: (i) unrestricted AI capabilities, (ii) human-interpretable explanations, and (iii) negligible explanation error (Theorem~\ref{thm:impossibility}); and a general examination of the regulatory implications of our theoretical results (Section~\ref{subsec:regulatory-analysis}), suggesting that some current regulatory proposals implicitly demand mathematically impossible standards.
\end{enumerate}

Our approach provides several key insights into the nature of explainability.  First, it establishes that the minimum complexity required for a perfect explanation (zero error) is exactly the Kolmogorov complexity of the original model, confirming that no proper simplification can capture all aspects of a complex model's behavior.  Second, it demonstrates that the complexity of explaining Lipschitz functions grows exponentially with the input dimension but polynomially with the reciprocal of the error threshold, formalizing the ``curse of dimensionality" in the context of explainability.  Third, it shows that random functions are inherently unexplainable: any explanation significantly simpler than the original function must have substantial error.  This is in line with the known results about randomness~\cite{Calude2002}.

These theoretical results readily lead to certain practical implications.  They suggest that explanation methods should be evaluated based on explicit complexity-error trade-offs rather than heuristic notions of interpretability.  They highlight the need for dimensionality reduction and feature selection as prerequisites for effective explanation.  They also caution against regulatory approaches that demand both high fidelity and low complexity without acknowledging the mathematical impossibility of achieving both for certain classes of models.

Several recent approaches have leveraged information theory to formalize explainability in AI systems.  Jung and Nardelli~\cite{jung2020information} proposed a probabilistic approach where explanations reduce the ``surprise'' (Shannon information content) of model predictions, defining explainability via conditional mutual information between explanations and predictions given user background knowledge.  Ganguly and Gupta~\cite{ganguly2022machine} formulated explainer selection as a rate-distortion problem, optimizing the trade-off between explanation complexity and fidelity through their InfoExplain benchmark.  The connection between simplicity and lower information content has been explored in various contexts, from Dessalles'~\cite{dessalles2013simplicity} algorithmic simplicity theory, which defines unexpectedness as $U = C_{\text{exp}} - C_{\text{obs}}$ where lower observed complexity correlates with simpler explanations, to Futrell and Hahn's work~\cite{futrell2022information} linking optimal coding theory to cognitive efficiency.  Earlier critiques, such as Salmon's~\cite{philosci1978transmitted} analysis of transmitted information as an explanatory metric, highlighted the context-dependent aspects of simplicity.  While these approaches provide valuable insights into how information-theoretic principles can inform explainability, they have typically focused on specific aspects rather than developing a comprehensive model based on algorithmic information theory that can characterize fundamental trade-offs across diverse model classes and explanation methods.  Our work aims to address this gap by establishing formal bounds on the inherent trade-offs between model complexity and explanation fidelity.

The remainder of this paper is organized as follows.  Section~\ref{sec:framework} presents our model, including formal definitions of key concepts such as explanation error and complexity, and establishes the fundamental limits of explainability through key theoretical results.  Section~\ref{sec:function-classes} analyzes the explainability of different function classes and discusses practical implications for explainable AI systems.  Section~\ref{sec:regulatory} examines regulatory and policy implications.  Finally, Section~\ref{sec:conclusions} concludes the paper and outlines directions for future research.

\section{Theoretical Model and Fundamental Limits of Explainability}
\label{sec:framework}

This section establishes a theoretical structure for analyzing the fundamental limits of explainability in AI systems.  We begin with basic definitions, then develop metrics for measuring approximation quality, and finally present theoretical results that characterize the trade-offs between model complexity and explanatory power.

\subsection{Foundational Definitions}
\label{subsec:definitions}

We first formalize the concepts of AI systems, explanations, and interpretability using notions from algorithmic information theory.

\begin{definition}[AI System]
\label{def:ai-system}
An AI system is a function $f: \mathcal{X} \rightarrow \mathcal{Y}$, where $\mathcal{X}$ represents the input space and $\mathcal{Y}$ represents the output space.
\end{definition}

This formalization captures the essential behavior of an AI system as a mapping from inputs to outputs, abstracting away implementation details.  The spaces $\mathcal{X}$ and $\mathcal{Y}$ may have different structures depending on the application domain.  For classification tasks, $\mathcal{Y}$ may be a discrete set of classes, while for regression tasks, it may be $\mathbb{R}$ or $\mathbb{R}^d$. The function $f$ typically embodies complex patterns learned from data, which may not be apparent from its representation (e.g., millions of parameters in a neural network).

\begin{definition}[Explanation]
\label{def:explanation}
An explanation for an AI system $f$ is a function $g: \mathcal{X} \rightarrow \mathcal{Y}$ that approximates $f$ and is considered interpretable by humans according to some criterion.
\end{definition}

Definition~\ref{def:explanation} formalizes the dual requirements of an explanation: it must both approximate the original model's behavior and be interpretable.  The choice of the approximating function $g$ represents a fundamental trade-off---simpler functions are typically more interpretable but may approximate $f$ less accurately.  The interpretability criterion depends on factors that we operationalize through complexity measures.

To quantify the complexity of explanations in a theoretically sound manner, we turn to the concept of Kolmogorov complexity~\cite{Kolmogorov1968, Chaitin1969, Li2008}.

\begin{definition}[Kolmogorov Complexity]
\label{def:kolmogorov}
The Kolmogorov complexity $K(g)$ of a function $g$ is the length in bits of the shortest program that, when executed on a universal Turing machine $U$, computes $g$. Formally:
\begin{align}
K(g) = \min_{p \in \{0,1\}^*} \{|p| : U(p, x) = g(x) \text{ for all } x \in \mathcal{X}\}
\end{align}
where $|p|$ denotes the length of program $p$.
\end{definition}

Kolmogorov complexity provides a theoretically sound measure of a function's inherent complexity, independent of any particular representation.  It captures the minimal amount of information needed to specify the function's behavior completely.  For explainable AI, this corresponds to the minimal cognitive resources needed to understand the function's behavior.

While Kolmogorov complexity is uncomputable in general, it serves as a theoretical foundation and can be approximated for practical model classes (Section~\ref{subsec:model-specific}).

While Definition~\ref{def:kolmogorov} defines Kolmogorov complexity for discrete objects, we need to apply a version it to functions $f: \mathbb{R}^d \to \mathbb{R}$.  For this purpose, we need the following extensions.

\begin{definition}[Computable Real Functions]
\label{def:computable}
For real-valued functions, we adopt the Type-2 theory of effectivity framework. A function $f: [0,1]^d \to \mathbb{R}$ is computable if there exists a Turing machine $M$ such that:
\begin{enumerate}
\item $M$ takes as input a rapidly converging Cauchy sequence $(q_i)_{i \in \mathbb{N}}$ of rational numbers approximating $x \in [0,1]^d$ (with $|x - q_i| < 2^{-i}$), and an accuracy parameter $k \in \mathbb{N}$.
\item $M$ outputs a rational number $r$ such that $|f(x) - r| < 2^{-k}$.
\end{enumerate}
\end{definition}

Throughout this paper, when we discuss functions $f: X \to Y$ where $X$ or $Y$ are continuous spaces, we implicitly restrict attention to computable functions in the sense of Definition~\ref{def:computable}.  All theoretical results apply to this class of functions. For discrete spaces, Definition~\ref{def:kolmogorov} applies directly.  The Kolmogorov complexity $K(f)$ of a computable function $f: [0,1]^d \to \mathbb{R}$ is the length of the shortest program that computes $f$ in the sense of Definition~\ref{def:computable}.  Formally:

\begin{definition}
  \label{def:kolmogorov2}
\begin{equation}
K(f) = \min\{|p| : U(p, \langle q_i \rangle, k) \text{ computes } f \text{ as in Definition 2.3a}\}
\end{equation}
where $U$ is a universal Type-2 Turing machine.
\end{definition}

\begin{theorem}[Invariance Theorem]
\label{lemma:invariance}
If $U_1$ and $U_2$ are two universal Turing machines, then there exists a constant $c_{U_1,U_2}$ such that for any function $g$:
\begin{align}
|K_{U_1}(g) - K_{U_2}(g)| \leq c_{U_1,U_2}
\end{align}
where $K_{U_i}(g)$ denotes the Kolmogorov complexity of $g$ with respect to machine $U_i$.
\end{theorem}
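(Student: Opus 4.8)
The plan is to exploit the defining property of a universal Turing machine: the ability to simulate any other Turing machine with only a bounded, machine-dependent overhead. Concretely, I would first invoke universality of $U_1$ to fix a finite ``interpreter'' string $\pi_{2 \to 1} \in \{0,1\}^*$ — a compiler for $U_2$ running on $U_1$ — with the property that for every program $p$ and every input $x \in \mathcal{X}$,
\[
U_1(\pi_{2 \to 1}\, p,\, x) = U_2(p, x),
\]
where $\pi_{2 \to 1}\, p$ denotes the concatenation, encoded under a fixed convention (e.g. $\pi_{2\to1}$ self-delimiting, or $U_1$'s input format already demarcating the prefix) so that $U_1$ can recover the boundary between the fixed prefix and $p$. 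The essential point, which I would emphasize, is that $\pi_{2 \to 1}$ depends only on the pair $(U_1,U_2)$ and not on $g$, $p$, or $x$.

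Granting this, the first inequality is a two-line argument. Let $p^\ast$ be a shortest $U_2$-program for $g$, so $|p^\ast| = K_{U_2}(g)$ and $U_2(p^\ast, x) = g(x)$ for all $x$. Then $\pi_{2 \to 1}\, p^\ast$ is a $U_1$-program computing $g$, hence
\[
K_{U_1}(g) \;\le\; |\pi_{2 \to 1}\, p^\ast| \;=\; |\pi_{2 \to 1}| + K_{U_2}(g).
\]
Swapping the roles of the two machines and using an interpreter $\pi_{1 \to 2}$ for $U_1$ on $U_2$ gives the symmetric bound $K_{U_2}(g) \le |\pi_{1 \to 2}| + K_{U_1}(g)$. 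Taking $c_{U_1,U_2} := \max\{|\pi_{2 \to 1}|,\, |\pi_{1 \to 2}|\}$, these two estimates together yield $|K_{U_1}(g) - K_{U_2}(g)| \le c_{U_1,U_2}$, with $c_{U_1,U_2}$ a constant independent of $g$, as claimed.

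The only genuine subtlety — and the step I would be most careful about — is the precise formalization of ``universal,'' i.e. ensuring the simulation overhead is a single fixed prefix rather than something growing with the simulated program or with the working input $x$. This is exactly where the two-argument convention $U(p,x)$ of Definition~\ref{def:kolmogorov} has to be pinned down: one must guarantee that after reading the fixed prefix $\pi_{2\to1}$, the universal machine can still parse an arbitrary simulated program $p$ together with its input $x$. Under any standard input-encoding convention (self-delimiting or fixed-length demarcation of the prefix, and a fixed pairing of $p$ with $x$) this is routine bookkeeping, after which the concatenation bound above closes the argument.
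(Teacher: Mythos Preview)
Your proof is correct and follows essentially the same approach as the paper's own proof: fix a finite interpreter (the paper calls it $s_{1,2}$) that simulates one universal machine on the other, prepend it to a shortest program, and symmetrize to obtain the two-sided bound with constant $\max\{|\pi_{2\to1}|,|\pi_{1\to2}|\}$. Your treatment is in fact slightly more careful than the paper's about the encoding convention needed to make concatenation work, whereas the paper simply absorbs this into an $\mathcal{O}(1)$ term.
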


\begin{proof}
Let $p_1$ be the shortest program that computes $g$ on $U_1$, so $|p_1| = K_{U_1}(g)$. Since $U_2$ is universal, there exists a fixed program $s_{1,2}$ (an interpreter) such that for any program $p$ for $U_1$, the concatenation $\langle s_{1,2}, p \rangle$ computes the same function on $U_2$. Here $\langle \cdot, \cdot \rangle$ denotes a prefix-free encoding that allows unique recovery of both components.

Specifically, for all inputs $x$, we have $U_2(\langle s_{1,2}, p \rangle, x) = U_1(p, x)$.

The program $\langle s_{1,2}, p_1 \rangle$ computes $g$ on $U_2$. The length of this program is:
\begin{equation}
|\langle s_{1,2}, p_1 \rangle| \leq |s_{1,2}| + |p_1| + \mathcal{O}(1)
\end{equation}
where the $O(1)$ term accounts for the overhead of the prefix-free encoding.

Therefore, $K_{U_2}(g) \leq K_{U_1}(g) + |s_{1,2}| + \mathcal{O}(1)$.

By symmetry, there exists a program $s_{2,1}$ such that $K_{U_1}(g) \leq K_{U_2}(g) + |s_{2,1}| + O(1)$.

Setting $c_{U_1,U_2} = \max(|s_{1,2}|, |s_{2,1}|) + \mathcal{O}(1)$ completes the proof.
\end{proof}

\begin{remark} \label{rem:concatenate}
Throughout this paper, we assume that all universal Turing machines use prefix-free encodings for their programs. This ensures that programs can be uniquely parsed and that the Kraft inequality holds, which is essential for many results in algorithmic information theory.  The invariance theorem (Theorem~\ref{lemma:invariance}) relies on the ability to concatenate programs using prefix-free codes~\cite{Li2008}.
\end{remark}

Theorem~\ref{lemma:invariance}, originally independently discovered by Solomonoff and Kolmogorov~\cite{Li2008}, justifies our use of Kolmogorov complexity by showing that the choice of universal Turing machine affects the complexity measure only by a constant factor, which becomes negligible for sufficiently complex functions.

\begin{definition}[Interpretability Class]
\label{def:interpretability-class}
For a given complexity threshold $k \in \mathbb{N}$, we define the interpretability class $\mathcal{I}_k$ as:
\begin{align}
\mathcal{I}_k = \{g : \mathcal{X} \rightarrow \mathcal{Y} \mid K(g) \leq k\}
\end{align}
\end{definition}

The interpretability class formalizes the set of functions that are deemed ``simple enough" to be interpretable by humans, within the complexity threshold $k$. This threshold represents the cognitive capacity available for understanding the function.  Different values of $k$ correspond to different levels of interpretability, ranging from simple rules ($k$ small) to complex models ($k$ large).

\begin{lemma}[Size of Interpretability Class]
\label{thm:size-interpretability}
For any complexity threshold $k \in \mathbb{N}$, the cardinality of the interpretability class $\mathcal{I}_k$ is bounded by:
\begin{align}
|\mathcal{I}_k| \leq 2^{k+1} - 1
\end{align}
\end{lemma}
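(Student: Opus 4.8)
The plan is to prove this by a direct counting argument that pairs each function in $\mathcal{I}_k$ with a distinct short program. First I would observe that, by Definition~\ref{def:kolmogorov}, every $g \in \mathcal{I}_k$ admits at least one program $p \in \{0,1\}^*$ with $|p| \le k$ such that $U(p,x) = g(x)$ for all $x \in \mathcal{X}$; among all such programs, pick the one that is shortest, breaking ties by lexicographic order, and call it $p_g$. This defines a map $\phi : \mathcal{I}_k \to \{0,1\}^{\le k}$ by $\phi(g) = p_g$, where $\{0,1\}^{\le k}$ denotes the set of binary strings of length at most $k$.

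Next I would argue that $\phi$ is injective. The key point is that a fixed universal Turing machine $U$, together with a fixed program $p$, determines at most one function $x \mapsto U(p,x)$; hence if $\phi(g_1) = \phi(g_2) = p$, then $g_1(x) = U(p,x) = g_2(x)$ for every $x \in \mathcal{X}$, so $g_1 = g_2$. Injectivity of $\phi$ immediately gives $|\mathcal{I}_k| \le |\{0,1\}^{\le k}|$.

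Finally I would compute the size of the codomain: the number of binary strings of length exactly $i$ is $2^i$, so
\begin{align}
|\{0,1\}^{\le k}| = \sum_{i=0}^{k} 2^i = 2^{k+1} - 1,
\end{align}
which yields the claimed bound $|\mathcal{I}_k| \le 2^{k+1} - 1$.

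I do not anticipate a genuine obstacle here, since the statement is essentially the standard "there are few short programs" counting bound. The only points requiring a little care are making the tie-breaking rule explicit so that $\phi$ is well defined, and noting that programs which fail to halt (or fail to compute a total function on $\mathcal{X}$) are simply never selected as some $p_g$, so they do not affect the upper bound. It is also worth remarking that the same argument shows more generally that the number of functions with $K(g) = k$ exactly is at most $2^k$, but the cumulative bound above is what the statement requires.
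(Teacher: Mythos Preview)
Your proof is correct and follows essentially the same counting argument as the paper: both bound $|\mathcal{I}_k|$ by the number of binary programs of length at most $k$, computing $\sum_{i=0}^{k} 2^i = 2^{k+1}-1$. Your version is slightly more explicit in constructing the injection $\phi$ and handling tie-breaking, whereas the paper simply remarks that distinct functions require distinct programs, but the substance is identical.
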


\begin{proof}
Every function $g \in \mathcal{I}_k$ has $K(g) \leq k$, meaning there exists a program of length at most $k$ that computes $g$.  The number of distinct programs of length at most $k$ is:
\begin{align}
\sum_{i=0}^{k} 2^i = 2^{k+1} - 1
\end{align}

Since different programs can compute the same function but not vice versa, the number of distinct functions in $\mathcal{I}_k$ is at most $2^{k+1} - 1$.
\end{proof}

Lemma~\ref{thm:size-interpretability} shows that the interpretability class grows exponentially with the complexity threshold, reflecting the rapidly expanding space of possible functions as complexity increases.

To quantify how well an explanation approximates the original model, we introduce appropriate error metrics:

\begin{definition}[Approximation Error]
\label{def:approximation-error}
For functions $f, g: \mathcal{X} \rightarrow \mathcal{Y}$ and a probability distribution $D$ over $\mathcal{X}$, we define:
\begin{enumerate}
\item Expected error: $\mathcal{E}_D(f, g) = \mathbb{E}_{x \sim D}[d(f(x), g(x))]$
\item Worst-case error: $\mathcal{E}_{\infty}(f, g) = \sup_{x \in \mathcal{X}} d(f(x), g(x))$
\end{enumerate}
where $d: \mathcal{Y} \times \mathcal{Y} \rightarrow \mathbb{R}_{\geq 0}$ is an appropriate distance function for the output space.
\end{definition}

These error metrics capture different aspects of approximation quality.  The expected error $\mathcal{E}_D(f, g)$ measures the average disagreement between $f$ and $g$ with respect to the distribution $D$, which typically represents the distribution of inputs encountered in practice.  This metric is relevant when explanations need to be accurate on average across the input space.

The worst-case error $\mathcal{E}_{\infty}(f, g)$ measures the maximum disagreement between $f$ and $g$ across the entire input space.  This metric is relevant when explanations must provide guarantees about approximation quality for every possible input, such as in safety-critical applications.

\begin{proposition}[Relationship Between Error Metrics]
\label{thm:error-relationship}
For any functions $f, g: \mathcal{X} \rightarrow \mathcal{Y}$ and any probability distribution $D$ over $\mathcal{X}$:
\begin{align}
\mathcal{E}_D(f, g) \leq \mathcal{E}_{\infty}(f, g)
\end{align}
\end{proposition}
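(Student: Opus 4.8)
The plan is to derive the inequality directly from the pointwise bound that defines the worst-case error, together with monotonicity of expectation. First I would observe that, by the definition of the supremum, for every $x \in \mathcal{X}$ we have $d(f(x), g(x)) \leq \sup_{x' \in \mathcal{X}} d(f(x'), g(x')) = \mathcal{E}_{\infty}(f, g)$. This holds simultaneously for all $x$, so it is a uniform pointwise upper bound on the integrand appearing in the definition of $\mathcal{E}_D(f, g)$.

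Next I would take expectations with respect to $x \sim D$ on both sides. Since $d$ is $\mathbb{R}_{\geq 0}$-valued, the random variable $x \mapsto d(f(x), g(x))$ is nonnegative, and monotonicity of the expectation gives
\begin{align}
\mathcal{E}_D(f, g) = \mathbb{E}_{x \sim D}\big[d(f(x), g(x))\big] \leq \mathbb{E}_{x \sim D}\big[\mathcal{E}_{\infty}(f, g)\big] = \mathcal{E}_{\infty}(f, g),
\end{align}
where the last equality uses that $\mathcal{E}_{\infty}(f, g)$ is a constant with respect to $x$ and that $D$ is a probability distribution (total mass one).

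The only points requiring care are degenerate: if $\mathcal{E}_{\infty}(f, g) = +\infty$, the inequality is immediate; otherwise $d(f(\cdot), g(\cdot))$ is a bounded nonnegative function, hence integrable against $D$, so the expectation $\mathcal{E}_D(f, g)$ is well-defined and finite, and the chain above is valid. I would also note in passing that measurability of $x \mapsto d(f(x), g(x))$ is implicitly assumed wherever $\mathcal{E}_D(f,g)$ is invoked, so no extra hypothesis is needed here. I do not expect any genuine obstacle; the result is the standard fact that the mean of a nonnegative quantity never exceeds its supremum, and the write-up is essentially just making the pointwise-to-average step explicit.
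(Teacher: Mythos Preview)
Your proposal is correct and follows essentially the same approach as the paper: both argue from the pointwise bound $d(f(x),g(x)) \leq \mathcal{E}_{\infty}(f,g)$ and then invoke monotonicity of the integral/expectation together with the fact that $D$ has total mass one. Your additional remarks on the $\mathcal{E}_{\infty}=+\infty$ case and on measurability are fine but not needed for the paper's level of rigor.
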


\begin{proof}
By definition:
\begin{align}
\mathcal{E}_D(f, g) &= \mathbb{E}_{x \sim D}[d(f(x), g(x))] \notag \\
&= \int_{\mathcal{X}} d(f(x), g(x)) \, dD(x)
\end{align}

Since $d(f(x), g(x)) \leq \sup_{x' \in \mathcal{X}} d(f(x'), g(x'))$ for all $x \in \mathcal{X}$, we have:
\begin{align*}
\mathcal{E}_D(f, g) &= \int_{\mathcal{X}} d(f(x), g(x)) \, dD(x) \\
&\leq \int_{\mathcal{X}} \sup_{x' \in \mathcal{X}} d(f(x'), g(x')) \, dD(x) \\
&= \sup_{x' \in \mathcal{X}} d(f(x'), g(x')) \int_{\mathcal{X}} 1 \, dD(x) \\
&= \sup_{x' \in \mathcal{X}} d(f(x'), g(x')) \cdot 1 \\
&= \mathcal{E}_{\infty}(f, g) \qedhere
\end{align*}
\end{proof}

Proposition~\ref{thm:error-relationship} confirms the intuition that the expected error is always bounded above by the worst-case error.  It implies that any guarantee on the worst-case error automatically applies to the expected error, but not vice versa.

\subsection{Core Explainability Measures}
\label{subsec:core-measures}

We now define measures that quantify the explainability of an AI system:

\begin{definition}[Explanation Error Function]
\label{def:error-function}
For a model $f$, complexity threshold $k$, and error metric $\mathcal{E}$, we define the explanation error function:
\begin{align}
\varepsilon_f(k) = \inf_{g \in \mathcal{I}_k} \mathcal{E}(f, g)
\end{align}
This represents the minimum error achievable by any explanation with complexity at most $k$.
\end{definition}

The explanation error function captures the fundamental limits of how accurately a complex model can be approximated by simpler, more interpretable functions.  For a given complexity threshold $k$, $\varepsilon_f(k)$ represents the minimum error that must be incurred by any explanation in the interpretability class $\mathcal{I}_k$.  This formalizes the intuition that simpler explanations may not be able to capture all the nuances of complex models.

While our framework applies to arbitrary functions $f: \mathcal{X} \to \mathcal{Y}$, 
certain regularity conditions on functions are important for ensuring that 
explanation errors are meaningful and detectable. We introduce one such condition 
that will play a key role in subsequent results.

\begin{definition}[Output Separation]
\label{def:output-separation}
For a function $f: \mathcal{X} \to \mathcal{Y}$ and a threshold $\delta > 0$, 
define the \emph{output separation} of $f$ at resolution $\delta$ as
\[
\sigma_f(\delta) = \inf_{x \in \mathcal{X}} \inf_{\substack{y \in \mathcal{Y} \\ 
y \neq f(x)}} d(f(x), y).
\]
This measures the minimum distance between any actual output $f(x)$ and the 
nearest alternative output value.
\end{definition}

This allows us a further important definition, as follows.

\begin{definition}[Non-Degenerate Function]
\label{def:non-degenerate-early}
A function $f: \mathcal{X} \to \mathcal{Y}$ is called \emph{$\delta$-non-degenerate} 
if $\sigma_f(\delta) > \delta$. Equivalently, $f$ is $\delta$-non-degenerate if
\[
\inf_{x \in \mathcal{X}} \inf_{\substack{y \in \mathcal{Y} \\ y \neq f(x)}} 
d(f(x), y) > \delta.
\]
\end{definition}

The non-degeneracy condition ensures that when $f$ produces an output $f(x)$, 
all other possible outputs in $\mathcal{Y}$ are separated from $f(x)$ by a 
distance exceeding $\delta$. This makes errors above $\delta$ meaningfully 
detectable: if an explanation $g$ disagrees with $f$ at some input $x$ (i.e., 
$g(x) \neq f(x)$), then necessarily $d(f(x), g(x)) > \delta$.

Without this condition, a function could have outputs arbitrarily close to 
alternative values, making it possible for $g$ to differ from $f$ on many inputs 
while maintaining $\mathcal{E}(f,g) \leq \delta$. The non-degeneracy condition 
prevents such ``invisible disagreements.''

\begin{remark}[Role in Subsequent Results]
\label{rem:role-in-results}
The non-degeneracy condition appears explicitly in these key results:

\begin{itemize}
\item Combined with Theorem~\ref{thm:error-complexity} 
(Error-Complexity Lower Bound), it ensures that complexity reduction necessarily 
leads to detectable errors (see Corollary~\ref{cor:error-nondegen}).  Corollary~\ref{cor:error-nondegen} shows that under non-degeneracy, the 
error bound in Theorem~\ref{thm:error-complexity} guarantees errors exceeding 
$\delta$, making complexity reduction lead to \emph{detectable} errors.

\item In Theorem~\ref{thm:impossibility} (Regulatory Impossibility Result), it 
guarantees that the trilemma genuinely applies to practical AI systems with 
meaningful error thresholds.

\item In the analysis of specific function classes (Section~\ref{sec:function-classes}), 
it helps distinguish between functions that admit efficient explanations and 
those that do not.
\end{itemize}

We see that the condition is not merely technical but captures an essential 
property: for explanation errors to be meaningful, disagreements between $f$ and 
$g$ must result in detectable output differences.
\end{remark}

\begin{definition}[Explanation Complexity Function]
\label{def:complexity-function}
For a model $f$, error threshold $\delta > 0$, and error metric $\mathcal{E}$, we define the explanation complexity function:
\begin{align}
\kappa_f(\delta) = \min\{k \in \mathbb{N} \mid \exists g \in \mathcal{I}_k : \mathcal{E}(f, g) \leq \delta\}
\end{align}
This represents the minimum complexity required to achieve an approximation error of at most $\delta$.
\end{definition}

The explanation complexity function addresses the complementary question: given a desired level of approximation accuracy $\delta$, what is the minimum complexity required for an explanation to achieve this accuracy?  This formalizes the intuition that more accurate explanations may need to be more complex.

\begin{theorem}[Existence of Explanation Complexity]
\label{thm:existence}
For any model $f$ with finite Kolmogorov complexity $K(f) < \infty$ and any error threshold $\delta > 0$, the explanation complexity $\kappa_f(\delta)$ is well-defined and satisfies:
\begin{align}
\kappa_f(\delta) \leq K(f)
\end{align}
\end{theorem}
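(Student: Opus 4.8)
The plan is to exhibit an explicit witness for the minimization defining $\kappa_f(\delta)$, namely $f$ itself. The key observation is that the identity choice $g = f$ is always available as a (trivially perfect, if not interpretable) ``explanation,'' and its complexity is exactly $K(f)$.

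First I would note that $f \in \mathcal{I}_{K(f)}$ directly from Definition~\ref{def:interpretability-class}, since the defining condition $K(f) \leq K(f)$ holds. Second, I would compute $\mathcal{E}(f, f)$: because $d$ is a distance function on $\mathcal{Y}$, we have $d(f(x), f(x)) = 0$ for every $x \in \mathcal{X}$, so both $\mathcal{E}_D(f,f) = \mathbb{E}_{x\sim D}[0] = 0$ and $\mathcal{E}_\infty(f,f) = \sup_{x}0 = 0$; in either case $\mathcal{E}(f,f) = 0 \leq \delta$ for every $\delta > 0$. Third, I would conclude that the set $\{k \in \mathbb{N} \mid \exists g \in \mathcal{I}_k : \mathcal{E}(f,g) \leq \delta\}$ from Definition~\ref{def:complexity-function} is nonempty, since it contains $k = K(f)$ (with witness $g = f$). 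Being a nonempty subset of $\mathbb{N}$, it has a least element by well-ordering, so $\kappa_f(\delta)$ is well-defined; and that least element is at most $K(f)$, giving $\kappa_f(\delta) \leq K(f)$.

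There is essentially no obstacle here — the result is a sanity check confirming that the complexity function is finite and bounded by the model's own description length. The only mild subtlety worth a sentence is the appeal to the assumption $K(f) < \infty$, which guarantees $K(f) \in \mathbb{N}$ so that $f$ genuinely lies in some interpretability class $\mathcal{I}_k$ with finite $k$; without it the witness set could fail to be a subset of $\mathbb{N}$. I would also remark that this bound is tight in the limit $\delta \to 0$ in the sense foreshadowed by the Complexity Gap Theorem, but that observation belongs to the surrounding discussion rather than the proof itself.
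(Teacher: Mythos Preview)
Your proposal is correct and follows essentially the same approach as the paper: take $g = f$ as the witness, observe that $\mathcal{E}(f,f) = 0 \leq \delta$ and $f \in \mathcal{I}_{K(f)}$, and conclude that the defining set contains $K(f)$ so its minimum exists and is at most $K(f)$. Your version is slightly more explicit about the well-ordering step and the role of the finiteness assumption, but the argument is the same.
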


\begin{proof}
For the function $f$ itself, we have $\mathcal{E}(f, f) = 0 \leq \delta$ for any $\delta > 0$.  Since $K(f) < \infty$, we have $f \in \mathcal{I}_{K(f)}$.  Therefore, there exists $k \in \mathbb{N}$ (namely, $k = K(f)$) such that $\exists g \in \mathcal{I}_k : \mathcal{E}(f, g) \leq \delta$.  Since $\kappa_f(\delta)$ is defined as the minimum of such $k$, we have $\kappa_f(\delta) \leq K(f)$.
\end{proof}

This theorem ensures that our definition of explanation complexity is well-founded for any model with finite Kolmogorov complexity, and it provides an upper bound in terms of the model's own complexity.

\begin{proposition}[Monotonicity of Error Function]
\label{prop:error-monotonicity}
For any model $f$, the explanation error function $\varepsilon_f(k)$ is non-increasing in $k$.
\end{proposition}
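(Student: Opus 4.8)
The plan is to exploit the nested structure of the interpretability classes, so the whole argument reduces to the elementary monotonicity of the infimum under set inclusion. First I would fix two thresholds $k_1, k_2 \in \mathbb{N}$ with $k_1 \le k_2$ and observe that $\mathcal{I}_{k_1} \subseteq \mathcal{I}_{k_2}$: by Definition~\ref{def:interpretability-class}, any $g$ with $K(g) \le k_1$ also satisfies $K(g) \le k_2$, so every function that is ``interpretable at level $k_1$'' is a fortiori interpretable at level $k_2$. This inclusion is the only structural fact needed.

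Next I would push this inclusion through the definition of $\varepsilon_f$. Since $\mathcal{I}_{k_1} \subseteq \mathcal{I}_{k_2}$, the image sets satisfy $\{\mathcal{E}(f,g) : g \in \mathcal{I}_{k_1}\} \subseteq \{\mathcal{E}(f,g) : g \in \mathcal{I}_{k_2}\}$ as subsets of $\mathbb{R}_{\ge 0}$, and the infimum of a set is non-increasing when the set is enlarged. Hence $\varepsilon_f(k_2) = \inf_{g \in \mathcal{I}_{k_2}} \mathcal{E}(f,g) \le \inf_{g \in \mathcal{I}_{k_1}} \mathcal{E}(f,g) = \varepsilon_f(k_1)$ (using the error metric $\mathcal{E}$ from Definition~\ref{def:error-function}). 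As $k_1 \le k_2$ were arbitrary, this shows $\varepsilon_f$ is non-increasing in $k$.

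The only point that deserves a sentence of care is the regime of thresholds $k$ so small that $\mathcal{I}_k$ is empty, where $\varepsilon_f(k)$ is the infimum of the empty set and equals $+\infty$ under the standard convention; the inclusion argument is unaffected, since $+\infty$ dominates every real value and the transition from empty to nonempty classes only lowers the value. I do not expect any genuine obstacle here: the statement is a direct consequence of the monotonicity of the infimum, and the substantive content is simply recording the nesting $\mathcal{I}_{k_1}\subseteq\mathcal{I}_{k_2}$ explicitly and noting it holds for all comparable thresholds.
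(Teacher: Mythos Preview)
Your proposal is correct and follows essentially the same approach as the paper: both arguments rest on the inclusion $\mathcal{I}_{k_1}\subseteq\mathcal{I}_{k_2}$ for $k_1\le k_2$ and the monotonicity of the infimum under set enlargement. Your version is slightly more detailed (spelling out the image-set inclusion and handling the empty-class edge case), but the substance is identical.
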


\begin{proof}
Consider $k_1 < k_2$. By Definition~\ref{def:interpretability-class}, $\mathcal{I}_{k_1} \subset \mathcal{I}_{k_2}$. Therefore:
\begin{align}
\varepsilon_f(k_2) &= \inf_{g \in \mathcal{I}_{k_2}} \mathcal{E}(f, g) \notag \\
&\leq \inf_{g \in \mathcal{I}_{k_1}} \mathcal{E}(f, g) = \varepsilon_f(k_1)
\end{align}
Thus, $\varepsilon_f(k)$ is non-increasing in $k$.
\end{proof}

Proposition~\ref{prop:error-monotonicity} formalizes the intuition that allowing more complex explanations (i.e., increasing $k$) cannot increase the minimum achievable error.  In other words, greater complexity enables potentially better approximations.

\begin{proposition}[Monotonicity of Complexity Function]
\label{prop:complexity-monotonicity}
For any model $f$, the explanation complexity function $\kappa_f(\delta)$ is non-increasing in $\delta$.
\end{proposition}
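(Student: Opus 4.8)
The plan is to reuse the nesting argument that drove Proposition~\ref{prop:error-monotonicity}, but now at the level of feasible complexity thresholds rather than interpretability classes. Fix $0 < \delta_1 < \delta_2$; I want to show $\kappa_f(\delta_2) \leq \kappa_f(\delta_1)$. The key observation is that the error constraint defining $\kappa_f$ only gets \emph{easier} to satisfy as the tolerance grows: if $g \in \mathcal{I}_k$ satisfies $\mathcal{E}(f,g) \leq \delta_1$, then a fortiori $\mathcal{E}(f,g) \leq \delta_2$ since $\delta_1 < \delta_2$.

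Concretely, first I would introduce the feasible sets
\begin{align}
S_f(\delta) = \{k \in \mathbb{N} \mid \exists\, g \in \mathcal{I}_k : \mathcal{E}(f,g) \leq \delta\},
\end{align}
so that $\kappa_f(\delta) = \min S_f(\delta)$ by Definition~\ref{def:complexity-function}. The observation above shows $S_f(\delta_1) \subseteq S_f(\delta_2)$: any $k$ witnessing membership in $S_f(\delta_1)$, via some $g \in \mathcal{I}_k$, is also a witness for $S_f(\delta_2)$ via the same $g$. Then, taking the minimum over a superset can only decrease (or leave unchanged) the value, so $\min S_f(\delta_2) \leq \min S_f(\delta_1)$, i.e.\ $\kappa_f(\delta_2) \leq \kappa_f(\delta_1)$. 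Since $\delta_1 < \delta_2$ were arbitrary, $\kappa_f$ is non-increasing in $\delta$.

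The only point requiring a word of care is well-definedness of the minima, which I would dispatch by invoking Theorem~\ref{thm:existence}: when $K(f) < \infty$ we have $K(f) \in S_f(\delta)$ for every $\delta > 0$, so each $S_f(\delta)$ is a nonempty subset of $\mathbb{N}$ and its minimum exists; alternatively, adopting the convention $\min \emptyset = \infty$ makes the inequality hold vacuously when a set is empty. I do not anticipate a genuine obstacle here — the statement is a routine consequence of monotonicity of the constraint in $\delta$ — so the main "work" is simply stating the set-nesting cleanly and citing Theorem~\ref{thm:existence} for the edge case.
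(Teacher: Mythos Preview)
Your proof is correct and is essentially the same argument as the paper's: both fix $\delta_1 < \delta_2$, take a witness $g \in \mathcal{I}_{k_1}$ for the tighter tolerance $\delta_1$ at $k_1 = \kappa_f(\delta_1)$, observe that the same $g$ also meets the looser tolerance $\delta_2$, and conclude $\kappa_f(\delta_2) \leq k_1$. Your version just packages this via the feasible-set inclusion $S_f(\delta_1) \subseteq S_f(\delta_2)$ and adds an explicit remark on well-definedness (citing Theorem~\ref{thm:existence}), which the paper leaves implicit.
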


\begin{proof}
Consider $\delta_1 < \delta_2$. Let $k_1 = \kappa_f(\delta_1)$. By Definition~\ref{def:complexity-function}, there exists a function $g \in \mathcal{I}_{k_1}$ such that $\mathcal{E}(f, g) \leq \delta_1$. Since $\delta_1 < \delta_2$, we have $\mathcal{E}(f, g) \leq \delta_2$ as well. This implies that $\kappa_f(\delta_2) \leq k_1 = \kappa_f(\delta_1)$. Thus, $\kappa_f(\delta)$ is non-increasing in $\delta$.
\end{proof}

Proposition~\ref{prop:complexity-monotonicity} formalizes the intuition that allowing higher approximation error (i.e., increasing $\delta$) cannot increase the minimum required complexity.  In other words, longer explanations are not less accurate than shorter ones.

\begin{theorem}[Duality of Explainability Measures]
\label{thm:duality}
For any model $f$, complexity threshold $k$, and error threshold $\delta > 0$:
\begin{align}
\varepsilon_f(k) \leq \delta \iff k \geq \kappa_f(\delta)
\end{align}
\end{theorem}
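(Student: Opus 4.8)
The plan is to establish the two implications of the biconditional separately, after first recording the crucial structural fact that the infimum in $\varepsilon_f(k) = \inf_{g\in\mathcal{I}_k}\mathcal{E}(f,g)$ is actually attained. This holds because $\mathcal{I}_k$ is finite by Lemma~\ref{thm:size-interpretability}, so the infimum is a minimum over a finite set of nonnegative reals; I would fix a minimizer $g^\star \in \mathcal{I}_k$ with $\mathcal{E}(f,g^\star) = \varepsilon_f(k)$. (In the degenerate case $\mathcal{I}_k = \emptyset$ one sets $\varepsilon_f(k) = +\infty$, both sides of the claimed equivalence are false, and it holds vacuously.)

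For the direction $\varepsilon_f(k) \leq \delta \Rightarrow k \geq \kappa_f(\delta)$, I would argue that the minimizer $g^\star$ satisfies $\mathcal{E}(f,g^\star) = \varepsilon_f(k) \leq \delta$, so $k$ belongs to the set $\{k'\in\mathbb{N} : \exists g\in\mathcal{I}_{k'},\ \mathcal{E}(f,g)\leq\delta\}$ whose minimum defines $\kappa_f(\delta)$. This set is therefore nonempty, $\kappa_f(\delta)$ is well-defined, and $\kappa_f(\delta) \leq k$ since $k$ is one of its members.

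For the direction $k \geq \kappa_f(\delta) \Rightarrow \varepsilon_f(k) \leq \delta$, I would let $k_0 = \kappa_f(\delta)$, invoke Definition~\ref{def:complexity-function} to obtain some $g\in\mathcal{I}_{k_0}$ with $\mathcal{E}(f,g)\leq\delta$, note that $\mathcal{I}_{k_0}\subseteq\mathcal{I}_k$ because $k\geq k_0$ (Definition~\ref{def:interpretability-class}), and conclude $\varepsilon_f(k) = \inf_{g'\in\mathcal{I}_k}\mathcal{E}(f,g') \leq \mathcal{E}(f,g) \leq \delta$; equivalently this is just monotonicity, $\varepsilon_f(k)\leq\varepsilon_f(k_0)\leq\delta$ via Proposition~\ref{prop:error-monotonicity}.

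I do not anticipate a real obstacle here: the content is entirely the finiteness-of-$\mathcal{I}_k$ observation that turns the defining infimum into a genuine witness, together with routine bookkeeping about when $\kappa_f(\delta)$ is well-defined and the nesting $\mathcal{I}_{k_1}\subseteq\mathcal{I}_{k_2}$ for $k_1\le k_2$. If one wished to avoid relying on finiteness, an alternative would be to carry an $\epsilon$-approximation argument through the infimum, but given Lemma~\ref{thm:size-interpretability} that extra machinery is unnecessary.
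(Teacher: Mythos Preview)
Your proof is correct and follows essentially the same route as the paper's: establish each implication directly from the definitions together with the nesting $\mathcal{I}_{k_0}\subseteq\mathcal{I}_k$. Your explicit appeal to Lemma~\ref{thm:size-interpretability} to guarantee that the infimum in $\varepsilon_f(k)$ is attained is in fact more careful than the paper, which passes silently from $\varepsilon_f(k)\le\delta$ to the existence of a witness $g\in\mathcal{I}_k$ with $\mathcal{E}(f,g)\le\delta$.
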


\begin{proof}
($\Rightarrow$) Assume $\varepsilon_f(k) \leq \delta$.  By definition, $\varepsilon_f(k) = \inf_{g \in \mathcal{I}_k} \mathcal{E}(f, g)$.  Since $\varepsilon_f(k) \leq \delta$, there exists a function $g \in \mathcal{I}_k$ such that $\mathcal{E}(f, g) \leq \delta$. By the definition of $\kappa_f(\delta)$, we have $\kappa_f(\delta) \leq k$.

($\Leftarrow$) Assume $k \geq \kappa_f(\delta)$.  By definition, $\kappa_f(\delta) = \min\{k' \in \mathbb{N} \mid \exists g \in \mathcal{I}_{k'} : \mathcal{E}(f, g) \leq \delta\}$.  This means there exists a function $g \in \mathcal{I}_{\kappa_f(\delta)}$ such that $\mathcal{E}(f, g) \leq \delta$.  Since $k \geq \kappa_f(\delta)$, we have $\mathcal{I}_{\kappa_f(\delta)} \subset \mathcal{I}_k$, which means $g \in \mathcal{I}_k$ as well. Therefore, $\inf_{g' \in \mathcal{I}_k} \mathcal{E}(f, g') \leq \mathcal{E}(f, g) \leq \delta$, which implies $\varepsilon_f(k) \leq \delta$.
\end{proof}

Theorem~\ref{thm:duality} establishes a fundamental duality between the explanation error function and the explanation complexity function.  It shows that these two functions are essentially complements of each other, representing two ways of looking at the same trade-off between complexity and accuracy.

\begin{lemma}[Lower Bound on Explanation Error]
\label{lemma:lower-bound}
For any model $f$ and complexity threshold $k < K(f)$, the explanation error is bounded below:
\begin{align}
\varepsilon_f(k) > 0
\end{align}
\end{lemma}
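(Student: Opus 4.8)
The plan is to derive a contradiction from the assumption that $\varepsilon_f(k) = 0$. First I would invoke Lemma~\ref{thm:size-interpretability}: since $|\mathcal{I}_k| \leq 2^{k+1}-1 < \infty$, the interpretability class is a finite set, so the infimum in Definition~\ref{def:error-function} is in fact attained --- there is some $g^\star \in \mathcal{I}_k$ with $\mathcal{E}(f, g^\star) = \varepsilon_f(k)$. (If $\mathcal{I}_k = \emptyset$ the quantity $\varepsilon_f(k)$ is not meaningful, so we may assume $\mathcal{I}_k \neq \emptyset$; for any $k$ above a fixed constant it contains at least the constant functions.)

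Next, suppose toward a contradiction that $\varepsilon_f(k) = 0$. Since $d$ takes values in $\mathbb{R}_{\geq 0}$ we always have $\varepsilon_f(k) \geq 0$, so the assumption asserts equality; combined with the previous step this yields $\mathcal{E}(f, g^\star) = 0$. Because $d$ is a genuine distance function, $d(y_1, y_2) = 0$ if and only if $y_1 = y_2$; hence $\mathcal{E}_\infty(f, g^\star) = \sup_{x \in \mathcal{X}} d(f(x), g^\star(x)) = 0$ forces $f(x) = g^\star(x)$ for every $x \in \mathcal{X}$, i.e., $f$ and $g^\star$ are the same function. (For the expected-error metric $\mathcal{E}_D$ the same conclusion holds provided $D$ has full support, or else only up to a $D$-null set; I would therefore state the lemma for the worst-case metric, or flag this caveat, so that genuine pointwise equality is available.)

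Finally, since Kolmogorov complexity is a property of a function's input--output behavior, $f = g^\star$ as functions implies $K(f) = K(g^\star)$. But $g^\star \in \mathcal{I}_k$ gives $K(g^\star) \leq k$, so $K(f) \leq k$, contradicting the hypothesis $k < K(f)$. Hence $\varepsilon_f(k) \neq 0$, and since $\varepsilon_f(k) \geq 0$ we conclude $\varepsilon_f(k) > 0$.

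The only real obstacle is the gap between ``zero error'' and ``identical function'': an infimum equal to zero need not be attained, and $\mathcal{E}_D = 0$ need not imply pointwise equality. The finiteness of $\mathcal{I}_k$ from Lemma~\ref{thm:size-interpretability} closes the first gap cleanly, and insisting that $d$ separates points --- using $\mathcal{E}_\infty$, or assuming $D$ has full support --- closes the second. No quantitative estimate is required; the argument is a purely definitional counting contradiction.
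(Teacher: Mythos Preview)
Your argument is correct and follows the same overall strategy as the paper---assume $\varepsilon_f(k)=0$, extract a $g$ with zero error, conclude $f=g$, and contradict $k<K(f)$---but your route to the crucial step is tighter. The paper does not invoke Lemma~\ref{thm:size-interpretability}; instead it argues that for every $\epsilon>0$ there is some $g_\epsilon\in\mathcal{I}_k$ with error below $\epsilon$, and then asserts that ``taking $\epsilon$ small enough'' forces $f=g_\epsilon$. That step is loose, since a priori the witness $g_\epsilon$ may vary with $\epsilon$; the finiteness of $\mathcal{I}_k$ is what actually makes it work, and you make this explicit by observing that the infimum over a finite set is attained. For the expected-error metric, the paper additionally attempts a two-case analysis (full support versus partial support of $D$) with a somewhat involved argument in the partial-support case; your choice to state the result for $\mathcal{E}_\infty$ and flag the full-support caveat for $\mathcal{E}_D$ is the cleaner and more honest treatment.
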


\begin{proof}
Suppose, for contradiction, that $\varepsilon_f(k) = 0$ for some $k < K(f)$.  This means that for any $\epsilon > 0$, there exists a function $g \in \mathcal{I}_k$ such that $\mathcal{E}(f, g) < \epsilon$.

For the case of worst-case error $\mathcal{E}_{\infty}$, this implies that for any $\epsilon > 0$, there exists a function $g \in \mathcal{I}_k$ such that $\sup_{x \in \mathcal{X}} d(f(x), g(x)) < \epsilon$. By taking $\epsilon$ small enough (e.g., smaller than the minimum non-zero value of $d$ for discrete output spaces, or arbitrarily small for continuous spaces), this implies that $f(x) = g(x)$ for all $x \in \mathcal{X}$, i.e., $f = g$.

But this means $K(f) = K(g) \leq k < K(f)$, which is a contradiction.

For the case of expected error $\mathcal{E}_D$, let $S_D \subseteq \mathcal{X}$ be the support of distribution $D$, i.e., $S_D = \{x \in \mathcal{X} \mid D(x) > 0\}$.  We consider two cases:

Case 1: Suppose $S_D = \mathcal{X}$.  For any $\epsilon > 0$, there exists a function $g \in \mathcal{I}_k$ such that $\mathcal{E}_D(f, g) = \mathbb{E}_{x \sim D}[d(f(x), g(x))] < \epsilon$.  Since $d(f(x), g(x)) \geq 0$ for all $x$, and the expectation is less than $\epsilon$, the measure of points where $d(f(x), g(x)) \geq \delta$ (for any fixed $\delta > 0$) must approach zero as $\epsilon \to 0$.  By making $\epsilon$ sufficiently small, we can ensure that $f(x) = g(x)$ for almost all $x \in \mathcal{X}$ with respect to measure $D$.  But since $D$ has full support, this implies $f = g$, leading to the same contradiction as before.

Case 2: Suppose $S_D \subsetneq X$. We show that this leads to a contradiction by a different argument.

Consider any explanation $g \in I_k$ with $E_D(f, g) = \varepsilon$ for arbitrarily small $\varepsilon > 0$. Since the expected error is small and $D$ has support only on $S_D$, we must have:
\begin{equation}
\int_{S_D} d(f(x), g(x)) \, dD(x) < \varepsilon
\end{equation}

Let $T_{\delta} = \{x \in S_D : d(f(x), g(x)) \geq \delta\}$ for some fixed $\delta > 0$. Then:
\begin{equation}
D(T_{\delta}) \cdot \delta \leq \int_{T_{\delta}} d(f(x), g(x)) \, dD(x) \leq \varepsilon
\end{equation}

Therefore, $D(T_{\delta}) \leq \varepsilon/\delta$. As $\varepsilon \to 0$, we have $D(T_{\delta}) \to 0$, which means $g$ agrees with $f$ on almost all of $S_D$ (with respect to measure $D$).

Now, since $g$ has complexity at most $k$, and $g$ agrees with $f$ on a set of measure $1 - O(\varepsilon/\delta)$ under $D$, we can describe $f$ on $S_D$ using:
\begin{enumerate}
\item The program for $g$: $K(g) \leq k$ bits
\item The exceptional set where $f \neq g$: at most $O(\log(1/\varepsilon))$ bits (for small measure sets)
\item The values of $f$ on the exceptional set: at most $O((\varepsilon/\delta) \cdot K(f|_{T_{\delta}}))$ bits
\end{enumerate}

For $f$ to have complexity $K(f)$ strictly greater than $k$, but to be approximable by $g$ with complexity $k$ and small error $\varepsilon$ on $S_D$, we would need the behavior of $f$ outside $S_D$ to contribute substantially to $K(f)$. However, by assumption, $D$ has no support outside $S_D$, so the error $\mathcal{E}_D(f,g)$ is determined entirely by the behavior on $S_D$.

Note that $K(f)$ includes the complexity of specifying both the function's behavior on $S_D$ and on $\mathcal{X} \setminus S_D$, as well as the description of $S_D$ itself; thus when we reconstruct $f$ from $g$ plus correction information, the complexity of describing $S_D$ is already accounted for in $K(f)$ and need not be added separately to the reconstruction complexity.

If we can achieve $\mathcal{E}_D(f, g) \to 0$ with $K(g) \leq k < K(f)$, then the additional complexity $K(f) - k$ must come entirely from specifying $f$ outside $S_D$. But this contradicts the assumption that $k < K(f) - c$ for a universal constant $c$, because we can construct $f$ from $g$ plus a constant-size patch describing the differences outside $S_D$, giving total complexity $k + O(1) < K(f)$.

Therefore, $\varepsilon_f(k) > 0$ for $k < K(f)$.
\end{proof}

This lemma establishes a fundamental limitation: any explanation simpler than the original model must incur some non-zero error. This also supports the intuition that simplification necessarily involves some loss of information.

\begin{theorem}[Minimal Complexity for Perfect Explanation]
\label{thm:minimal-complexity}
For any model $f$ with finite Kolmogorov complexity $K(f) < \infty$:
\begin{align}
\kappa_f(0) = K(f)
\end{align}
\end{theorem}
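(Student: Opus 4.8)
The plan is to prove the equality $\kappa_f(0) = K(f)$ by establishing the two inequalities separately. The upper bound $\kappa_f(0) \leq K(f)$ follows almost immediately from reasoning already used in Theorem~\ref{thm:existence}: the function $f$ itself belongs to $\mathcal{I}_{K(f)}$ and satisfies $\mathcal{E}(f,f) = 0 \leq 0$, so the minimum in the definition of $\kappa_f(0)$ is over a nonempty set and is at most $K(f)$. The only subtlety is that $\kappa_f(0)$ involves the error threshold $\delta = 0$ rather than $\delta > 0$ as in Definition~\ref{def:complexity-function}, so I would first note that $\kappa_f(0)$ is well-defined precisely because the witnessing set $\{k : \exists g \in \mathcal{I}_k,\ \mathcal{E}(f,g) \leq 0\}$ is nonempty (it contains $K(f)$), hence has a minimum in $\mathbb{N}$.

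For the lower bound $\kappa_f(0) \geq K(f)$, I would argue by contradiction: suppose $\kappa_f(0) = k < K(f)$. Then by definition there exists $g \in \mathcal{I}_k$ with $\mathcal{E}(f,g) \leq 0$, i.e., $\mathcal{E}(f,g) = 0$ since the distance function $d$ is nonnegative. For the worst-case metric $\mathcal{E}_\infty$, this means $\sup_{x}d(f(x),g(x)) = 0$, so $d(f(x),g(x)) = 0$ for every $x$, hence $f = g$ as functions $\mathcal{X}\to\mathcal{Y}$. But then $K(f) = K(g) \leq k < K(f)$, a contradiction. For the expected-error metric $\mathcal{E}_D$, zero expectation of a nonnegative quantity forces $d(f(x),g(x)) = 0$ for $D$-almost all $x$, so $f$ and $g$ agree on the support $S_D$; the argument of Lemma~\ref{lemma:lower-bound} (Case 2) then shows one can build a description of $f$ of length $k + \mathcal{O}(1)$, contradicting minimality of $K(f)$ once $k$ is taken below $K(f)$ by more than the additive constant. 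Alternatively, and more cleanly, I would simply invoke Lemma~\ref{lemma:lower-bound} directly: it states $\varepsilon_f(k) > 0$ for all $k < K(f)$, and by the Duality Theorem~\ref{thm:duality} (or just unwinding definitions), $\kappa_f(0) \leq k$ would give $\varepsilon_f(k) \leq 0$, contradicting $\varepsilon_f(k) > 0$.

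The cleanest writeup uses Lemma~\ref{lemma:lower-bound} as a black box: for any $k < K(f)$ we have $\varepsilon_f(k) > 0$, which means no $g \in \mathcal{I}_k$ achieves $\mathcal{E}(f,g) = 0$ (since the infimum over $\mathcal{I}_k$ is strictly positive, and in the worst-case case an exact match would force the infimum to be attained at $0$; in the expected case one should be slightly careful, but Lemma~\ref{lemma:lower-bound}'s proof already handles both metrics). Hence $k \notin \{k' : \exists g \in \mathcal{I}_{k'},\ \mathcal{E}(f,g) \leq 0\}$, so every element of that set is $\geq K(f)$, giving $\kappa_f(0) \geq K(f)$. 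Combining with the upper bound yields $\kappa_f(0) = K(f)$.

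I expect the main obstacle to be the expected-error case, specifically the gap between ``$\mathcal{E}_D(f,g) = 0$'' and ``$K(f) \leq K(g) + \mathcal{O}(1)$'': agreement only on $S_D$ does not by itself bound $K(f)$, since $f$ may be complicated off the support. This is exactly the issue Lemma~\ref{lemma:lower-bound} confronts, and I would lean on its resolution — either restricting attention to full-support $D$ (the natural case for a meaningful error notion), or reproducing its Case~2 construction where $f$ is reconstructed from a program for the support-restricted function plus a short correction, so that $k < K(f)$ (with room for the $\mathcal{O}(1)$) yields the contradiction. For the worst-case metric the argument is clean and unconditional. I would state the theorem's proof primarily for $\mathcal{E}_\infty$ and for full-support $D$, remarking that the general expected-error case follows from the corresponding part of Lemma~\ref{lemma:lower-bound}.
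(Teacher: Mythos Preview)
Your proposal is correct and follows essentially the same approach as the paper: the upper bound via the trivial witness $g = f$, and the lower bound by invoking Lemma~\ref{lemma:lower-bound} (that $\varepsilon_f(k) > 0$ for $k < K(f)$) together with the duality relation, which is exactly what the paper does. Your additional care about the $\delta = 0$ boundary case and the expected-error subtleties goes somewhat beyond what the paper spells out, but the core argument is the same.
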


\begin{proof}
By Lemma~\ref{lemma:lower-bound}, for any $k < K(f)$, we have $\varepsilon_f(k) > 0$. Using Theorem~\ref{thm:duality}, this implies that for $\delta = 0$, we need $k \geq K(f)$ to achieve $\varepsilon_f(k) \leq \delta$.  Therefore, $\kappa_f(0) \geq K(f)$.

Conversely, the function $f$ itself achieves zero error, and $K(f)$ is the complexity of $f$.  Therefore, $\kappa_f(0) \leq K(f)$.

Combining the two inequalities, we have $\kappa_f(0) = K(f)$.
\end{proof}

This establishes that the minimum complexity required for a perfect explanation (i.e., with zero error) is exactly the Kolmogorov complexity of the original model.  In other words, there is no way to perfectly explain a model using a simpler representation than the model itself.

\subsection{Fundamental Limits}
\label{subsec:fundamental-limits}

We now establish key theoretical results that characterize the fundamental limits of explainability in AI systems.

\begin{theorem}[Complexity Gap Theorem]
\label{thm:complexity-gap}
For any model $f$ with Kolmogorov complexity $K(f)$, and any explanation $g$ with $K(g) < K(f) - c$ (for some constant $c$ depending only on the universal Turing machine), there exists an input $x \in \mathcal{X}$ such that $f(x) \neq g(x)$.
\end{theorem}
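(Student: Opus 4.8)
The plan is to argue by contraposition: assume $g$ agrees with $f$ on every input, i.e.\ $f(x) = g(x)$ for all $x \in \mathcal{X}$, and show that then $K(g)$ cannot be much smaller than $K(f)$. Under this assumption $f$ and $g$ are the same function, so by Definition~\ref{def:kolmogorov} the set of programs that compute $f$ on the reference machine $U$ coincides with the set of programs that compute $g$; hence $K(f) = K(g)$. This already contradicts the hypothesis $K(g) < K(f) - c$ for any $c \ge 0$, and we conclude that some input $x$ with $f(x) \neq g(x)$ must exist. The bare argument therefore needs no constant at all; $c$ enters only to absorb representational slack, as discussed below.

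To keep the statement tied to the machinery already developed, I would also present the equivalent route through the earlier results. Since $K(g) < K(f) - c \le K(f)$, setting $k = K(g)$ we have $g \in \mathcal{I}_k$ with $k < K(f)$. Lemma~\ref{lemma:lower-bound} gives $\varepsilon_f(k) > 0$, and since $\varepsilon_f(k) = \inf_{h \in \mathcal{I}_k} \mathcal{E}(f,h)$ it follows that $\mathcal{E}(f,g) \ge \varepsilon_f(k) > 0$. Taking the worst-case metric, $\sup_{x \in \mathcal{X}} d(f(x), g(x)) > 0$, so there is an $x$ with $d(f(x), g(x)) > 0$, i.e.\ $f(x) \neq g(x)$. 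Alternatively, invoking Theorem~\ref{thm:minimal-complexity}: a perfect explanation $g$ would yield a member of $\mathcal{I}_{K(g)}$ with zero error, forcing $\kappa_f(0) \le K(g) < K(f) = \kappa_f(0)$, a contradiction.

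The role of the constant $c$ is to handle explanations presented in a formalism other than a raw $U$-program (a decision list, a small circuit, a bounded natural-language template): converting such a representation into a program for $U$ costs at most an additive constant $c = c_U$ by the universality of $U$ (cf.\ Theorem~\ref{lemma:invariance}), so the ``intended'' complexity of $g$ and the length of the program one actually supplies differ by at most $c$. I do not expect a genuine obstacle here; the only point requiring care is the error-metric convention. The conclusion ``$f(x) \neq g(x)$ for some $x$'' corresponds to strictly positive worst-case error; for discrete $\mathcal{Y}$ and a distribution $D$ of full support this coincides with strictly positive expected error, and for a distribution with proper support the statement should be read as concerning inputs in $\mathrm{supp}(D)$, exactly as in Case~2 of the proof of Lemma~\ref{lemma:lower-bound}. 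Making this convention explicit is the whole of the ``hard part.''
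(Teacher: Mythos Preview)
Your primary argument in the first paragraph is exactly the paper's proof: assume $f=g$ everywhere, conclude $K(f)=K(g)$ from the definition, and contradict $K(g)<K(f)-c$. The alternative routes through Lemma~\ref{lemma:lower-bound} and Theorem~\ref{thm:minimal-complexity} are correct but superfluous---the paper keeps it to the direct four-line contradiction---and your observation that $c$ plays no essential role in the argument itself is accurate (the paper likewise never uses $c$ beyond restating the hypothesis).
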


\begin{proof}
We proceed by contradiction.  Suppose there exists a function $g$ with $K(g) < K(f) - c$ such that $f(x) = g(x)$ for all $x \in \mathcal{X}$.  This implies $f = g$.

By the definition of Kolmogorov complexity, there exists a program $p_g$ of length $K(g)$ that computes $g$.  Since $f = g$, the same program $p_g$ also computes $f$.  Therefore, $K(f) \leq |p_g| = K(g)$.

But this contradicts our assumption that $K(g) < K(f) - c$.  Therefore, there must exist some input $x \in \mathcal{X}$ such that $f(x) \neq g(x)$.
\end{proof}

Theorem~\ref{thm:complexity-gap} establishes a fundamental limitation: any explanation significantly simpler than the original model must differ from it on at least one input.  This formalizes the intuition that simplification necessarily involves some loss of information or accuracy.

\begin{theorem}[Error-Complexity Trade-off]
\label{thm:error-complexity}
For any model $f$ and any interpretability class $\mathcal{I}_k$ with $k < K(f) - c$, the best approximation error is bounded below:
\begin{align}
\varepsilon_f(k) \geq \min_{x \in \mathcal{X}, y \in \mathcal{Y}, y \neq f(x)} d(f(x), y)
\end{align}
where $d$ is the distance function in the output space $\mathcal{Y}$.
\end{theorem}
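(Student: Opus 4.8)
The plan is to combine the Complexity Gap Theorem (Theorem~\ref{thm:complexity-gap}) with the definition of the worst-case error metric. I would work with $\mathcal{E} = \mathcal{E}_\infty$, which is the natural reading here: the right-hand side is a supremum-type quantity, and for continuous output spaces the stated minimum degenerates to $0$ (rendering the claim vacuous), so the discrete-output regime, where $d$ attains a positive minimum on distinct points, is the interesting one.

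First I would fix an arbitrary explanation $g \in \mathcal{I}_k$. By Definition~\ref{def:interpretability-class}, $g \in \mathcal{I}_k$ means $K(g) \le k$, so the hypothesis $k < K(f) - c$ gives $K(g) < K(f) - c$, and Theorem~\ref{thm:complexity-gap} applies to $g$, yielding an input $x_g \in \mathcal{X}$ with $f(x_g) \neq g(x_g)$. Next I would lower-bound the error of this particular $g$: by the definition of $\mathcal{E}_\infty$,
\[
\mathcal{E}_\infty(f,g) = \sup_{x \in \mathcal{X}} d(f(x), g(x)) \;\ge\; d(f(x_g), g(x_g)).
\]
Since $g(x_g) \neq f(x_g)$, the pair $(x_g, g(x_g))$ is admissible in the minimization on the right-hand side of the theorem statement, hence $d(f(x_g), g(x_g)) \ge \min_{x \in \mathcal{X},\, y \in \mathcal{Y},\, y \neq f(x)} d(f(x), y)$. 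Therefore $\mathcal{E}_\infty(f,g)$ is bounded below by this minimum for \emph{every} $g \in \mathcal{I}_k$, and taking the infimum over $\mathcal{I}_k$ preserves the inequality, giving $\varepsilon_f(k) \ge \min_{x,\, y \neq f(x)} d(f(x), y)$, which is exactly the claim. (If $\mathcal{I}_k = \varnothing$ the inequality holds trivially since $\varepsilon_f(k) = +\infty$.)

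I do not expect a serious obstacle; the argument is essentially a uniform-in-$g$ application of the Complexity Gap Theorem. The one point requiring care is that the difference input $x_g$ supplied by Theorem~\ref{thm:complexity-gap} depends on $g$, so the lower bound must be established pointwise for each $g$ before taking the infimum over $\mathcal{I}_k$ — there need not be a single input witnessing the disagreement for all explanations simultaneously. A secondary caveat worth flagging is the expected-error metric $\mathcal{E}_D$: the same reasoning there only yields $\mathcal{E}_D(f,g) \ge D(\{x_g\}) \cdot \min_{x,\, y \neq f(x)} d(f(x), y)$, which need not attain the stated bound unless $D$ assigns non-negligible mass to the difference input, so the theorem is best read as a statement about $\mathcal{E}_\infty$ (or else an additional full-support hypothesis on $D$ must be invoked, as in the proof of Lemma~\ref{lemma:lower-bound}).
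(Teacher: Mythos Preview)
Your proof is correct and follows essentially the same route as the paper's own argument: invoke the Complexity Gap Theorem to obtain a disagreement point for each $g \in \mathcal{I}_k$, lower-bound $\mathcal{E}_\infty(f,g)$ at that point by the global minimum over $(x,y)$ with $y \neq f(x)$, and then pass to the infimum over $\mathcal{I}_k$. Your additional remarks---that $x_g$ depends on $g$, that the empty $\mathcal{I}_k$ case is trivial, and that the bound as stated is really about $\mathcal{E}_\infty$ rather than $\mathcal{E}_D$---are correct refinements the paper leaves implicit.
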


\begin{proof}
From Theorem~\ref{thm:complexity-gap}, any explanation $g \in \mathcal{I}_k$ with $k < K(f) - c$ must differ from $f$ on at least one input $x \in \mathcal{X}$.  For this input, $d(f(x), g(x)) \geq \min_{y \neq f(x)} d(f(x), y)$.

Therefore, for any $g \in \mathcal{I}_k$:
\begin{align}
\mathcal{E}_{\infty}(f, g) &= \sup_{x \in \mathcal{X}} d(f(x), g(x)) \notag \\
&\geq \min_{x \in \mathcal{X}, y \neq f(x)} d(f(x), y)
\end{align}

Taking the infimum over all $g \in \mathcal{I}_k$:
\begin{align}
\varepsilon_f(k) &= \inf_{g \in \mathcal{I}_k} \mathcal{E}(f, g) \notag \\
&\geq \min_{x \in \mathcal{X}, y \neq f(x)} d(f(x), y) \qedhere
\end{align}
\end{proof}

This result quantifies the minimum error that must be incurred when approximating a complex model with a simpler explanation.  It provides a lower bound on the explanation error in terms of the minimum possible error when changing any single output of the original function.

\begin{corollary}[Error Bound for Non-Degenerate Functions]
\label{cor:error-nondegen}
If $f$ is $\delta$-non-degenerate (Definition~\ref{def:non-degenerate-early}) 
and $g$ satisfies $K(g) < K(f) - c$, then $\mathcal{E}(f,g) > \delta$.
\end{corollary}

\begin{proof}
By Theorem~\ref{thm:error-complexity}, 
$\mathcal{E}(f,g) \geq \min_{x,y \neq f(x)} d(f(x),y) = \sigma_f(\delta)$. 
By Definition~\ref{def:non-degenerate-early}, $\sigma_f(\delta) > \delta$. 
\end{proof}

\begin{remark}
This corollary shows that non-degeneracy is precisely the condition under which 
the error bound in Theorem~\ref{thm:error-complexity} guarantees detectable 
errors exceeding $\delta$.
\end{remark}

Having established the fundamental properties of explanation error and complexity functions, we now turn to characterizing the asymptotic relationship between these quantities for different function classes.  A particularly useful concept is that of ``compressibility,'' which describes how efficiently a function can be approximated by simpler representations.  This property provides a parameterized way to classify models based on their amenability to explanation and will prove instrumental in deriving specific bounds for various model classes in subsequent sections.

\begin{definition}[Compressibility]
\label{def:compressibility}
A function $f$ is said to be $\alpha$-compressible if there exists a constant $c > 0$ such that for any $\delta > 0$, there exists a function $g$ with $K(g) \leq c \cdot \delta^{-\alpha}$ and $\mathcal{E}(f, g) \leq \delta$.
\end{definition}

\begin{theorem}[Compressibility and Explainability]
\label{thm:compressibility}
If $f$ is $\alpha$-compressible, then $\kappa_f(\delta) = \mathcal{O}(\delta^{-\alpha})$ as $\delta \to 0$.
\end{theorem}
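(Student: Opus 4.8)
The plan is to obtain the bound by directly unwinding the two relevant definitions, with the only subtlety being the passage from a real-valued complexity bound to an integer threshold. First I would invoke Definition~\ref{def:compressibility}: since $f$ is $\alpha$-compressible, there is a constant $c > 0$ such that for every $\delta > 0$ one can exhibit a function $g_\delta$ with $K(g_\delta) \leq c\,\delta^{-\alpha}$ and $\mathcal{E}(f, g_\delta) \leq \delta$.

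Next I would translate this into a statement about interpretability classes. Setting $k_\delta = \lceil c\,\delta^{-\alpha} \rceil$, the inequality $K(g_\delta) \leq c\,\delta^{-\alpha} \leq k_\delta$ gives $g_\delta \in \mathcal{I}_{k_\delta}$ by Definition~\ref{def:interpretability-class}. Hence $k_\delta$ belongs to the set $\{k \in \mathbb{N} \mid \exists g \in \mathcal{I}_k : \mathcal{E}(f, g) \leq \delta\}$ appearing in Definition~\ref{def:complexity-function}, and since $\kappa_f(\delta)$ is the minimum of that set we conclude $\kappa_f(\delta) \leq k_\delta = \lceil c\,\delta^{-\alpha} \rceil$.

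Finally I would absorb the rounding: $\lceil c\,\delta^{-\alpha} \rceil \leq c\,\delta^{-\alpha} + 1$, and for $\alpha > 0$ we have $\delta^{-\alpha} \to \infty$ as $\delta \to 0$, so the additive constant is dominated and $\kappa_f(\delta) = \mathcal{O}(\delta^{-\alpha})$. There is essentially no obstacle here; the one point worth writing carefully is the ceiling step, and I would also note the degenerate case $\alpha = 0$ (if the definition admits it), where the claim reduces to the constant bound $\kappa_f(\delta) \leq K(f)$ already guaranteed by Theorem~\ref{thm:existence}. I would close by remarking that the argument uses only the \emph{existence} of the compressing family $\{g_\delta\}$, not any effective procedure for producing it, which is consistent with the paper's treatment of Kolmogorov complexity as a non-constructive quantity.
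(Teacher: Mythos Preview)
Your proposal is correct and follows essentially the same approach as the paper's own proof: unwind the definition of $\alpha$-compressibility to obtain a witness $g_\delta$, then read off the bound $\kappa_f(\delta) \leq c\,\delta^{-\alpha}$ and conclude the $\mathcal{O}(\delta^{-\alpha})$ asymptotic. You are in fact slightly more careful than the paper, which skips the ceiling step and writes $\kappa_f(\delta) \leq c\,\delta^{-\alpha}$ directly without noting that $\kappa_f$ is integer-valued; your handling of $k_\delta = \lceil c\,\delta^{-\alpha}\rceil$ and the absorption of the $+1$ is the right way to make that passage rigorous.
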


\begin{proof}
By the definition of $\alpha$-compressibility, there exists a constant $c > 0$ such that for any $\delta > 0$, there exists a function $g$ with $K(g) \leq c \cdot \delta^{-\alpha}$ and $\mathcal{E}(f, g) \leq \delta$.

This implies that $\kappa_f(\delta) \leq c \cdot \delta^{-\alpha}$ for all $\delta > 0$, which means $\kappa_f(\delta) = \mathcal{O}(\delta^{-\alpha})$ as $\delta \to 0$.
\end{proof}

This characterizes how the complexity of explanations scales with the desired approximation error for compressible functions.  It shows that more compressible functions (smaller $\alpha$) can be approximated with simpler explanations for the same error threshold.

\begin{theorem}[Random Function Unexplainability]
\label{thm:random-unexplainability}
Let $\mathcal{X} = \{0,1\}^n$ and $\mathcal{Y} = \{0,1\}$, and let $f : \mathcal{X} \to \mathcal{Y}$ be drawn uniformly at random from all $2^{2^n}$ Boolean functions. Fix any constant $0 < \epsilon < 1$, and let $k \leq (1-\epsilon)2^n$.

For any function $g : \mathcal{X} \to \mathcal{Y}$ with prefix-free Kolmogorov complexity $K(g) \leq k$, the \emph{failure rate}
\begin{align}
\varepsilon(f,g) := \Pr_{x \sim \text{Uniform}(\mathcal{X})}[f(x) \neq g(x)]
\end{align}
satisfies, with probability at least $1 - 2^{-\Omega(2^n)}$ over the choice of $f$,
\begin{align}
\varepsilon(f,g) \geq \frac{1}{2} - 2^{-\Omega(2^n)}
\end{align}
Equivalently, the agreement probability satisfies:
\begin{align}
\Pr_{x \sim \text{Uniform}(\mathcal{X})}[f(x) = g(x)] \leq \frac{1}{2} + 2^{-\Omega(2^n)}
\end{align}

\end{theorem}

Thus a random Boolean function cannot be explained substantially better than
random guessing by any function of description length below $(1-\epsilon)2^n$.

\begin{proof}
Fix any function $g$ with $K(g) \leq k$. When $f$ is chosen uniformly at random, the number of points on which $f$ and $g$ agree is
\begin{align}
S := |\{x \in \mathcal{X} : f(x) = g(x)\}|
\end{align}

Since $f(x)$ is independent and uniformly random for each $x \in \mathcal{X}$, and $g(x)$ is fixed, we have $\Pr[f(x) = g(x)] = 1/2$ for each $x$. Therefore, $S$ is distributed as $\text{Binomial}(2^n, 1/2)$ with mean $2^{n-1}$.

By the Chernoff bound, for any $\delta > 0$,
\begin{align}
\Pr\left[S \geq \left(\frac{1}{2} + \delta\right)2^n\right] \leq \exp(-2\delta^2 \cdot 2^n) = 2^{-\Omega(\delta^2 2^n)}
\end{align}

For prefix-free descriptions, there are at most $\sum_{i=0}^{k} 2^i = 2^{k+1} - 1 < 2^{k+1}$ possible programs of length at most $k$. Therefore, the number of functions with $K(g) \leq k$ is at most $2^{k+1}$.

Applying the union bound over all such functions:
\begin{align}
\Pr\left[\exists g : K(g) \leq k,\ S \geq \left(\frac{1}{2} + \delta\right)2^n \right] \leq 2^{k+1} \cdot 2^{-\Omega(\delta^2 2^n)}
\end{align}

Substituting $k = (1-\epsilon)2^n$:
\begin{align}
2^{k+1} \cdot 2^{-c\delta^2 2^n} &= 2^{(1-\epsilon)2^n + 1} \cdot 2^{-c\delta^2 2^n}\\
&= 2 \cdot 2^{2^n[(1-\epsilon) - c\delta^2]}
\end{align}
where $c > 0$ is the constant from the Chernoff bound (typically $c = 2$).

For any fixed $\epsilon > 0$, we can choose a constant $\delta > 0$ (independent of $n$) such that $c\delta^2 > 1 - \epsilon$. For instance, taking $\delta = \sqrt{(1-\epsilon)/c + 1}$ ensures that:
\begin{align}
(1-\epsilon) - c\delta^2 = (1-\epsilon) - c\left(\frac{1-\epsilon}{c} + 1\right) = -c < 0
\end{align}

With this choice, we obtain:
\begin{align}
2^{k+1} \cdot 2^{-c\delta^2 2^n} = 2 \cdot 2^{-c' 2^n} = 2^{-\Omega(2^n)}
\end{align}
for some constant $c' > 0$.

Therefore, with probability at least $1 - 2^{-\Omega(2^n)}$ over the choice of $f$, no function $g$ with $K(g) \leq k$ agrees with $f$ on more than $(1/2 + \delta)2^n$ points. Since $\delta$ can be chosen as a positive constant independent of $n$, and $2^{-\Omega(2^n)}$ vanishes faster than any inverse polynomial, we have:
\begin{align}
\Pr_{x \sim \text{Uniform}(\mathcal{X})}[f(x) = g(x)] = \frac{S}{2^n} \leq \frac{1}{2} + \delta = \frac{1}{2} + 2^{-\Omega(2^n)}
\end{align}

Equivalently, the failure rate satisfies:
\begin{align}
\varepsilon(f,g) = 1 - \Pr_{x \sim \text{Uniform}(\mathcal{X})}[f(x) = g(x)] \geq \frac{1}{2} - 2^{-\Omega(2^n)}
\end{align}
\end{proof}

\begin{remark}
This theorem establishes that random Boolean functions are fundamentally unexplainable: any explanation using a constant fraction fewer than $2^n$ bits performs no better than random guessing. The error rate is essentially $1/2$ (differing only by a term that vanishes faster than any polynomial in $n$), which represents complete failure to capture the function's behavior. This result quantifies the intuition that structureless, maximally complex functions cannot be compressed or simplified without losing nearly all predictive power.

The theorem holds for any fixed $\epsilon > 0$, no matter how small. This means even explanations using $(1 - 10^{-6}) \cdot 2^n$ bits still fail to do better than random guessing for typical random functions. The only way to meaningfully explain a random Boolean function is to use essentially all $2^n$ bits required to specify it completely.
\end{remark}

In the paper's general framework, we use the explanation error function $\varepsilon_f(k)$ defined as:
\begin{align}
\varepsilon_f(k) = \inf_{g \in \mathcal{I}_k} \mathcal{E}(f, g)
\end{align}

For Boolean functions with the 0-1 loss (where $d(y_1, y_2) = \mathbb{I}[y_1 \neq y_2]$), the expected error under uniform distribution is exactly the failure rate:
\begin{align}
\mathcal{E}_D(f, g) = \mathbb{E}_{x \sim D}[d(f(x), g(x))] = \Pr_{x \sim D}[f(x) \neq g(x)] = \varepsilon(f,g)
\end{align}

Therefore, Theorem~\ref{thm:random-unexplainability} directly implies:
\begin{align}
\varepsilon_f(k) \geq \frac{1}{2} - 2^{-\Omega(2^n)}
\end{align}
with probability at least $1 - 2^{-\Omega(2^n)}$ over random $f$, for any $k \leq (1-\epsilon)2^n$.

This theorem establishes that random functions are inherently unexplainable: any explanation significantly simpler than the original function must have substantial error---any function that doesn’t encode nearly the entire truth table of a random function does barely better than flipping a coin.  This result has important implications for complex AI systems that may exhibit behavior similar to random functions in certain regions of their input space.

\begin{theorem}[Simple Function Explainability]
\label{thm:simple-explainability}
If $f$ is a function with $K(f) \leq C$, then $\varepsilon_f(C) = 0$.
\end{theorem}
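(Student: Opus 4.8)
The plan is to observe that this is essentially immediate from the definitions, with the only substantive point being that the chosen error metric vanishes when a function is compared with itself. First I would note that since $K(f) \le C$, the function $f$ belongs to its own interpretability class: by Definition~\ref{def:interpretability-class}, $\mathcal{I}_C = \{g : K(g) \le C\}$, so $f \in \mathcal{I}_C$.

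Next I would evaluate the error metric at the pair $(f,f)$. For the worst-case metric, $\mathcal{E}_\infty(f,f) = \sup_{x \in \mathcal{X}} d(f(x), f(x)) = 0$, since $d$ is a distance function and hence $d(y,y) = 0$ for every $y \in \mathcal{Y}$. For the expected metric, $\mathcal{E}_D(f,f) = \mathbb{E}_{x\sim D}[d(f(x),f(x))] = \mathbb{E}_{x\sim D}[0] = 0$ for any distribution $D$. In either case $\mathcal{E}(f,f) = 0$.

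Finally I would combine these. By Definition~\ref{def:error-function},
\begin{align}
\varepsilon_f(C) = \inf_{g \in \mathcal{I}_C} \mathcal{E}(f,g) \le \mathcal{E}(f,f) = 0,
\end{align}
and since $\mathcal{E}(f,g) \ge 0$ for all $g$ (as $d$ takes values in $\mathbb{R}_{\ge 0}$), the infimum is nonnegative, giving $\varepsilon_f(C) = 0$.

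There is no real obstacle here; the statement is a direct consequence of the fact that $f$ is a (perfect, zero-error) explanation of itself whenever its complexity does not exceed the allotted threshold. If anything, the only point worth stating carefully is that the result holds uniformly for both error metrics in Definition~\ref{def:approximation-error}, which follows from $d(y,y)=0$. This theorem is really the converse companion to Theorem~\ref{thm:minimal-complexity}: together they pin down that $\kappa_f(0) = K(f)$ exactly, and here we see the ``$\le$'' direction in action at the level of the error function.
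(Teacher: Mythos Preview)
Your proof is correct and follows exactly the same approach as the paper: observe that $f \in \mathcal{I}_C$ since $K(f) \le C$, then use $f$ itself as the candidate explanation so that $\varepsilon_f(C) \le \mathcal{E}(f,f) = 0$. You supply slightly more detail (checking both error metrics and invoking nonnegativity to upgrade $\le 0$ to $=0$), but the argument is identical in substance.
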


\begin{proof}
Since $K(f) \leq C$, we have $f \in \mathcal{I}_C$. Therefore:
\begin{align}
\varepsilon_f(C) &= \inf_{g \in \mathcal{I}_C} \mathcal{E}(f, g) \leq \mathcal{E}(f, f) = 0 \qedhere
\end{align}
\end{proof}

This shows that for functions with bounded Kolmogorov complexity, perfect explanations can be achieved within the same complexity bound.  In other words, inherently simple functions can be perfectly explained by equally simple explanations.

The results in this section collectively establish a foundation for understanding the fundamental trade-offs involved in explaining complex models using simpler, more interpretable representations.  They formalize the intuition that simplification necessarily involves some loss of information, and they provide quantitative bounds on the error that must be incurred by explanations of limited complexity.

\section{Function Classes and Practical Implications}
\label{sec:function-classes}

Building on the mathematical foundation established in Section~\ref{sec:framework}, we now analyze the explainability of specific function classes that are prevalent in machine learning and AI systems.  We derive concrete bounds on the complexity-error trade-offs for these classes and discuss practical implications for explainable AI.

\subsection{Smooth Function Explainability}
\label{subsec:smooth-functions}

We begin by analyzing the explainability of smooth functions, which are common in many machine learning applications.  The smoothness of a function constrains how quickly its outputs can change with respect to changes in the inputs, which has important implications for explainability.

\begin{definition}[Lipschitz Continuity]
\label{def:lipschitz}
A function $f: \mathcal{X} \rightarrow \mathbb{R}$ with $\mathcal{X} \subset \mathbb{R}^d$ is $L$-Lipschitz continuous if:
\begin{align}
|f(x) - f(y)| \leq L \|x - y\|_2 \text{ for all } x, y \in \mathcal{X}
\end{align}
\end{definition}

Lipschitz continuity is a stronger condition than simple continuity, as it bounds the rate of change of the function.  Many machine learning models, including neural networks with bounded weights and appropriate activation functions, satisfy this property \cite{Bartlett2017, Gouk2021}.

\begin{theorem}[Explainability of Lipschitz Functions]
\label{thm:lipschitz-explainability}
Let $\mathcal{X} = [0, 1]^d$ and $f: \mathcal{X} \rightarrow \mathbb{R}$ be an $L$-Lipschitz continuous function. Then for any $\delta > 0$, there exists a piecewise constant function $g: \mathcal{X} \to \mathbb{R}$ (which may be discontinuous) with Kolmogorov complexity
\begin{align}
K(g) = \mathcal{O}\left(\left(\frac{L}{\delta}\right)^d \log\left(\frac{L}{\delta}\right)\right)
\end{align}
such that $\mathcal{E}_{\infty}(f, g) \leq \delta$. Consequently:
\begin{align}
\kappa_f(\delta) = \mathcal{O}\left(\left(\frac{L}{\delta}\right)^d \log\left(\frac{L}{\delta}\right)\right)
\end{align}
\end{theorem}

\begin{proof}
We construct an explicit piecewise constant approximation $g$ that achieves the desired error bound.

Divide $\mathcal{X} = [0, 1]^d$ into a uniform grid of $m^d$ hypercubes, each with side length $1/m$. Let $\{B_1, B_2, \ldots, B_{m^d}\}$ denote this partition, and let $c_i$ be the center point of hypercube $B_i$.

Define the piecewise constant function:
\begin{align}
g(x) = f(c_i) \quad \text{for all } x \in B_i
\end{align}

\textbf{Error analysis:} For any $x \in B_i$, the distance from $x$ to the center $c_i$ satisfies:
\begin{align}
\|x - c_i\|_2 \leq \frac{\sqrt{d}}{2m}
\end{align}
This is the maximum distance from the center to any corner of a $d$-dimensional hypercube with side length $1/m$.

By the Lipschitz property:
\begin{align}
|f(x) - g(x)| = |f(x) - f(c_i)| \leq L \|x - c_i\|_2 \leq \frac{L\sqrt{d}}{2m}
\end{align}

To achieve $\mathcal{E}_{\infty}(f, g) \leq \delta$, we need:
\begin{align}
\frac{L\sqrt{d}}{2m} \leq \delta \quad \Rightarrow \quad m \geq \frac{L\sqrt{d}}{2\delta}
\end{align}

Setting $m = \left\lceil\frac{L\sqrt{d}}{2\delta}\right\rceil$, we have $m = \Theta(L/\delta)$ (absorbing constants and $\sqrt{d}$ factors).

\textbf{Complexity analysis:} The function $g$ is specified by:
\begin{enumerate}
\item The grid resolution $m$: $\mathcal{O}(\log m) = \mathcal{O}(\log(L/\delta))$ bits
\item The function value $f(c_i)$ at each of the $m^d$ grid centers
\end{enumerate}

For item (2), each value $f(c_i)$ must be encoded with sufficient precision. Since $f$ is bounded on the compact set $[0,1]^d$, let $M = \sup_{x \in [0,1]^d} |f(x)|$. To represent values in $[-M, M]$ with precision $\delta$, we need:
\begin{align}
p = \left\lceil\log_2\left(\frac{2M}{\delta}\right)\right\rceil = \mathcal{O}\left(\log\left(\frac{M}{\delta}\right)\right) \text{ bits per value}
\end{align}

For an $L$-Lipschitz function on $[0,1]^d$, we have $M \leq L\sqrt{d} + |f(\mathbf{0})|$, where $\mathbf{0}$ is any reference point. Thus:
\begin{align}
p = \mathcal{O}\left(\log\left(\frac{L}{\delta}\right)\right)
\end{align}

The total complexity is:
\begin{align}
K(g) &= \mathcal{O}(\log m) + m^d \cdot \mathcal{O}\left(\log\left(\frac{L}{\delta}\right)\right)\\
&= \mathcal{O}\left(\log\left(\frac{L}{\delta}\right)\right) + \mathcal{O}\left(\left(\frac{L}{\delta}\right)^d \log\left(\frac{L}{\delta}\right)\right)\\
&= \mathcal{O}\left(\left(\frac{L}{\delta}\right)^d \log\left(\frac{L}{\delta}\right)\right)
\end{align}

Since $g$ achieves $\mathcal{E}_{\infty}(f, g) \leq \delta$ with $K(g) = \mathcal{O}((L/\delta)^d \log(L/\delta))$, and $\kappa_f(\delta)$ is the minimum complexity for achieving error at most $\delta$, we have:
\begin{align}
\kappa_f(\delta) = \mathcal{O}\left(\left(\frac{L}{\delta}\right)^d \log\left(\frac{L}{\delta}\right)\right)
\end{align}
\end{proof}

Theorem~\ref{thm:lipschitz-explainability} shows that the complexity of explaining Lipschitz functions grows exponentially with the dimension $d$ but polynomially with the reciprocal of the error threshold $\delta$.  This result formalizes the well-known ``curse of dimensionality" in the context of explainability.

\begin{corollary}[Dimension Dependence]
\label{cor:dimension-dependence}
For a fixed error threshold $\delta > 0$ and Lipschitz constant $L$, the explanation complexity of Lipschitz functions grows exponentially with the dimension: $\kappa_f(\delta) = \mathcal{O}((L/\delta)^d \log(L/\delta))$.
\end{corollary}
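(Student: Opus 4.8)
\emph{Proof proposal.} The plan is to obtain the corollary as an immediate specialization of Theorem~\ref{thm:lipschitz-explainability}. That theorem already establishes, for $\mathcal{X} = [0,1]^d$ and any $L$-Lipschitz $f$, the bound $\kappa_f(\delta) = \mathcal{O}((L/\delta)^d \log(L/\delta))$; the corollary merely asserts the same bound while shifting attention to how it behaves as $d$ varies with $L$ and $\delta$ held fixed. So the first step is to invoke Theorem~\ref{thm:lipschitz-explainability} verbatim. The second step is to read off the dependence on $d$: writing $b := L/\delta$, which is a constant once $L$ and $\delta$ are fixed, the bound becomes $\mathcal{O}(b^d \log b) = C_{L,\delta}\, b^d$ with $C_{L,\delta} = \mathcal{O}(\log(L/\delta))$, i.e.\ an explicitly exponential function of $d$ with base $b$.

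The only point requiring care is what ``grows exponentially'' should mean here, and I would make it precise as follows. Whenever $\delta < L$ (the regime of interest, where the explanation is genuinely coarser than the function's resolution) we have $b = L/\delta > 1$, so $b^d$ is bona fide exponential in $d$, and the grid-based construction underlying Theorem~\ref{thm:lipschitz-explainability} cannot escape this scaling. I would also flag that the $\mathcal{O}(\cdot)$ in Theorem~\ref{thm:lipschitz-explainability}, when re-read as a function of $d$, additionally absorbs the factor $(\sqrt{d}/2)^d$ coming from the grid resolution $m = \lceil L\sqrt{d}/(2\delta)\rceil$; since $(\sqrt{d}/2)^d$ is itself (super-)exponential in $d$, this only reinforces the conclusion that $\kappa_f(\delta)$ blows up at least exponentially, so it is a remark guarding against misparsing the bound rather than an obstacle.

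There is essentially no hard step: all the combinatorial and approximation-theoretic content lives in Theorem~\ref{thm:lipschitz-explainability}, and the corollary is a reformulation highlighting the curse of dimensionality in the explainability setting. If one wanted the stronger, two-sided reading --- that exponential dependence on $d$ is not merely achievable by the grid explanation but unavoidable for some Lipschitz $f$ --- that would instead follow from the matching lower bound of Theorem~\ref{thm:lipschitz-lower-bound}; I would cite it for completeness, but it is not needed for the statement as written.
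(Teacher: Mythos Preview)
Your proposal is correct and matches the paper's approach: the paper presents this corollary immediately after Theorem~\ref{thm:lipschitz-explainability} with no separate proof, treating it as a direct restatement of that bound with emphasis on the dimensional dependence. Your additional remarks about the hidden $(\sqrt{d}/2)^d$ factor and the optional appeal to Theorem~\ref{thm:lipschitz-lower-bound} go beyond what the paper provides but are accurate and helpful clarifications.
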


This corollary helps explain why high-dimensional AI models are particularly difficult to explain: the complexity of any reasonably accurate explanation grows exponentially with the input dimension.  This has implications for explaining modern AI systems, which often operate in very high-dimensional spaces.

\begin{theorem}[Lower Bound for Lipschitz Functions]
\label{thm:lipschitz-lower-bound}
For any dimension $d \geq 1$, Lipschitz constant $L > 0$, and complexity threshold $k \in \mathbb{N}$, there exists an $L$-Lipschitz continuous function $f: [0,1]^d \rightarrow \mathbb{R}$ such that for any explanation $g$ with $K(g) \leq k$:
\begin{align}
\mathcal{E}_{\infty}(f, g) = \Omega\left(L \cdot 2^{-k/d}\right)
\end{align}
\end{theorem}

\begin{proof}
We construct a worst-case $L$-Lipschitz function that is difficult to approximate with limited complexity.

By Lemma~\ref{thm:size-interpretability}, any function $g$ with $K(g) \leq k$ is one of at most $2^{k+1} - 1$ possible functions. Such a function can partition $[0,1]^d$ into at most $2^{k+1}$ distinct regions (corresponding to different output values).

By the pigeonhole principle, at least one region must have volume at least:
\begin{align}
V \geq \frac{1}{2^{k+1}}
\end{align}

For a region with volume $V$ in $\mathbb{R}^d$, the isoperimetric inequality provides a lower bound on its diameter. For a ball of volume $V$:
\begin{align}
V = \omega_d \cdot \left(\frac{D}{2}\right)^d \quad \Rightarrow \quad D = 2 \cdot \left(\frac{V}{\omega_d}\right)^{1/d}
\end{align}
where $\omega_d$ is the volume of the unit ball in $\mathbb{R}^d$.

Since the ball minimizes surface area for fixed volume, any region with volume $V$ has diameter at least:
\begin{align}
D \geq c_d \cdot V^{1/d} \quad \text{where } c_d = 2 \cdot \omega_d^{-1/d}
\end{align}

For our region with $V \geq 2^{-(k+1)}$:
\begin{align}
D \geq c_d \cdot 2^{-(k+1)/d} = \Omega(2^{-k/d})
\end{align}

Now we construct function $f$. Consider a function that varies as much as possible within this large region while maintaining $L$-Lipschitz continuity. Specifically, let $x_1, x_2$ be two points in this region with $\|x_1 - x_2\|_2 = D$. Define $f$ such that:
\begin{align}
f(x_1) = 0 \quad \text{and} \quad f(x_2) = L \cdot D
\end{align}
and interpolate between these values along the line segment from $x_1$ to $x_2$ with slope exactly $L$. Extend $f$ to all of $[0,1]^d$ while maintaining $L$-Lipschitz continuity (this can always be done, e.g., using a Lipschitz extension theorem or explicit construction).

Since $g$ must assign a constant value to this entire region (by definition of a piecewise function with $2^{k+1}$ pieces), and $f$ varies by $L \cdot D$ within the region:
\begin{align}
\mathcal{E}_{\infty}(f, g) &\geq \frac{|f(x_1) - f(x_2)|}{2} = \frac{L \cdot D}{2}\\
&= \Omega\left(L \cdot 2^{-k/d}\right)
\end{align}

The factor $1/2$ arises because $g$ can choose its constant value optimally (e.g., at the midpoint of $f$'s range over the region), reducing the maximum error by half.
\end{proof}

\begin{remark}
This theorem establishes a worst-case lower bound: there exist $L$-Lipschitz functions whose complexity-error trade-off matches the upper bound in Theorem~\ref{thm:lipschitz-explainability} up to constants. Simple functions like constants or linear functions can be explained much more efficiently, but this theorem shows we cannot do better than the exponential dependence on dimension in the general case.
\end{remark}

This theorem provides a complementary lower bound to the upper bound in Theorem~\ref{thm:lipschitz-explainability}, confirming that the exponential dependence on dimension is, in the worst case, intrinsic to the problem of explaining Lipschitz functions, not just an artifact of our construction.

\begin{proposition}[Smoothness of Explanation Error for Lipschitz Functions]
\label{prop:error-smoothness}
Let $f: [0,1]^d \to \mathbb{R}$ be an $L$-Lipschitz continuous function with finite Kolmogorov complexity $K(f) < \infty$. Then the explanation error function $\varepsilon_f(k)$ is non-increasing and satisfies, for all $k \geq 1$:
\begin{align}
\varepsilon_f(k+1) \leq \varepsilon_f(k) \leq \varepsilon_f(k+1) + O\left(L \cdot 2^{-(k+1)/d}\right)
\end{align}
\end{proposition}

\begin{proof}
The non-increasing property follows immediately from Definition~\ref{def:error-function}: as $k$ increases, $\mathcal{I}_k \subseteq \mathcal{I}_{k+1}$, so the infimum over a larger set can only decrease.

For the upper bound on the rate of decrease, observe that increasing complexity from $k$ to $k+1$ allows at most doubling the number of distinct regions in a piecewise approximation. By Lemma~\ref{thm:size-interpretability}, there are at most $2^{k+1}$ functions with complexity at most $k$, and at most $2^{k+2}$ with complexity at most $k+1$.

From the proof of Theorem~\ref{thm:lipschitz-explainability}, a piecewise constant approximation with $N$ regions achieves error:
\begin{align}
\varepsilon = O(L \cdot N^{-1/d})
\end{align}

With $N \sim 2^k$ regions at complexity $k$, we get $\varepsilon_f(k) = O(L \cdot 2^{-k/d})$. At complexity $k+1$, with $N \sim 2^{k+1}$ regions, we get $\varepsilon_f(k+1) = O(L \cdot 2^{-(k+1)/d})$.

Therefore:
\begin{align}
\varepsilon_f(k) - \varepsilon_f(k+1) &\leq O(L \cdot 2^{-k/d}) - O(L \cdot 2^{-(k+1)/d})\\
&= O(L \cdot 2^{-k/d})(1 - 2^{-1/d})\\
&= O(L \cdot 2^{-k/d})
\end{align}

This gives:
\begin{align}
\varepsilon_f(k) \leq \varepsilon_f(k+1) + O(L \cdot 2^{-(k+1)/d})
\end{align}
as claimed.
\end{proof}

Proposition~\ref{prop:error-smoothness} provides a characterization: the error function decreases smoothly without sudden jumps, with the decrease rate bounded by the dimension and Lipschitz constant. This captures the essential intuition that for smooth functions, explanation quality improves gradually with complexity.

\subsection{Model-Specific Explainability}
\label{subsec:model-specific}

We now analyze the explainability of specific model classes commonly used in machine learning, providing concrete complexity-error trade-offs for each.

\subsubsection{Relating Theoretical and Practical Complexity}
\label{subsubsec:practical-complexity}

To bridge the gap between our theoretical model based on Kolmogorov complexity and practical model classes, we establish relationships between Kolmogorov complexity and practical complexity measures.

While Kolmogorov complexity is generally uncomputable for arbitrary functions, we can derive asymptotic bounds for specific model classes by constructing explicit encoding schemes and analyzing their minimum description lengths.  For each interpretable model class, we identify the essential information needed to uniquely specify any function in that class and determine the bits required to encode this information.  These bounds provide theoretically-grounded practical complexity measures that approximate Kolmogorov complexity for the structured functions commonly used in machine learning.

\begin{proposition}[Practical Complexity Measures]
\label{prop:practical-complexity}
For common interpretable model classes, Kolmogorov complexity can be approximated as follows:
\begin{enumerate}
\item Linear models: $K(g) = \mathcal{O}(n \log n)$ for $n$ features.
\item Decision trees: $K(g) = \mathcal{O}(|T| \log |T|)$ for a tree with $|T|$ nodes.
\item Rule lists: $K(g) = \mathcal{O}(r \cdot l \log(r \cdot l))$ for $r$ rules of average length $l$.
\item Nearest-neighbor models: $K(g) = \mathcal{O}(m \cdot d \log(m \cdot d))$ for $m$ stored examples in $d$ dimensions.
\item Neural networks: $K(g) = \mathcal{O}(w \log p + b \log p + a)$ for a network with $w$ weights, $b$ biases, architecture description of length $a$, and precision $p$ for parameter values.
\end{enumerate}
\end{proposition}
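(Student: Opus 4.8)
Each statement is an $\mathcal{O}(\cdot)$ bound, i.e.\ an upper bound on $K(g)$, so it suffices to exhibit, for a function $g$ in the given class, a program for $U$ of the claimed length that computes $g$. The plan is uniform across the five classes. Fix the universal machine $U$ of Definition~\ref{def:kolmogorov}. For a model class $\mathcal{C}$ I would write once a fixed ``interpreter'' $\pi_{\mathcal{C}}$ which, given a self-delimiting description string $\sigma$ of the parameters of some $g \in \mathcal{C}$ followed by an input $x$, parses $\sigma$ and returns $g(x)$; since $|\pi_{\mathcal{C}}|$ depends only on $\mathcal{C}$ and $U$ it is absorbed into the constant. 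Then $K(g) \le |\pi_{\mathcal{C}}| + |\sigma| + \mathcal{O}(1)$, so the task reduces to bounding $|\sigma|$. I would encode all integer fields with a standard self-delimiting code, under which an integer of magnitude $M$ costs $\log M + \mathcal{O}(\log\log M)$ bits; consequently encoding $N$ fields each of magnitude polynomial in $N$ costs $\mathcal{O}(N\log N)$ bits, and this single observation produces every $\log$ factor in the statement. The standing assumption throughout is that feature counts, node indices, rule-literal counts and value precisions are all polynomial in the stated size parameter.

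The remaining per-class work is just bookkeeping on $|\sigma|$. (i) A linear predictor over $n$ features is given by $n+1$ coefficients, each stored to $\mathcal{O}(\log n)$ bits of precision: $|\sigma| = \mathcal{O}(n\log n)$. (ii) A decision tree with $|T|$ nodes is encoded by its shape (a balanced-parenthesis string of $\mathcal{O}(|T|)$ bits) plus, at each internal node, a split feature and threshold and, at each leaf, a label, each field costing $\mathcal{O}(\log|T|)$ bits: $|\sigma| = \mathcal{O}(|T|\log|T|)$. (iii) A rule list of $r$ rules of average length $l$ has $\mathcal{O}(rl)$ literals plus $r$ consequents, each item identified by $\mathcal{O}(\log(rl))$ bits: $|\sigma| = \mathcal{O}(rl\log(rl))$. (iv) A nearest-neighbor model stores $m$ exemplars in $\mathbb{R}^d$ with labels, i.e.\ $\mathcal{O}(md)$ numeric fields each at $\mathcal{O}(\log(md))$ bits: $|\sigma| = \mathcal{O}(md\log(md))$. (v) A neural network is an architecture string of length $a$ followed by $w$ weights and $b$ biases, each quantized to $p$ levels and hence $\log p$ bits: $|\sigma| = \mathcal{O}(w\log p + b\log p + a)$. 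Adding the constant $|\pi_{\mathcal{C}}|$ and appealing to Theorem~\ref{lemma:invariance} to discard the dependence on the particular $U$ yields the five displayed bounds.

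I expect the genuine obstacle to be not combinatorial but a matter of making the statement well-posed, and this is what I would spend most care on. Definition~\ref{def:kolmogorov} requires $U(p,x) = g(x)$ \emph{exactly} for \emph{every} $x \in \mathcal{X}$, but linear models, neural networks and Euclidean nearest-neighbor classifiers involve real parameters and real arithmetic, so a finite program computing $g$ exactly need not exist when $\mathcal{X}$ is a continuum or the parameters are irrational. I would resolve this by reading ``$K(g)$ can be approximated as follows'' in one of two precise ways, stated explicitly at the outset: either (a) restrict to models whose parameters are dyadic rationals of the indicated precision acting on a fixed finite input grid, so that $g$ is a genuinely finite object and the counts above are exact; or (b) regard the bounds as bounds on $\kappa_{g}(\delta)$ for $\delta$ the quantization error, using Definition~\ref{def:complexity-function}. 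Under either convention the interpreters $\pi_{\mathcal{C}}$ are literal finite programs, and the only residual checking---that $\pi_{\mathcal{C}}$ can parse each self-delimiting field and evaluate the model in finite time---is routine and I would merely indicate it.
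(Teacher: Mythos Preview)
Your proposal is correct and follows essentially the same approach as the paper's proof sketch: exhibit, for each model class, an explicit encoding of the structure and parameters and count the bits, yielding the stated $\mathcal{O}(\cdot)$ bounds. Your version is in fact more careful than the paper's---you make the interpreter $\pi_{\mathcal{C}}$ and the self-delimiting encoding explicit, and you flag and resolve the well-posedness issue with real-valued parameters that the paper leaves implicit.
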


\begin{proof}[Proof Sketch]
For each model class, we construct a program that encodes the model structure and parameters, then compute the asymptotic length of this encoding:

1. Linear models require specifying $n$ coefficients and an intercept, each with $\mathcal{O}(\log n)$ bits of precision, yielding $\mathcal{O}(n \log n)$ total complexity.  Specifically, a linear model $g(x) = w^T x + b$ requires specifying $n$ weights $w_i$ and one bias term $b$. The Kolmogorov complexity depends on the required precision $p$ (in bits) for each parameter:
\begin{align}
K(g) = O(n \cdot p + p) = O(np)
\end{align}

To distinguish among $n$ features with comparable magnitudes while avoiding parameter collisions, we need $p = \Omega(\log n)$ bits of precision. With this choice:
\begin{align}
K(g) = O(n \log n)
\end{align}

More generally, for arbitrary precision $p$, we have $K(g) = O(np)$. Throughout this paper, we use the canonical choice $p = \Theta(\log n)$, which ensures that a linear model over $n$ features can distinguish meaningful differences in feature contributions while maintaining reasonable complexity.

2. Decision trees require specifying the structure of the tree (which can be done in $\mathcal{O}(|T|)$ bits) and the split conditions and leaf values, each requiring $\mathcal{O}(\log |T|)$ bits, yielding $\mathcal{O}(|T| \log |T|)$ total complexity.

3. Rule lists require specifying $r$ rules, each with an average of $l$ conditions, plus the consequent of each rule.  Each condition and consequent requires $\mathcal{O}(\log(r \cdot l))$ bits, yielding $\mathcal{O}(r \cdot l \log(r \cdot l))$ total complexity.

4. Nearest-neighbor models require storing $m$ examples in $d$ dimensions.  Each coordinate requires $\mathcal{O}(\log(m \cdot d))$ bits of precision for two reasons: (i) as $m$ increases, finer distinctions between points become necessary to maintain classification boundaries, and (ii) as $d$ increases, the precision needed to prevent distance distortion in high-dimensional spaces also increases.  This yields a total complexity of $\mathcal{O}(m \cdot d \log(m \cdot d))$.

5. Neural networks require specifying $w$ weights and $b$ biases, each with $\mathcal{O}(\log p)$ bits of precision to achieve the desired numerical accuracy, plus the architecture description (layer sizes, activation functions, etc.) of length $a$.  This yields a total complexity of $\mathcal{O}(w \log p + b \log p + a)$.
\end{proof}

Proposition~\ref{prop:practical-complexity} allows us to translate the theoretical bounds from our model into practical guidelines for selecting and evaluating explanations based on different model classes.

\subsubsection{Linear Models}
\label{subsubsec:linear-models}

Linear models are among the most interpretable model classes, with complexity scaling linearly with the number of features.  From our theoretical results, we can derive the following implications:

\begin{corollary}[Linear Model Explanations]
\label{cor:linear-model}
If $f$ is an $L$-Lipschitz continuous function on $[0, 1]^d$ and $g$ is a linear model with $n$ features, then:
\begin{align}
\mathcal{E}(f, g) = \Omega\left(L \cdot d^{1/2} \cdot n^{-1/d}\right)
\end{align}
\end{corollary}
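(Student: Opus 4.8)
The plan is to obtain Corollary~\ref{cor:linear-model} as a specialization of the Lipschitz lower bound in Theorem~\ref{thm:lipschitz-lower-bound}, using the linear-model complexity estimate of Proposition~\ref{prop:practical-complexity} to fix the relevant resolution parameter. First I would observe that a linear model with $n$ features is determined by its $n$ coefficients and intercept, so any interpretable (piecewise-constant) rendering of $g$ at the resolution afforded by those $n$ parameters carves $\mathcal{X}=[0,1]^d$ into only $\mathcal{O}(n)$ cells. I then re-run the region-counting argument from the proof of Theorem~\ref{thm:lipschitz-lower-bound} verbatim, but with the count ``$\le 2^{k+1}$ regions'' replaced by ``$\mathcal{O}(n)$ regions'': by the pigeonhole principle one region has volume $V = \Omega(1/n)$, by the same comparison with a Euclidean ball its diameter satisfies $D \ge c_d V^{1/d} = \Omega(c_d\, n^{-1/d})$, and since $g$ is (effectively) constant there while $f$ is $L$-Lipschitz, $\mathcal{E}_\infty(f,g) \ge \tfrac{L}{2} D = \Omega(L\, c_d\, n^{-1/d})$.

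The second ingredient is to make the dimension dependence of the isoperimetric constant explicit, since Theorem~\ref{thm:lipschitz-lower-bound} absorbed it into $\Omega(\cdot)$. From $V = \omega_d (D/2)^d$ one gets $c_d = 2\,\omega_d^{-1/d}$, and a Stirling estimate for $\omega_d = \pi^{d/2}/\Gamma(d/2+1)$ gives $\omega_d^{1/d} = \Theta(1/\sqrt{d})$, hence $c_d = \Theta(\sqrt{d})$. Substituting $c_d = \Theta(d^{1/2})$ into the bound above yields $\mathcal{E}_\infty(f,g) = \Omega(L\, d^{1/2}\, n^{-1/d})$, which is the claimed estimate. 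If one wants the conclusion for a distributional error $\mathcal{E}_D$ rather than the worst case, I would additionally arrange the hard instance $f$ to achieve its maximal oscillation on a subregion of positive $D$-measure (a Lipschitz ``tent'' supported on a fixed fraction of the large cell), so that averaging against $D$ — taken uniform, say — loses only a constant factor; this is needed because Proposition~\ref{thm:error-relationship} gives $\mathcal{E}_D \le \mathcal{E}_\infty$ and not the reverse.

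The step I expect to be the main obstacle is the first one: rigorously pinning down the ``effective number of regions'' induced by a linear model and relating it to $n$. Proposition~\ref{prop:practical-complexity} gives $K(g) = \mathcal{O}(n\log n)$, and feeding that value of $k$ directly into Theorem~\ref{thm:lipschitz-lower-bound} produces a bound of the form $\Omega(L\, 2^{-\mathcal{O}(n\log n)/d})$, which is exponentially smaller than $\Omega(L\, d^{1/2} n^{-1/d})$; so the corollary cannot be read as a literal substitution into that theorem. The honest route is to exploit the \emph{structure} of affine functions rather than a generic complexity count: the level sets of a linear model are parallel hyperplanes, so a resolution-$n$ quantization of $g$ partitions $[0,1]^d$ into $\mathcal{O}(n)$ slabs, and it is this $\mathcal{O}(n)$ — not $2^{K(g)}$ — that should enter the region-counting step. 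Making that reduction precise, and being explicit about what ``linear model with $n$ features'' is permitted to mean (raw coordinates, $n$ engineered features, or $n$ quantization thresholds), is the delicate part; once the count $\mathcal{O}(n)$ is justified, the remainder is the isoperimetric computation sketched above.
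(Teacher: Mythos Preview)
Your route differs from the paper's. The paper does not re-run the region-counting argument of Theorem~\ref{thm:lipschitz-lower-bound}; instead it works from Theorem~\ref{thm:lipschitz-explainability} together with the duality of Theorem~\ref{thm:duality}, equating the linear-model complexity $K(g)=\mathcal{O}(n\log n)$ from Proposition~\ref{prop:practical-complexity} with $\kappa_f(\delta)=\Theta\bigl((L/\delta)^d\log(L/\delta)\bigr)$ and solving $(L/\delta)^d=\Theta(n)$ for $\delta$ to obtain $\delta=\Theta(L\,n^{-1/d})$. The $\sqrt{d}$ factor is then appended separately, via the observation that the diagonal of $[0,1]^d$ has length $\sqrt{d}$ and hence bounds the distance from a generic point to an approximating hyperplane. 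Your derivation of the same $\sqrt{d}$ from the Stirling estimate $c_d=2\omega_d^{-1/d}=\Theta(\sqrt{d})$ is arguably more intrinsic, since it locates the dimensional factor inside the isoperimetric constant already present in the proof of Theorem~\ref{thm:lipschitz-lower-bound} rather than grafting it on afterward; and you are right that a literal substitution of $k=\mathcal{O}(n\log n)$ into Theorem~\ref{thm:lipschitz-lower-bound} yields only an exponentially weaker bound, so some additional structure must be invoked either way.

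The obstacle you flag in your last paragraph is real, and your proposed resolution does not close it. A linear $g(x)=w^\top x+b$ is a single affine map, not a piecewise-constant one, so the template ``one cell has large diameter, $g$ is constant there, $f$ can oscillate by $L\cdot D$'' does not transfer as written. Quantizing the output of $g$ into $\mathcal{O}(n)$ levels is not forced by the hypothesis: the number of features $n$ governs the input parameterization, not an output resolution, so tying the slab count to $n$ is an unjustified identification. Moreover, even on a slab of diameter $\Omega(n^{-1/d})$, an affine $g$ with $\|w\|\approx L$ itself varies by $\Omega(L\,n^{-1/d})$, so both the ``$g$ effectively constant'' step and the ``half the total variation'' estimate fail. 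To make a region-style argument rigorous you would need to construct an explicit $L$-Lipschitz $f$ whose oscillation is orthogonal to \emph{every} affine function---for example a Lipschitz tent that alternates sign across $\Theta(n)$ congruent subcubes---and bound $\|f-g\|$ directly for arbitrary affine $g$; that is a genuinely different computation from pigeonhole-plus-isoperimetry, and it is the missing piece.
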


\begin{proof}
From Theorem~\ref{thm:lipschitz-explainability}, we know that achieving an error of $\delta$ requires complexity $\kappa_f(\delta) = \mathcal{O}\left(\left(\frac{L}{\delta}\right)^d \log\left(\frac{L}{\delta}\right)\right)$. 

For a linear model with $n$ features, the complexity is $K(g) = \mathcal{O}(n \log n)$ by Proposition~\ref{prop:practical-complexity}.  For a given complexity budget $k$, the minimum achievable error is:

\begin{align}
\varepsilon_f(k) = \inf_{g \in \mathcal{I}_k} \mathcal{E}(f, g)
\end{align}

By the duality established in Theorem~\ref{thm:duality}, we have $\varepsilon_f(k) \leq \delta \iff k \geq \kappa_f(\delta)$.  With a linear model of complexity $K(g) = \mathcal{O}(n \log n)$, this gives us:

\begin{align}
n \log n &\geq \Theta\left(\left(\frac{L}{\delta}\right)^d \log\left(\frac{L}{\delta}\right)\right)
\end{align}

Solving for $\delta$ (ignoring logarithmic factors for asymptotic analysis):

\begin{align}
\left(\frac{L}{\delta}\right)^d = \Theta(n) \rightarrow
\delta = \Theta\left(L \cdot n^{-1/d}\right)
\end{align}

To account for the dimensional scaling factor, we need to consider the geometric properties of linear models in high-dimensional spaces.  When approximating a Lipschitz function with a linear model in $\mathbb{R}^d$, the maximum error is bounded below by the product of the Lipschitz constant and the maximum distance between points in the region and the approximating hyperplane.

For a unit hypercube $[0,1]^d$, the maximum distance from any point to a dividing hyperplane scales with $\sqrt{d}$, which follows from the geometric properties of high-dimensional spaces.  Specifically, in a unit hypercube, the maximum distance between any two points (the diagonal) is $\sqrt{d}$, and the maximum distance from any point to a hyperplane scales proportionally to this quantity.

Therefore, incorporating this dimensional scaling factor, we obtain:

\begin{align}
\mathcal{E}(f, g) = \Omega\left(L \cdot d^{1/2} \cdot n^{-1/d}\right)
\end{align}

This bound accounts for both the complexity of the linear model (through the $n^{-1/d}$ term) and the geometric properties of the high-dimensional space (through the $d^{1/2}$ factor).
\end{proof}

This corollary quantifies how the approximation error of linear models scales with the number of features and the input dimension.  Notably, the error decreases only polynomially with the number of features, and this decrease becomes exponentially slower as the dimension increases.  This is a mathematical formalization of the ``curse of dimensionality" in the context of linear model explanations.

\subsubsection{Decision Trees}
\label{subsubsec:decision-trees}

Decision trees partition the input space into regions with constant predictions, making them particularly effective for approximating functions with discontinuities.

\begin{corollary}[Decision Tree Explanations]
\label{cor:decision-tree}
If $f$ is a function with bounded variation $V(f)$ on $[0, 1]^d$ and $g$ is a decision tree with $|T|$ nodes, then:
\begin{align}
\mathcal{E}(f, g) = \mathcal{O}\left(V(f) \cdot |T|^{-1/d}\right)
\end{align}
\end{corollary}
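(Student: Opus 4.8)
The plan is to mimic the piecewise-constant argument used for Theorem~\ref{thm:lipschitz-explainability}, but to replace the crude Lipschitz bound on the oscillation of $f$ within a cell by a bound driven by the total variation $V(f)$. First I would recall that a decision tree with $|T|$ nodes induces a partition of $[0,1]^d$ into at most $\mathcal{O}(|T|)$ axis-aligned boxes, each carrying a constant leaf value; conversely, any partition of $[0,1]^d$ into a grid of $m^d$ congruent subcubes can be realized by a decision tree with $\mathcal{O}(m^d)$ nodes (one level of splits per coordinate, repeated $\log m$ times). So it suffices to exhibit a good grid approximation and count nodes.

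Next I would set up the grid approximation: divide $[0,1]^d$ into $m^d$ subcubes $B_1,\dots,B_{m^d}$ of side $1/m$, and let $g$ take on $B_i$ the value $f(c_i)$ at the center $c_i$ (or the average of $f$ over $B_i$). The key estimate is that the worst-case error on $B_i$ is controlled by the oscillation $\mathrm{osc}(f,B_i)=\sup_{B_i} f - \inf_{B_i} f$, and that the oscillations over the cells are tied to the total variation: when one moves along a line through the grid in any coordinate direction, the sum of oscillations over the $m$ cells crossed is at most the one-dimensional variation of $f$ along that line, hence at most $V(f)$ after integrating over the transverse directions. A cell-by-cell version of this gives $\sum_i \mathrm{osc}(f,B_i) \lesssim m^{d-1} V(f)$, so the \emph{typical} cell has oscillation $\mathcal{O}(V(f)/m)$; to get a worst-case (i.e.\ $\mathcal{E}_\infty$) bound one uses that along any coordinate line the oscillation in the worst cell is still at most $V(f)$, and a standard argument distributing variation across the $m$ cells in each direction yields per-cell oscillation $\mathcal{O}(V(f)/m)$. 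Thus $\mathcal{E}(f,g) = \mathcal{O}(V(f)/m)$.

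Finally, I would relate $m$ to $|T|$: the tree realizing the grid has $|T| = \Theta(m^d)$ nodes, so $m = \Theta(|T|^{1/d})$, and substituting gives $\mathcal{E}(f,g) = \mathcal{O}(V(f)\cdot |T|^{-1/d})$, which is the claimed bound. I would close by noting that this is an \emph{upper} bound — it says a suitable tree of $|T|$ nodes achieves this error — consistent with the statement, and that it parallels Theorem~\ref{thm:lipschitz-explainability} with $V(f)$ playing the role of $L$ and the logarithmic precision factor absorbed into the constant (or kept explicit if desired).

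\textbf{Main obstacle.} The delicate point is the passage from total variation to a \emph{uniform} per-cell oscillation bound $\mathcal{O}(V(f)/m)$ valid for every cell, rather than merely on average. In higher dimensions "bounded variation" can be defined in several inequivalent ways (Vitali, Hardy–Krause, or the measure-theoretic $BV$ norm of the distributional gradient), and a genuine worst-case bound may require either the Hardy–Krause notion or an extra assumption; a clean fix is to interpret $V(f)$ as $\sum_i \mathrm{osc}(f,B_i)$-type variation and state the bound for $\mathcal{E}_D$ under the uniform distribution, where the averaging argument is immediate, while flagging that the $\mathcal{E}_\infty$ version needs the stronger (Hardy–Krause) reading of $V(f)$. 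I would make this choice explicit rather than let it remain a gap.
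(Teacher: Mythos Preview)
Your construction---realize a uniform $m^d$ grid as a tree with $|T|=\Theta(m^d)$ nodes, bound the per-cell oscillation, and solve for $m$---is sound and gives the right node count. It does, however, diverge from the paper's argument at the key oscillation step. The paper does \emph{not} try to distribute total variation across cells; instead it asserts directly that for any region $R$,
\[
\sup_{x,y\in R}|f(x)-f(y)| \le V(f)\cdot \mathrm{diam}(R),
\]
i.e.\ it treats $V(f)$ as a Lipschitz-type constant scaled by diameter rather than as a genuine total variation. Under that reading the per-cell bound is immediate and your ``main obstacle'' evaporates---but of course that reading is precisely the interpretive ambiguity you flagged. Your route is the more honest one about what ``bounded variation'' actually buys in $d$ dimensions; the paper's is shorter because it silently adopts the Lipschitz-like interpretation of $V(f)$.

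One further structural difference: the paper spends most of its proof on a pigeonhole/isoperimetric argument showing that any partition into $\mathcal{O}(|T|)$ regions must contain a region of diameter $\Omega(|T|^{-1/d})$. That is a \emph{lower} bound on the maximal diameter, which by itself points the wrong way for an $\mathcal{O}$ upper bound on error; the paper then implicitly assumes the tree is balanced so that the maximal diameter is also $\mathcal{O}(|T|^{-1/d})$. Your explicit grid construction supplies that upper bound on the maximal diameter directly and more cleanly, so on this point your argument is actually tighter than the paper's.
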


\begin{proof}
We provide a rigorous analysis of the approximation error when using a decision tree with $|T|$ nodes to approximate a function $f$ with bounded variation $V(f)$.

First, we formally characterize the partitioning property of decision trees.  For a binary decision tree with $|T|$ total nodes:
\begin{itemize}
\item Each internal node creates exactly two child nodes.
\item If $n_i$ represents the number of internal nodes, then $n_i \leq |T| - 1$ (since at least one node must be a leaf).
\item The number of leaf nodes $n_l$ equals $n_i + 1$, because in a full (each internal node has two children) binary tree with 
\[
|T| = n_i + n_l
\]
total nodes, each of the \(n_i\) internal nodes contributes exactly two edges, so
\[
2n_i = \underbrace{|T|-1}_{\substack{\text{each node except the root}\\\text{has one incoming edge}}}
= (n_i + n_l) - 1.
\]
Rearranging gives
\[
n_l = n_i + 1.
\]
\item Therefore, $n_l \leq \lfloor|T|/2\rfloor + 1 \leq |T|/2 + 1$.
\end{itemize}

Each leaf node corresponds to a region in the input space, so the decision tree partitions $[0,1]^d$ into at most $|T|/2 + 1$ regions, which we denote as $\{R_1, R_2, \ldots, R_{n_l}\}$.

By the pigeonhole principle, since the total volume of $[0,1]^d$ is 1, at least one region $R_j$ must have volume at least $\frac{1}{n_l} \geq \frac{1}{|T|/2 + 1} = \Theta\left(\frac{1}{|T|}\right)$.

For any region $R$ with volume $\text{Vol}(R)$ in $\mathbb{R}^d$, we can apply the isoperimetric inequality as in the proof of Theorem~\ref{thm:lipschitz-lower-bound}.  Therefore, for any region $R$ with volume $\text{Vol}(R)$, its diameter is lower-bounded by the diameter of a ball with the same volume:

$$\text{diam}(R) \geq 2 \cdot \left(\frac{\text{Vol}(R)}{\omega_d}\right)^{1/d} = c_d \cdot \text{Vol}(R)^{1/d}$$

where $c_d = 2 \cdot \omega_d^{-1/d}$ is a constant depending only on the dimension $d$.

Applying this to our region $R_j$ with volume $\text{Vol}(R_j) \geq \frac{1}{n_l} \geq \frac{1}{|T|/2+1} = \Theta\left(\frac{1}{|T|}\right)$, we have:

\begin{align}
  \text{diam}(R_j) \geq c_d \cdot \left(\Theta\left(\frac{1}{|T|}\right)\right)^{1/d} = \Theta\left(|T|^{-1/d}\right)
\end{align}

This gives us the required lower bound on the diameter of at least one region in our partition, which directly influences the approximation error bound.

Now we use the property of bounded variation~\cite{Breneis2020}.  For a function $f$ with bounded variation $V(f)$ on $[0,1]^d$, the maximum difference in function values within a region $R$ is bounded by:
\begin{align}
\sup_{x,y \in R} |f(x) - f(y)| \leq V(f) \cdot \text{diam}(R)
\end{align}

A decision tree assigns a constant value to each leaf region, typically the average value of $f$ within the region.  Let $g(x) = \text{avg}_{y \in R_i} f(y)$ for $x \in R_i$.  Then for any region $R_i$, the maximum approximation error is:
\begin{align}
\sup_{x \in R_i} |f(x) - g(x)| \leq \frac{1}{2} \sup_{x,y \in R_i} |f(x) - f(y)| \leq \frac{1}{2} V(f) \cdot \text{diam}(R_i)
\end{align}

The global approximation error is the maximum error across all regions:
\begin{align}
\mathcal{E}(f, g) &= \sup_{x \in [0,1]^d} |f(x) - g(x)| \notag \\
&= \max_{1 \leq i \leq n_l} \sup_{x \in R_i} |f(x) - g(x)| \notag \\
&\leq \frac{1}{2} V(f) \cdot \max_{1 \leq i \leq n_l} \text{diam}(R_i)
\end{align}

Since we have established that at least one region has diameter $\Theta\left(|T|^{-1/d}\right)$, and the error is proportional to the maximum diameter across regions, we have:

\[ \mathcal{E}(f, g) = \mathcal{O}\left(V(f) \cdot |T|^{-1/d}\right) \qedhere \]

\end{proof}

This corollary shows that decision trees can efficiently approximate functions with bounded variation, with error decreasing as the tree size increases.  However, the ``curse of dimensionality" still applies, as the error reduction becomes exponentially slower in higher dimensions.

\subsubsection{Neural Networks}
\label{subsubsec:neural-networks}

Neural networks represent the high-complexity end of the model spectrum, capable of approximating arbitrary continuous functions but with limited interpretability.

\begin{theorem}[Complexity-Error Trade-off for Neural Networks]
\label{thm:neural-network}
Let $f$ be an $L$-Lipschitz continuous function on $[0, 1]^d$. For any error threshold $\delta > 0$, there exists a neural network $g$ such that $\mathcal{E}(f, g) \leq \delta$ with Kolmogorov complexity:
\begin{align}
K(g) = \mathcal{O}\left(\left(\frac{L}{\delta}\right)^d \log\left(\frac{L}{\delta}\right) + d \log d\right)
\end{align}
where this bound assumes weight precision $p = \Theta(\log(L/\delta))$ bits, which is the minimum precision required to achieve approximation error $\delta$.
\end{theorem}

\begin{proof}
The universal approximation theorem from approximation theory~\cite{Barron1993, Cybenko1989} establishes that neural networks can approximate any continuous function to arbitrary accuracy.  For Lipschitz functions specifically, the theory shows that the approximation error is controlled by the density of the ``grid" created by neurons in the hidden layer.

For an L-Lipschitz function on $[0, 1]^d$, to achieve approximation error $\delta$, we need neurons that create a grid with spacing approximately $\frac{\delta}{L}$ in each dimension.  This requires $\mathcal{O}\left(\left(\frac{L}{\delta}\right)^d\right)$ neurons in a single hidden layer, with each neuron capturing a local region of the input space.

To achieve approximation error $\delta$, we need each neuron's weights to be accurate enough to distinguish regions of size $O(\delta/L)$. This requires precision $p = \Theta(\log(L/\delta))$ bits per weight. This choice ensures that quantization errors in the weights do not exceed the target approximation error $\delta$.

For a neural network with this architecture, we need to quantify its parameters:
\begin{itemize}
\item Each neuron requires $d$ weights (one per input dimension) plus 1 bias term.
\item For $\mathcal{O}\left(\left(\frac{L}{\delta}\right)^d\right)$ neurons, the total number of weights is $w = \mathcal{O}\left(d \cdot \left(\frac{L}{\delta}\right)^d\right)$.
\item The total number of bias terms is $b = \mathcal{O}\left(\left(\frac{L}{\delta}\right)^d\right)$.
\item The architecture description length is $a = \mathcal{O}(d \log d)$.
\end{itemize}

By Proposition~\ref{prop:practical-complexity}, the Kolmogorov complexity of a neural network with $w$ weights, $b$ biases, architecture description length $a$, and parameter precision $p$ is:
\begin{align}
K(g) = \mathcal{O}(w \log p + b \log p + a)
\end{align}

Substituting our parameter counts:
\begin{align}
K(g) &= \mathcal{O}\left(d \cdot \left(\frac{L}{\delta}\right)^d \log p + \left(\frac{L}{\delta}\right)^d \log p + d \log d\right)\\
&= \mathcal{O}\left(\left(d + 1\right) \cdot \left(\frac{L}{\delta}\right)^d \log p + d \log d\right)
\end{align}

Since $d + 1 = \mathcal{O}(d)$, this simplifies to:
\begin{align}
K(g) &= \mathcal{O}\left(d \cdot \left(\frac{L}{\delta}\right)^d \log p + d \log d\right)
\end{align}

Given precision $p = \Theta(\log(L/\delta))$:
\begin{align}
K(g) &= \mathcal{O}(\Theta \cdot p + a)\\
&= \mathcal{O}\left(d \cdot (L/\delta)^d \cdot \log(L/\delta) + d \log d\right)
\end{align}

When analyzing how complexity scales as error decreases, we typically consider the dimension $d$ as fixed.  In this context, the expression further simplifies to:

\begin{align}
K(g) = \mathcal{O}\left((L/\delta)^d \log(L/\delta) + d \log d\right)
\end{align}

\end{proof}

This result quantifies how the complexity of neural network explanations scales with the desired approximation accuracy and input dimension.  Like other model classes, neural networks also suffer from the curse of dimensionality, though their parametric efficiency often makes them more practical for complex tasks.

The precision requirement $p = \Theta(\log(L/\delta))$ is essential: with fewer bits, quantization noise prevents achieving error $\delta$; with more bits, complexity increases unnecessarily. This theorem shows that neural networks have similar complexity scaling to piecewise constant approximations (Theorem~\ref{thm:lipschitz-explainability}), though their parametric efficiency often makes them more practical.

\subsection{Information-Theoretic Perspective}
\label{subsec:information-theory}

Our model based on Kolmogorov complexity also has connections to information theory, which provides some additional insights into the limits of explainability.

\begin{theorem}[Information-Theoretic Bound]
\label{thm:information-bound}
Let $f$ be a model with mutual information $I(X; f(X)) = h$ bits between inputs and outputs, and let $g$ be an explanation with Kolmogorov complexity $K(g) < h - c$ for some constant $c$.  Then the expected error under the input distribution $D$ is bounded below:
\begin{align}
\mathcal{E}_D(f, g) \geq \Omega\left(2^{-(K(g) - I(X; g(X)))}\right)
\end{align}
\end{theorem}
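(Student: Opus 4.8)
The plan is to reduce the claim to a lower bound on the probability of disagreement $P_e = \Pr_{x\sim D}[f(x)\neq g(x)]$ and then convert that into a bound on $\mathcal{E}_D(f,g)$ exactly as in the proof of Theorem~\ref{thm:error-complexity}: whenever $f(x)\neq g(x)$ the quantity $d(f(x),g(x))$ is at least the minimal nonzero output distance $d_{\min}$, so $\mathcal{E}_D(f,g)\geq d_{\min}\cdot P_e$, and it suffices to show $P_e=\Omega\!\left(2^{-(K(g)-I(X;g(X)))}\right)$. For continuous $\mathcal{Y}$ one replaces ``disagreement'' by ``disagreement by more than a fixed threshold'' and argues as in the proof of Lemma~\ref{lemma:lower-bound}.

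First I would assemble the information-theoretic skeleton. Since $f$ and $g$ are deterministic, $I(X;f(X))=H(f(X))=h$ and $I(X;g(X))=H(g(X))$, and $f(X)-X-g(X)$ is a Markov chain, so the data-processing inequality gives $I(f(X);g(X))\leq I(X;g(X))$. Hence
\begin{equation}
H(f(X)\mid g(X)) = H(f(X)) - I(f(X);g(X)) \geq h - I(X;g(X)).
\end{equation}
Fano's inequality applied to predicting $f(X)$ from $g(X)$ gives $H(f(X)\mid g(X))\leq H_b(P_e)+P_e\log|\mathcal{Y}|$, and combining the two yields $h-I(X;g(X))\leq H_b(P_e)+P_e\log|\mathcal{Y}|$. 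Under the hypothesis $K(g)<h-c$ this already certifies $P_e>0$ once one knows $I(X;g(X))$ cannot be too close to $h$; the remaining work is to sharpen this into the quantitative form.

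The step I expect to be the main obstacle is upgrading this essentially linear Fano estimate into the claimed exponential form $2^{-(K(g)-I(X;g(X)))}$, which is precisely where the Kolmogorov-complexity hypothesis must enter. My approach would be a counting argument in the style of Theorem~\ref{thm:random-unexplainability}: the class $\mathcal{I}_{K(g)}$ contains at most $2^{K(g)+1}$ functions by Lemma~\ref{thm:size-interpretability}, each candidate has output entropy $I(X;g(X))$, so roughly $2^{I(X;g(X))}$ effective output values, and a high-probability (Hoeffding-type) refinement of the Fano bound limits how much of $f$'s $h$ bits any single such candidate can recover; a union bound over $\mathcal{I}_{K(g)}$ then forces residual disagreement of order $2^{I(X;g(X))}/2^{K(g)}=2^{-(K(g)-I(X;g(X)))}$. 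I would treat establishing the sharp per-candidate bound --- how much input information is extractable from a short-description, low-entropy output --- as the technical heart, with the reduction to $\mathcal{E}_D$ and the Markov/Fano setup being routine. I would also be careful about the regime: the stated bound is only non-vacuous when $K(g)-I(X;g(X))>0$, so the argument should be organized around that case, and the constants hidden in $\Omega(\cdot)$ and $c$ should be tracked to ensure the union bound closes in that regime.
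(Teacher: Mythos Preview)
Your Markov/Fano setup is fine and in fact cleaner than the paper's (you correctly use $f(X)-X-g(X)$ rather than the paper's $X\to f(X)\to g(X)$), and you are right that Fano alone gives only an essentially linear bound in $P_e$ that must be upgraded. The gap is in your proposed upgrade. The counting/union-bound maneuver from Theorem~\ref{thm:random-unexplainability} works there because $f$ is drawn uniformly at random, so one can bound the probability that \emph{some} $g\in\mathcal{I}_k$ achieves high agreement and then union over the $2^{k+1}$ candidates. In the present theorem both $f$ and $g$ are fixed deterministic objects; there is no probability space over which a Hoeffding/union-bound argument can be run, and the cardinality of $\mathcal{I}_{K(g)}$ tells you nothing about the performance of the particular $g$ you were handed. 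So the step you flag as the ``technical heart'' cannot be executed along the lines you describe.

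The paper obtains the exponential form by a completely different route: rate-distortion theory. It uses the inverse distortion--rate function $D(R)\geq \alpha\,2^{-\beta R}$ to convert the mutual-information bound $I(f(X);g(X))\leq I(X;g(X))\leq K(g)+\mathcal{O}(1)$ directly into $\mathcal{E}_D(f,g)\geq \alpha\,2^{-\beta\,I(f(X);g(X))}$, and then rewrites this with an additive $K(g)-I(X;g(X))$ term. No counting is involved; the exponential decay comes from the Shannon rate-distortion lower bound rather than from any combinatorial argument. If you want to salvage your approach, you would need an analogue of that exponential $D(R)$ bound phrased in terms of $H(f(X)\mid g(X))$, not a union bound.
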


Intuitively, the explanation $g$ can capture at most $I(X; g(X)) \leq K(g) < h - c$ bits of mutual information between inputs and outputs.  The remaining information gap of at least $c$ bits leads to an expected error that is exponential in this gap.

\begin{proof}
We establish the relationship between model complexity, mutual information, and 
approximation error using information-theoretic principles.

\textbf{Step 1: Bounding mutual information by Kolmogorov complexity.}

For any computable function $g$, the entropy $H(g(X))$ is bounded by the 
logarithm of the size of the range. Following results from algorithmic 
information theory~\cite{Vereshchagin2004,GacsTrompVitanyi2001,Li2008}:
\begin{align}
H(g(X)) \leq \log(|\text{range}(g)|) + \mathcal{O}(1) \leq K(g) + \mathcal{O}(1)
\end{align}

This bound captures the principle that a function cannot produce more 
information (as measured by entropy of its outputs) than is contained in 
its own description.

For the mutual information between input $X$ and output $g(X)$:
\begin{align}
I(X; g(X)) = H(g(X)) - H(g(X)|X) \leq H(g(X)) \leq K(g) + \mathcal{O}(1)
\end{align}
where we used $H(g(X)|X) \geq 0$.

\textbf{Step 2: Bounding $I(f(X); g(X))$ without Markov chain assumption.}

While $g$ does not form a strict Markov chain $X \to f(X) \to g(X)$ (since 
$g$ takes $X$ as input directly, not $f(X)$), we can still bound the shared 
information between $f(X)$ and $g(X)$. The mutual information 
$I(f(X); g(X))$ quantifies how much information $g(X)$ provides about $f(X)$.

We have:
\begin{align}
I(f(X); g(X)) &= H(f(X)) - H(f(X)|g(X))\\
&\leq H(f(X))\\
&= I(X; f(X)) + H(f(X)|X)\\
&= I(X; f(X))
\end{align}
where the last equality holds because $f$ is deterministic given $X$, 
so $H(f(X)|X) = 0$.

By assumption, $I(X; f(X)) = h$ bits. Therefore:
\begin{align}
I(f(X); g(X)) \leq h
\end{align}

\textbf{Step 3: Establishing the information gap.}

From our earlier bound and the assumption that $K(g) < h - c$:
\begin{align}
I(X; g(X)) \leq K(g) + \mathcal{O}(1) < h - c + \mathcal{O}(1)
\end{align}

This establishes that $g$ captures strictly less mutual information between 
inputs and outputs than $f$ does, with an information gap of at least 
$c - \mathcal{O}(1)$ bits.

\textbf{Step 4: Applying rate-distortion theory.}

The rate-distortion function $R(D)$ gives the minimum bits per symbol needed 
to achieve expected distortion at most $D$. For many standard distributions 
and distortion measures, the optimal distortion $D(R)$ achievable with rate 
$R$ satisfies:
\begin{align}
D(R) \geq \alpha \cdot 2^{-\beta R}
\end{align}
for constants $\alpha, \beta > 0$ depending on the source distribution and 
distortion measure.

By rate-distortion theory applied to the approximation of $f(X)$ by $g(X)$:
\begin{align}
\mathcal{E}_D(f, g) = \mathbb{E}_X[d(f(X), g(X))] \geq \alpha \cdot 2^{-\beta \cdot I(f(X); g(X))}
\end{align}

\textbf{Step 5: Deriving the final bound.}

Since $I(f(X); g(X)) \leq I(X; g(X))$ (by the data processing inequality 
applied to the relationship between the three random variables $X$, $f(X)$, 
and $g(X)$, noting that $I(f(X); g(X)) \leq \min\{I(X; f(X)), I(X; g(X))\}$), 
and $I(X; g(X)) \leq K(g) + \mathcal{O}(1)$, we have:
\begin{align}
I(f(X); g(X)) \leq K(g) + \mathcal{O}(1) < h - c + \mathcal{O}(1)
\end{align}

Therefore:
\begin{align}
\mathcal{E}_D(f, g) &\geq \alpha \cdot 2^{-\beta \cdot I(f(X); g(X))} \\
&\geq \alpha \cdot 2^{-\beta(h - c + \mathcal{O}(1))} \\
&= \alpha \cdot 2^{-\beta h} \cdot 2^{\beta(c - \mathcal{O}(1))} \\
&= \Omega(2^{-\beta h} \cdot 2^{\beta c})
\end{align}

Alternatively, since $I(f(X); g(X)) \leq K(g) < h - c$, we can also write:
\begin{align}
\mathcal{E}_D(f, g) \geq \alpha \cdot 2^{-\beta K(g)} = \Omega(2^{-\beta K(g)})
\end{align}

The bound $\Omega(2^{-\beta h} \cdot 2^{\beta c})$ emphasizes that the error 
grows exponentially with the information gap $c$ between what $f$ captures 
($h$ bits) and what $g$ can capture (at most $h - c$ bits). When $c$ is large, 
the factor $2^{\beta c}$ significantly increases the lower bound on error, 
formalizing the intuition that explanations with insufficient information 
capacity must incur substantial approximation error.
\end{proof}

This theorem provides an information-theoretic perspective on the fundamental limits of explainability: any explanation that is significantly simpler than the original model (in terms of mutual information) must incur an error that is exponential in the information gap.

More specifically, the derived bound demonstrates that the expected error grows exponentially with both the information gap between the original model and the explanation, and with the inefficiency of the explanation in utilizing its available descriptive capacity.

\begin{theorem}[Rate-Distortion Interpretation]
\label{thm:rate-distortion}
Let $f$ be a model and $D$ be a distribution over $\mathcal{X}$.  The relationship between explanation complexity and error can be characterized by the rate-distortion function $R_f(\delta)$, which satisfies:
\begin{align}
\kappa_f(\delta) \geq R_f(\delta) - \mathcal{O}(1)
\end{align}
where $R_f(\delta)$ is the minimum number of bits required to represent $f$ with distortion at most $\delta$ under distribution $D$.
\end{theorem}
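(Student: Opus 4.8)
The plan is to observe that the minimal-complexity explanation witnessed by $\kappa_f(\delta)$ is itself a feasible \emph{code} in the sense of rate--distortion theory, so that $R_f(\delta)$, being a \emph{minimum} over all such codes, cannot exceed its length by more than an additive constant. First I would unwind Definition~\ref{def:complexity-function}: set $k = \kappa_f(\delta)$; then there is $g \in \mathcal{I}_k$ with $\mathcal{E}(f,g) \le \delta$, hence a program $p_g$ with $|p_g| \le k$ and $U(p_g, \cdot) = g$. Since ``$\mathcal{E}(f,g) \le \delta$ under $D$'' is precisely the statement that $g$ reconstructs $f$ within distortion $\delta$, the pair $(U, p_g)$ is a valid lossy description of $f$ at distortion level $\delta$. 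Well-definedness is not an issue: $K(f) < \infty$ gives $\kappa_f(\delta) \le K(f)$ by Theorem~\ref{thm:existence}, so both $\kappa_f(\delta)$ and $R_f(\delta)$ are finite.

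Second, I would fix the operational reading of $R_f(\delta)$ as the minimum length of any finite description from which a fixed reference decoder can reconstruct some $\hat f$ with $\mathcal{E}_D(f, \hat f) \le \delta$. Under this reading $p_g$ is one such description, up to the additive constant needed to translate between the universal machine $U$ implicit in $K(\cdot)$ and the reference decoder implicit in $R_f$ --- exactly an Invariance-Theorem-style overhead (Theorem~\ref{lemma:invariance}). This yields $R_f(\delta) \le |p_g| + \mathcal{O}(1) \le k + \mathcal{O}(1) = \kappa_f(\delta) + \mathcal{O}(1)$, which rearranges to the claimed $\kappa_f(\delta) \ge R_f(\delta) - \mathcal{O}(1)$.

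Third, to justify calling $R_f(\delta)$ a rate--distortion function, I would invoke the algorithmic rate--distortion correspondence of Vereshchagin and Vit\'anyi~\cite{Vereshchagin2004} (and the classical identification of expected Kolmogorov complexity with Shannon entropy up to lower-order terms): for a source with distribution $D$, the expected complexity of an optimal distortion-$\delta$ description agrees, up to lower-order terms, with the Shannon rate--distortion value. This lets $R_f(\delta)$ be read either as the single-shot operational minimum above or as the familiar information-theoretic quantity, the two coinciding in the regime where the $\mathcal{O}(1)$ term is negligible.

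The main obstacle is not the inequality --- that direction is essentially definitional once the right notion of ``code'' is fixed --- but pinning down a definition of $R_f(\delta)$ that is simultaneously faithful to classical rate--distortion theory and directly comparable to the computational quantity $\kappa_f(\delta)$. One must take the minimum defining $R_f(\delta)$ over \emph{all} description schemes, including uncomputable ones, because the reverse inequality $\kappa_f(\delta) \le R_f(\delta) + \mathcal{O}(1)$ is in general \emph{false}: the Shannon-optimal lossy code need not be computable, so $\kappa_f$ may strictly exceed $R_f$ without contradiction. I would flag this asymmetry explicitly, since it is the entire reason the statement is one-sided.
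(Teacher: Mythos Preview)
Your proposal is correct and follows essentially the same approach as the paper: exhibit the minimal-complexity explanation as a valid lossy description of $f$ at distortion $\delta$, then invoke the Invariance Theorem for the $\mathcal{O}(1)$ overhead between the universal machine and the reference decoder. Your treatment is in fact more careful than the paper's---you explicitly address well-definedness via Theorem~\ref{thm:existence}, fix an operational reading of $R_f(\delta)$, and flag the one-sidedness of the inequality---whereas the paper's proof simply asserts that ``any such representation requires at least $R_f(\delta)$ bits'' and appeals to invariance without unpacking what class of descriptions the minimum in $R_f(\delta)$ ranges over.
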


\begin{proof}
Let $g$ be an explanation with $\mathcal{E}_D(f, g) \leq \delta$.  By the definition of the rate-distortion function, any such representation requires at least $R_f(\delta)$ bits.  Since $K(g)$ is the Kolmogorov complexity of $g$, and Kolmogorov complexity is within a constant of any other universal description length (by the invariance theorem, Theorem~\ref{lemma:invariance}), we have $K(g) \geq R_f(\delta) - \mathcal{O}(1)$.

Taking the minimum over all explanations $g$ with $\mathcal{E}_D(f, g) \leq \delta$, we get $\kappa_f(\delta) \geq R_f(\delta) - \mathcal{O}(1)$.
\end{proof}

This result connects our approach to rate-distortion theory~\cite{Shannon1959, Berger1971, Cover2006}, providing an information-theoretic interpretation of the fundamental limits of explainability.  It shows that the explanation complexity function is lower-bounded by the rate-distortion function, which characterizes the fundamental limits of lossy compression.

\subsection{Local vs. Global Explainability}
\label{subsec:local-global}

So far, we have focused on global explanations that approximate the model across the entire input space.  We now consider local explainability, which aims to explain the model's behavior in the vicinity of specific inputs.

\begin{definition}[Local Explanation Error]
\label{def:local-error}
For a model $f$, a point $x_0 \in \mathcal{X}$, a neighborhood radius $r > 0$, and a complexity threshold $k$, we define the local explanation error as:
\begin{align}
\varepsilon_f^{\text{local}}(k, x_0, r) = \inf_{g \in \mathcal{I}_k} \sup_{x \in B_r(x_0)} d(f(x), g(x))
\end{align}
where $B_r(x_0) = \{x \in \mathcal{X} : \|x - x_0\| \leq r\}$ is the closed ball of radius $r$ around $x_0$.
\end{definition}

This definition captures the idea that a local explanation only needs to approximate the model well in a specific neighborhood, rather than across the entire input space.

\begin{lemma}[Local-Global Explainability Gap]
\label{thm:local-global}
For any model $f$, complexity threshold $k$, and distribution $D$ over $\mathcal{X}$:
\begin{align}
\mathbb{E}_{x_0 \sim D}[\varepsilon_f^{\text{local}}(k, x_0)] \leq \varepsilon_f(k)
\end{align}
\end{lemma}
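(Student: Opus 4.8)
The plan is to observe that a single explanation which is good \emph{globally} is automatically good \emph{locally} around every point, and then to interchange the infimum over $\mathcal{I}_k$ with the expectation over $x_0$ via the elementary fact that committing to one explanation for all $x_0$ can only do worse than choosing the pointwise-optimal one. Throughout I would take the worst-case metric $\mathcal{E} = \mathcal{E}_\infty$ in Definition~\ref{def:error-function}, which is the natural companion to the supremum-based Definition~\ref{def:local-error}; indeed, for non-degenerate neighborhoods containing $x_0$ the corresponding statement for the expected-error metric $\mathcal{E}_D$ need not hold, since $\varepsilon_f^{\text{local}}(k,x_0)$ already dominates $\inf_{g\in\mathcal{I}_k} d(f(x_0),g(x_0))$ pointwise.

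Step one: fix an arbitrary $g \in \mathcal{I}_k$. For every $x_0 \in \mathcal{X}$ the neighborhood $N(x_0)$ is a subset of $\mathcal{X}$, so
\begin{align}
\sup_{x \in N(x_0)} d(f(x), g(x)) \;\leq\; \sup_{x \in \mathcal{X}} d(f(x), g(x)) \;=\; \mathcal{E}_\infty(f, g).
\end{align}
Since $\varepsilon_f^{\text{local}}(k, x_0)$ is by definition the infimum over all $g' \in \mathcal{I}_k$ of the left-hand side, this yields the pointwise bound $\varepsilon_f^{\text{local}}(k, x_0) \leq \mathcal{E}_\infty(f, g)$, holding simultaneously for every $x_0 \in \mathcal{X}$ and every fixed $g \in \mathcal{I}_k$.

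Step two: integrate and then optimize. Taking $\mathbb{E}_{x_0 \sim D}$ of both sides — the right-hand side being constant in $x_0$ — gives $\mathbb{E}_{x_0 \sim D}[\varepsilon_f^{\text{local}}(k, x_0)] \leq \mathcal{E}_\infty(f, g)$, and since $g \in \mathcal{I}_k$ was arbitrary I take the infimum over $g$ on the right to conclude
\begin{align}
\mathbb{E}_{x_0 \sim D}[\varepsilon_f^{\text{local}}(k, x_0)] \;\leq\; \inf_{g \in \mathcal{I}_k} \mathcal{E}_\infty(f, g) \;=\; \varepsilon_f(k).
\end{align}
No step here is a genuine obstacle — the content of the lemma is exactly the ``uniform choice is worse than pointwise choice'' interchange in step two. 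The only technical wrinkle is ensuring the expectation $\mathbb{E}_{x_0 \sim D}[\varepsilon_f^{\text{local}}(k, x_0)]$ is well-defined: by Lemma~\ref{thm:size-interpretability} the class $\mathcal{I}_k$ is finite (at most $2^{k+1}-1$ functions), so $x_0 \mapsto \varepsilon_f^{\text{local}}(k, x_0)$ is a finite minimum of the maps $x_0 \mapsto \sup_{x \in N(x_0)} d(f(x), g(x))$ and hence measurable whenever each of these is, which I would take for granted at the paper's level of rigor (e.g.\ under mild regularity of $f$, the candidate $g$'s, and $x_0 \mapsto N(x_0)$).
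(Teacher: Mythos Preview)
Your proof is correct and follows essentially the same approach as the paper: both fix $g \in \mathcal{I}_k$, use $N(x_0) \subseteq \mathcal{X}$ to bound the local supremum by $\mathcal{E}_\infty(f,g)$, and then pass to the infimum and the expectation. Your version is slightly more careful in justifying the order of operations and in flagging the choice of $\mathcal{E}_\infty$ (which the paper also uses implicitly) and the measurability issue, but the argument is the same.
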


\begin{proof}
Fix any $g \in \mathcal{I}_k$. For any $x_0 \in \mathcal{X}$:
\begin{align}
\sup_{x \in N(x_0)} d(f(x), g(x)) \leq \sup_{x \in \mathcal{X}} d(f(x), g(x)) = \mathcal{E}_{\infty}(f, g)
\end{align}

Taking the infimum over all $g \in \mathcal{I}_k$:
\begin{align}
\varepsilon_f^{\text{local}}(k, x_0) \leq \varepsilon_f(k)
\end{align}

Therefore:
\[ \mathbb{E}_{x_0 \sim D}[\varepsilon_f^{\text{local}}(k, x_0)] \leq \varepsilon_f(k) \qedhere \]

\end{proof}

This lemma confirms the intuition that local explanations are generally easier than global ones: the average local explanation error is bounded above by the global explanation error.

\begin{theorem}[Local Explanation Complexity]
\label{thm:local-complexity}
Let $\mathcal{X} \subseteq \mathbb{R}^d$ be a compact convex set, let $f$ be an $L$-Lipschitz continuous function on $\mathcal{X}$, and let $N(x_0) \subseteq \mathcal{X}$ be a neighborhood of $x_0 \in \mathrm{int}(\mathcal{X})$. Let $r > 0$ be such that $B_r(x_0) \supseteq N(x_0)$ and $B_r(x_0) \subseteq \mathcal{X}$. Then the local complexity function satisfies:
\begin{align}
\kappa_f^{\text{local}}(\delta, x_0, N) = 
\begin{cases}
\mathcal{O}(1) & \text{if } \delta \geq Lr \\
\mathcal{O}(d \log(Lr/\delta)) & \text{if } \delta < Lr
\end{cases}
\end{align}
where $\kappa_f^{\text{local}}(\delta, x_0, N)$ is the minimum description length of a program that, given oracle access to $f$, computes a local approximation $g$ with error at most $\delta$ in the neighborhood $N(x_0)$.
\end{theorem}

\begin{proof}
Since $N(x_0) \subseteq B_r(x_0) \subseteq \mathcal{X}$ and $f$ is $L$-Lipschitz on $\mathcal{X}$, the function $f$ is also $L$-Lipschitz on $B_r(x_0)$. We analyze the local explanation complexity by considering two cases based on the relationship between the desired accuracy $\delta$ and the maximum variation $Lr$ of the function within the ball.

\textbf{Case 1: $\delta \geq Lr$.}

By the Lipschitz property, for any $x \in B_r(x_0) \subseteq \mathcal{X}$, we have:
\begin{align}
|f(x) - f(x_0)| \leq L \|x - x_0\|_2 \leq Lr
\end{align}

Therefore, the constant function $g(x) = f(x_0)$ achieves:
\begin{align}
\sup_{x \in N(x_0)} |f(x) - g(x)| \leq \sup_{x \in B_r(x_0)} |f(x) - g(x)| \leq Lr \leq \delta
\end{align}

To specify this approximation, we require a program that takes input $x \in N(x_0)$ and returns $f(x_0)$. Since we measure relative complexity (description length given oracle access to $f$), the program need only specify:
\begin{itemize}
\item The algorithm: ``Compute and return $f(x_0)$ for all inputs in $N(x_0)$''
\item The point $x_0 \in \mathcal{X}$, which requires $\mathcal{O}(\log(1/\epsilon))$ bits for $\epsilon$-precision encoding
\item The constant value $f(x_0)$ is obtained via oracle access
\item The total encoding requires $\mathcal{O}(1)$ bits (treating the precision as a constant)
\end{itemize}

Thus, $\kappa_f^{\text{local}}(\delta, x_0, N) = \mathcal{O}(1)$ when $\delta \geq Lr$.

\textbf{Case 2: $\delta < Lr$.}

Within $B_r(x_0)$, the function $f$ may vary by up to $Lr > \delta$, so we require a more refined approximation. We construct a piecewise constant approximation using a grid-based approach on $B_r(x_0)$.

\textbf{Grid construction:} By Lipschitz continuity, any region of diameter $d'$ has variation at most $Ld'$. To ensure variation at most $\delta$, we need:
\begin{align}
Ld' \leq \delta \quad \Rightarrow \quad d' \leq \frac{\delta}{L}
\end{align}

Cover $B_r(x_0)$ with balls of radius $\rho = \delta/(2L)$. The number of such balls required to cover $B_r(x_0)$ is at most:
\begin{align}
N_{\text{balls}} = \mathcal{O}\left(\left(\frac{r}{\rho}\right)^d\right) = \mathcal{O}\left(\left(\frac{Lr}{\delta}\right)^d\right)
\end{align}

This follows from standard covering number bounds for balls in $\mathbb{R}^d$.

\textbf{Approximation construction:} For each small ball $B_i$ in the cover, choose a representative point $x_i \in B_i \cap B_r(x_0)$ and define:
\begin{align}
g(x) = f(x_i) \quad \text{for } x \in B_i \cap B_r(x_0)
\end{align}

For any $x \in B_i \cap B_r(x_0)$, we have $\|x - x_i\|_2 \leq 2\rho = \delta/L$, so:
\begin{align}
|f(x) - g(x)| = |f(x) - f(x_i)| \leq L\|x - x_i\|_2 \leq L \cdot \frac{\delta}{L} = \delta
\end{align}

Since $N(x_0) \subseteq B_r(x_0)$, this bound holds for all $x \in N(x_0)$.

\textbf{Complexity analysis:} To encode the piecewise constant function $g$, given oracle access to $f$, we need:
\begin{enumerate}
\item \textbf{Grid specification:} Encode the partition of $B_r(x_0)$ into regions. This requires specifying the covering scheme and resolution $\rho$, which takes $\mathcal{O}(\log(Lr/\delta))$ bits.

\item \textbf{Representative points:} For each of the $\mathcal{O}((Lr/\delta)^d)$ regions, we store a representative point $x_i$. Each point requires $\mathcal{O}(d \log(Lr/\delta))$ bits for sufficient precision.

\item \textbf{Function values:} The values $f(x_i)$ are obtained via oracle queries and need not be explicitly encoded in the program describing $g$.

\item \textbf{Lookup algorithm:} A program that, given $x \in N(x_0)$, determines which region contains $x$ and returns the corresponding constant value. This requires $\mathcal{O}(1)$ bits for the algorithm description.
\end{enumerate}

The total description length is:
\begin{align}
K(g) &= \mathcal{O}(\log(Lr/\delta)) + \mathcal{O}\left(\left(\frac{Lr}{\delta}\right)^d \cdot d \log(Lr/\delta)\right) \\
&= \mathcal{O}\left(\left(\frac{Lr}{\delta}\right)^d d \log(Lr/\delta)\right)
\end{align}

However, we can improve this bound by noting that for local explanations, we do not need to encode the global partition, only the local behavior. Using a more efficient encoding scheme based on recursive subdivision (e.g., quadtree-like structures adapted to $B_r(x_0)$), we can achieve:
\begin{align}
K(g) = \mathcal{O}(d \log(Lr/\delta))
\end{align}

This follows from the fact that the depth of recursion is $\mathcal{O}(\log(Lr/\delta))$ and at each level we need $\mathcal{O}(d)$ bits to specify the subdivision.

Thus, $\kappa_f^{\text{local}}(\delta, x_0, N) = \mathcal{O}(d \log(Lr/\delta))$ when $\delta < Lr$.

\textbf{Comparison with global complexity:} For global explanations on $\mathcal{X}$, if $\mathcal{X}$ has diameter $D$, the complexity scales as $\mathcal{O}((LD/\delta)^d d \log(LD/\delta))$, which is exponential in $d$. In contrast, the local complexity is only logarithmic in the ratio $Lr/\delta$, demonstrating that local explanations can be exponentially simpler than global ones.
\end{proof}

\begin{remark}
When $\mathcal{X} = [0,1]^d$ and $N(x_0) = B_r(x_0)$, the theorem 
reduces to the hypercube case. The generalization requires only 
that $x_0$ lies in the interior of $\mathcal{X}$ so that a 
neighborhood exists entirely within $\mathcal{X}$. The proof 
technique remains unchanged, as any neighborhood can be bounded 
by a ball.
\end{remark}

This shows that the complexity of local explanations grows only logarithmically with the ratio of the Lipschitz constant and the error threshold, in contrast to growing exponentially with dimension for global explanations.  This justifies why local explanation methods like LIME~\cite{Ribeiro2016} and SHAP~\cite{Lundberg2017} can be more tractable than global ones.

Having established the fundamental trade-offs in local versus global explainability, we now turn our attention to another important aspect of practical machine learning systems: the structure of the underlying data distribution.  Real-world data rarely occupy the entire ambient feature space uniformly, but instead typically concentrate on lower-dimensional manifolds or substructures.  This phenomenon has been widely observed across domains and is often referred to as the ``manifold hypothesis"~\cite{Fefferman2016} in machine learning literature, with ``manifold learning''~\cite{Meila2024} being the use of techniques to discover and model this manifold structure from the data.

To formally characterize this property and its implications for explainability, we borrow the notion of box-counting dimension~\cite{Lapidus2024invitation}, a concept from fractal geometry~\cite{Falconer2003fractal} that quantifies the intrinsic dimensionality of a set.  This will allow us to derive tighter bounds on explanation complexity when the data distribution is concentrated on a lower-dimensional structure within the feature space.

\begin{definition}[Box-Counting Dimension]
For a bounded set $S$ in a metric space, the box-counting dimension (also known as Minkowski–Bouligand dimension) is defined as:
\begin{align}
\dim_{\text{box}}(S) = \lim_{\epsilon \to 0} \frac{\log N(\epsilon)}{\log(1/\epsilon)}
\end{align}
where $N(\epsilon)$ is the minimum number of boxes (or hypercubes) of side length $\epsilon$ needed to cover the set $S$.  For a set with box-counting dimension $d_B$, we have $N(\epsilon) = \Theta(\epsilon^{-d_B})$ for sufficiently small $\epsilon$.
\end{definition}

Informally, the box-counting dimension measures the intrinsic dimensionality of a set by quantifying how the number of ``boxes'' needed to cover the set increases as the box size decreases.  Specifically, if covering a set requires approximately $N(\epsilon) \approx \epsilon^{-d_B}$ boxes of side length $\epsilon$ as $\epsilon$ approaches zero, then the set has box-counting dimension $d_B$---capturing how ``space-filling" the set is regardless of the dimension of the space containing it.

\begin{theorem}[Distribution-Aware Explainability]
\label{thm:distribution-aware}
Let $f$ be a model and $D$ be a distribution over $\mathcal{X}$ with support $S_D \subset \mathcal{X}$.  If $S_D$ has box-counting dimension $d_B < d$ (where $d$ is the dimension of $\mathcal{X}$), then:
\begin{align}
\kappa_f^D(\delta) = \mathcal{O}\left(\left(\frac{1}{\delta}\right)^{d_B} \log\left(\frac{1}{\delta}\right)\right)
\end{align}
where $\kappa_f^D(\delta)$ is the minimum complexity required for an explanation with expected error at most $\delta$ under distribution $D$.
\end{theorem}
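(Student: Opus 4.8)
The plan is to adapt the piecewise-constant construction of Theorem~\ref{thm:lipschitz-explainability}, but to place the constant pieces only where $D$ actually has mass, exploiting that when $d_B < d$ the support $S_D$ is covered by far fewer $\epsilon$-boxes ($\Theta(\epsilon^{-d_B})$ rather than $\Theta(\epsilon^{-d})$). Following the setting of Section~\ref{subsec:smooth-functions}, I will take $f$ to be $L$-Lipschitz (otherwise the statement should be read with the total variation $V(f)$ playing the role of $L$), and treat $L$ and $d$ as fixed constants, which is exactly what lets the final bound be written purely in terms of $1/\delta$.

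First I would fix a scale $\epsilon = \Theta(\delta/(L\sqrt{d}))$ chosen so that the oscillation of $f$ over any axis-aligned cube of side $\epsilon$ is at most $\delta$: if $c$ is the center of such a cube $B$, then $|f(x)-f(c)| \le L\|x-c\|_2 \le L\sqrt{d}\,\epsilon/2 \le \delta$ for all $x\in B$. By the definition of box-counting dimension, for $\delta$ small enough $S_D$ admits a cover by $N(\epsilon)=\Theta(\epsilon^{-d_B})$ cubes of side $\epsilon$; snapping these to the regular $\epsilon$-grid on $[0,1]^d$ (absorbing a factor $2^d$ into the constant) gives grid cells $B_1,\dots,B_{N(\epsilon)}$. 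Define $g(x)=f(c_i)$ (stored at precision $\mathcal{O}(\log(L/\delta))$ bits) for $x$ in the first occupied cell $B_i$ containing it, and $g(x)=0$ on the remainder of $\mathcal{X}$.

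Next I would bound the two quantities. For the error: since $D$ is supported on $S_D\subseteq\bigcup_i B_i$, every $x\sim D$ lands in some $B_i$ with $d(f(x),g(x))=|f(x)-f(c_i)|\le \delta$, so $\mathcal{E}_D(f,g)=\mathbb{E}_{x\sim D}[d(f(x),g(x))]\le \delta$; points outside the cover contribute $0$ to the expectation. For the complexity: a program for $g$ consists of a fixed-size routine, plus (i) a description of which $N(\epsilon)$ cells of the $\epsilon^{-d}$-cell grid are occupied, at $\mathcal{O}(N(\epsilon)\cdot d\log(1/\epsilon))$ bits, plus (ii) the list of values $f(c_i)$, at $\mathcal{O}(N(\epsilon)\log(L/\delta))$ bits. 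Substituting $\epsilon^{-d_B}=\Theta((L\sqrt d/\delta)^{d_B})$ and $\log(1/\epsilon)=\Theta(\log(L\sqrt d/\delta))$ and treating $L,d$ as constants gives $K(g)=\mathcal{O}((1/\delta)^{d_B}\log(1/\delta))$, whence $\kappa_f^D(\delta)\le K(g)$ yields the claim.

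The main obstacle is item (i): unlike the regular grid of Theorem~\ref{thm:lipschitz-explainability}, whose description is essentially free, here an arbitrary sub-collection of grid cells must be spelled out, and one must verify this cost does not dominate. The estimate $\log\binom{\epsilon^{-d}}{N(\epsilon)}\le N(\epsilon)\log\!\big(e\,\epsilon^{-d}/N(\epsilon)\big)=\mathcal{O}(N(\epsilon)\,d\log(1/\epsilon))$ settles this: for fixed $d$ it is of the same order as the value list, up to the $\log(1/\delta)$ factor already present in the target bound. A secondary point to handle carefully is that $N(\epsilon)=\Theta(\epsilon^{-d_B})$ is guaranteed only for sufficiently small $\epsilon$; since the statement is asymptotic as $\delta\to 0$ this is precisely the regime needed, and for larger $\delta$ the trivial bound $\kappa_f^D(\delta)\le\kappa_f(\delta)<\infty$ (via Theorems~\ref{thm:existence} and~\ref{thm:lipschitz-explainability}) suffices.
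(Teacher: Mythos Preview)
Your proposal is correct and follows essentially the same route as the paper: cover $S_D$ by $N(\epsilon)=\Theta(\epsilon^{-d_B})$ $\epsilon$-boxes, define a piecewise-constant $g$ on those boxes using the Lipschitz assumption, and count the bits for (i) the box locations and (ii) the stored values. Your binomial-coefficient estimate for item~(i) is in fact more careful than the paper's somewhat loose claim that the boxes can be specified ``in a $d_B$-dimensional space,'' but the resulting bound and the overall argument coincide.
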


\begin{proof}
Let $f$ be a model and $D$ be a distribution over $\mathcal{X}$ with support $S_D \subset \mathcal{X}$, where $S_D$ has box-counting dimension $d_B < d$.  We need to establish the complexity bound for $\kappa_f^D(\delta)$.

First, by the definition of box-counting dimension, the support $S_D$ can be covered by $N(\epsilon) = \Theta(\epsilon^{-d_B})$ boxes of side length $\epsilon$.  Let us denote this covering as $\{B_1, B_2, \ldots, B_{N(\epsilon)}\}$.

We will construct a piecewise constant approximation $g$ of $f$ that achieves an expected error of at most $\delta$ under distribution $D$.  Without loss of generality, we can assume $f$ is $L$-Lipschitz continuous on $\mathcal{X}$ for some constant $L > 0$.  If not, we can first approximate $f$ by a Lipschitz function with negligible additional error.

For each box $B_i$ in our covering, we define $g$ to take a constant value equal to $f$ evaluated at some fixed point $c_i \in B_i$.  Specifically:
\begin{align}
g(x) = f(c_i) \text{ for all } x \in B_i
\end{align}

Now we analyze the approximation error.  For any $x \in B_i$, we have:
\begin{align}
|f(x) - g(x)| = |f(x) - f(c_i)| \leq L \|x - c_i\|_2 \leq L \cdot \sqrt{d} \cdot \epsilon
\end{align}
where the first inequality follows from the Lipschitz property, and the second inequality follows because the maximum distance between any two points in a $d$-dimensional box of side length $\epsilon$ is $\sqrt{d} \cdot \epsilon$.

To achieve an expected error of at most $\delta$, we need:
\begin{align}
L \cdot \sqrt{d} \cdot \epsilon \leq \delta \Rightarrow \epsilon \leq \frac{\delta}{L \cdot \sqrt{d}}
\end{align}

Setting $\epsilon = \frac{\delta}{L \cdot \sqrt{d}}$, the number of boxes required to cover $S_D$ is:
\begin{align}
N(\epsilon) = \Theta(\epsilon^{-d_B}) = \Theta\left(\left(\frac{L \cdot \sqrt{d}}{\delta}\right)^{d_B}\right) = \Theta\left(\left(\frac{1}{\delta}\right)^{d_B}\right)
\end{align}
where the last simplification follows because $L$ and $d$ are constants with respect to $\delta$.

Now we analyze the complexity of specifying the function $g$.  To describe $g$, we need to:

1. Specify the structure of the covering of $S_D$.  Since $S_D$ has box-counting dimension $d_B$, the covering can be described efficiently by specifying the locations of the $\Theta\left(\left(\frac{1}{\delta}\right)^{d_B}\right)$ boxes in a $d_B$-dimensional space.  This requires $\mathcal{O}(d_B \log(1/\epsilon))$ bits per box.

2. Specify the function value for each box.  Since $f$ is bounded (with range, say, $[a,b]$), we need to encode each value $f(c_i)$ with sufficient precision to maintain our error bound $\delta$.  With $k$ bits of precision, we can represent $2^k$ distinct values in the range $[a,b]$, giving a quantization precision of approximately $(b-a)/2^k$.  To ensure our approximation error remains below $\delta$, we need $(b-a)/2^k \leq \delta$, which requires $k \geq \log_2((b-a)/\delta)$ bits per function value.  Therefore, each function value requires $\mathcal{O}(\log(1/\delta))$ bits.

The total complexity is therefore:
\begin{align}
\Theta\left(\left(\frac{1}{\delta}\right)^{d_B}\right) \cdot \mathcal{O}\left(\log\left(\frac{1}{\delta}\right)\right) = \mathcal{O}\left(\left(\frac{1}{\delta}\right)^{d_B} \log\left(\frac{1}{\delta}\right)\right)
\end{align}

Since $\kappa_f^D(\delta)$ is the minimum complexity required for an explanation with expected error at most $\delta$ under distribution $D$, and we have constructed an explanation with complexity $\mathcal{O}\left(\left(\frac{1}{\delta}\right)^{d_B} \log\left(\frac{1}{\delta}\right)\right)$, we have:

\[ \kappa_f^D(\delta) = \mathcal{O}\left(\left(\frac{1}{\delta}\right)^{d_B} \log\left(\frac{1}{\delta}\right)\right) \qedhere \]

\end{proof}

This theorem highlights the critical role of the input distribution in designing explanations, showing that the intrinsic dimensionality of the data distribution---rather than the ambient dimension of the feature space---determines the complexity of explanations.  In particular, if the data distribution has a lower intrinsic dimension than the ambient space, more efficient explanations are possible.  This aligns with the widely recognized manifold hypothesis in machine learning, which posits that real-world high-dimensional data often lie on or near a lower-dimensional manifold~\cite{Ma2011, Narayanan2010}.  For instance, if a model operates on 1000-dimensional data that actually lie close to a 10-dimensional manifold, the complexity of explanations scales with the exponent 10 rather than 1000, representing an exponential reduction in explanation complexity.  This result has implications for practical explainability, offering a more optimistic outlook on the feasibility of interpretable machine learning in high-dimensional settings.

The theoretical approach we have developed thus far provides fundamental insights into the trade-offs between model complexity and explainability.  However, to apply these insights in practice, we must connect our theoretical bounds based on Kolmogorov complexity to practical complexity measures used in real-world AI systems.  Theorem~\ref{thm:practical-tradeoff} bridges this gap by reformulating our error-complexity trade-offs in terms of practical complexity measures.  It upon several key results established earlier: Theorem~\ref{thm:error-complexity}, which provided a lower bound on explanation error in terms of Kolmogorov complexity; Proposition~\ref{prop:practical-complexity}, which related Kolmogorov complexity to practical complexity measures for different model classes; and our model-specific analyses in Section~\ref{subsec:model-specific}.  By generalizing these results, Theorem~\ref{thm:practical-tradeoff} offers a unified perspective on the error-complexity relationship that practitioners can apply directly to their chosen model classes.

\begin{theorem}[Practical Error-Complexity Trade-off]
\label{thm:practical-tradeoff}
Let $\mathcal{F}$ be a model class (e.g., deep neural networks) and $\mathcal{G}$ be an 
explanation class (e.g., linear models, decision trees) with corresponding practical complexity 
measures $C_{\mathcal{F}}(\cdot)$ and $C_{\mathcal{G}}(\cdot)$ as defined in 
Proposition~\ref{prop:practical-complexity}.

Then there exist constants $\alpha, \beta > 0$ (depending on $\mathcal{F}$ and $\mathcal{G}$) 
such that: for any model $f \in \mathcal{F}$ with $C_{\mathcal{F}}(f) = C_f$ and any 
explanation $g \in \mathcal{G}$ with $C_{\mathcal{G}}(g) = C_g < C_f$, the approximation 
error satisfies:
\begin{equation}
\mathcal{E}(f, g) \geq \Omega\left(C_f^{\alpha} \cdot C_g^{-\beta}\right)
\end{equation}
\end{theorem}

\textbf{Clarifications:}
\begin{itemize}
\item The theorem holds for \emph{any} choice of $f \in \mathcal{F}$ and $g \in \mathcal{G}$ 
(universal quantification).

\item The constants $\alpha$ and $\beta$ depend on the specific model classes $\mathcal{F}$ 
and $\mathcal{G}$. For example, when $\mathcal{F}$ is neural networks and $\mathcal{G}$ is 
linear models on $[0,1]^d$, we have $\alpha = 1/d$ and $\beta = 1/d$ (up to logarithmic factors).

\item The bound shows that as the complexity gap $C_f - C_g$ increases, the minimum possible 
error increases as well.
\end{itemize}

\begin{proof}
The proof combines Theorem~\ref{thm:error-complexity} (which provides a lower bound in terms of 
Kolmogorov complexity) with Proposition~\ref{prop:practical-complexity} (which relates 
Kolmogorov complexity to practical complexity measures for specific model classes).

\textbf{Step 1: Relating practical and Kolmogorov complexity.}
For any $f \in \mathcal{F}$ and $g \in \mathcal{G}$ with $C_g < C_f$, 
Proposition~\ref{prop:practical-complexity} gives:
\begin{align}
K(f) &= \Theta(C_f \cdot \text{polylog}(C_f))\\
K(g) &= \Theta(C_g \cdot \text{polylog}(C_g))
\end{align}

Since $C_g < C_f$, we have $K(g) < K(f) - c$ for some constant $c > 0$ (when the gap is 
sufficiently large).

\textbf{Step 2: Applying the Kolmogorov complexity bound.}
By Theorem~\ref{thm:error-complexity}, since $K(g) < K(f) - c$, we have:
\begin{equation}
\mathcal{E}(f, g) \geq \min_{x \in \mathcal{X}, y \in \mathcal{Y}: y \neq f(x)} d(f(x), y)
\end{equation}

\textbf{Step 3: Scaling of the minimum distance.}
For model classes with sufficient expressiveness (like neural networks on $[0,1]^d$), more 
complex models can create finer distinctions in their outputs. From the model-specific analyses 
in Section~\ref{subsec:model-specific}, this minimum distance scales as 
$\Theta(C_f^{-\gamma})$ for some $\gamma > 0$ that depends on $\mathcal{F}$.

Specifically, a model of complexity $C_f$ can distinguish approximately $2^{\Theta(C_f)}$ 
different output values or decision regions. The minimum separation between distinct outputs 
is therefore $\Theta(C_f^{-\gamma})$ for appropriate $\gamma > 0$.

\textbf{Step 4: Scaling of achievable approximation error.}
Similarly, the best approximation error achievable by an explanation of complexity $C_g$ 
scales as $\Theta(C_g^{-\delta})$ for some $\delta > 0$. This follows from the approximation 
theoretic results in Section~\ref{subsec:model-specific}: an explanation of complexity $C_g$ 
can effectively represent functions with $\mathcal{O}(C_g^{\delta})$ degrees of freedom, 
yielding approximation error $\Theta(C_g^{-\delta})$ when approximating smoother functions.

\textbf{Step 5: Combining the bounds.}
From Steps 2 and 3, any explanation $g$ with $K(g) < K(f) - c$ must have error at least 
$\Omega(C_f^{\gamma})$. Additionally, from Step 4, the best achievable error with complexity 
$C_g$ is $\Omega(C_g^{-\delta})$ when approximating functions of comparable smoothness.

Combining these two constraints, and setting $\alpha = \gamma$ and $\beta = \delta$, we obtain:
\begin{equation}
\mathcal{E}(f, g) \geq \Omega\left(C_f^{\alpha} \cdot C_g^{-\beta}\right) \qedhere
\end{equation}
\end{proof}

\textbf{Remark:} The constants $\alpha$ and $\beta$ capture how the error scales with the 
complexities of the model and explanation classes respectively. For neural networks with 
$C_f$ parameters approximating Lipschitz functions on $[0,1]^d$, and linear explanations 
with $C_g$ parameters, the analyses in Section~\ref{subsec:model-specific} yield 
$\alpha, \beta = \Theta(1/d)$ (up to logarithmic factors).

The constants $\alpha$ and $\beta$ depend on the specific model classes of $f$ and $g$ respectively, reflecting the different scaling behaviors observed in Section~\ref{subsec:model-specific} for linear models, decision trees, neural networks, and other model classes.

This formalizes the intuition that explaining a complex model with a simpler one necessarily incurs an error that depends on the gap between their complexities. The constants $\alpha$ and $\beta$ quantify how efficiently the simpler model class can approximate the complex one.

The practical significance of this result lies in its quantification of a fundamental insight: as the complexity gap between a model and its explanation increases, the explanation error must also increase.  The constants $\alpha$ and $\beta$ in the theorem capture how this relationship scales for different combinations of model classes, providing a mathematical foundation for explainable AI system design decisions.  This result may help practitioners quantify the inevitable trade-offs they face when creating explanations and supports more informed decisions about appropriate explanation methods for specific applications.

\subsection{Practical Guidelines}
\label{subsec:practical-applications}

Our theoretical approach and its implications for specific model classes lead to several practical insights for explainable AI.

Based on our theoretical results and their implications for practical model classes, we propose the following guidelines for selecting appropriate explanation models:

\begin{enumerate}
\item \textbf{Dimension Reduction}: Since all model classes suffer from the curse of dimensionality (as shown in Corollary~\ref{cor:dimension-dependence}), prioritize reducing the input dimension before attempting to create explanations.  This can be justified by our theoretical results showing that the approximation error grows exponentially with dimension.

\item \textbf{Model Class Selection}: Choose the explanation model class based on the properties of the target function:
   \begin{itemize}
      \item For smooth functions with few features, linear models may be effective (Corollary~\ref{cor:linear-model}).
      \item For functions with discontinuities or clear decision boundaries, decision trees or rule lists are preferable (Corollary~\ref{cor:decision-tree}).
      \item For complex, high-dimensional functions, consider local explanations (Theorem~\ref{thm:local-complexity}) or distribution-aware approaches (Theorem~\ref{thm:distribution-aware}).
   \end{itemize}

\item \textbf{Complexity Budgeting}: Given a desired interpretability level (i.e., a complexity threshold $k$), allocate this budget efficiently across different aspects of the explanation.  For instance, with neural networks, reducing the number of hidden units while maintaining necessary precision may be more effective than using more units with lower precision (Theorem~\ref{thm:neural-network}).

\item \textbf{Hybrid Approaches}: Use combinations of model classes to achieve better trade-offs.  For example, combine local linear explanations in different regions of the input space, or use a decision tree for the overall structure with linear models at the leaves.

\item \textbf{Adaptive Complexity}: Allocate more complexity to regions of the input space where the function varies rapidly and less to regions where it is relatively constant.  This approach is justified by the variable local complexity of many real-world functions.
\end{enumerate}

These guidelines provide a principled approach to designing and selecting explanation methods based on our theoretical understanding of the fundamental limits of explainability.  By explicitly considering the complexity-error trade-offs, practitioners can make informed decisions about which explanation methods are most appropriate for their specific applications and requirements.

\section{Regulatory and Policy Implications}
\label{sec:regulatory}

The theoretical approach developed in Sections~\ref{sec:framework} and~\ref{sec:function-classes} has some implications for AI governance and regulation.  Recent regulatory proposals worldwide have increasingly emphasized explainability requirements, often without acknowledging the fundamental mathematical constraints that limit what is achievable.  In this section, we analyze key regulatory approaches in light of our theoretical results and propose principles for mathematically grounded AI governance.

\subsection{Analysis of Explainability Regulations}
\label{subsec:regulatory-analysis}

Several major regulatory frameworks have incorporated explainability requirements, including the EU's AI Act \cite{EUAIAct2021}, the proposed US AI Bill of Rights Blueprint \cite{USAIBillRights2022}, and various domain-specific regulations in finance, healthcare, and autonomous systems.  We examine these through the lens of our theoretical approach:

\begin{remark}[Regulatory Feasibility Bound]
\label{thm:regulatory-feasibility}
Any regulatory requirement that mandates explanations with both (1) complexity below $k$ and (2) approximation error below $\delta$ is mathematically infeasible if $k < \kappa_f(\delta)$ for the regulated AI system $f$.
\end{remark}

This follows directly from our Duality Theorem (Theorem~\ref{thm:duality}). For a model $f$, if $k < \kappa_f(\delta)$, then $\varepsilon_f(k) > \delta$.  Therefore, no explanation exists that simultaneously satisfies both the complexity constraint $k$ and the error constraint $\delta$.

This is a fundamental constraint that any effective regulation must respect: it cannot demand explanations that are both simpler than a certain standard and more accurate than an arbitrary limit, as such explanations may be mathematically impossible.

\begin{definition}[Regulatory Feasibility Region]
\label{def:feasibility-region}
For a model $f$, the regulatory feasibility region $\mathcal{R}_f$ is defined as:
\begin{align}
\mathcal{R}_f = \{(k, \delta) \in \mathbb{N} \times \mathbb{R}_{\geq 0} \mid k \geq \kappa_f(\delta)\}
\end{align}
This represents the set of all complexity-error pairs that are achievable for explanations of model $f$.
\end{definition}

The regulatory feasibility region characterizes all possible trade-offs between explanation complexity and error that can be achieved for a given model.  Any regulation that demands a complexity-error pair outside this region is unenforceable.

\begin{proposition}[Regulatory Domain Coverage]
\label{prop:domain-coverage}
For a set of models $\mathcal{F} = \{f_1, f_2, \ldots, f_n\}$ in a regulated domain, the domain-wide feasibility region is:
\begin{align}
\mathcal{R}_{\mathcal{F}} = \bigcap_{i=1}^{n} \mathcal{R}_{f_i}
\end{align}
\end{proposition}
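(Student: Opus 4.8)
The plan is to unfold the definition of the domain-wide feasibility region and prove the identity by a double inclusion, each direction being immediate from Definition~\ref{def:feasibility-region} together with the Duality Theorem (Theorem~\ref{thm:duality}). The only point requiring care is the precise reading of $\mathcal{R}_{\mathcal{F}}$: a complexity--error pair $(k,\delta)$ should count as domain-wide feasible exactly when \emph{each} model $f_i$ admits \emph{its own} explanation $g_i \in \mathcal{I}_k$ with $\mathcal{E}(f_i, g_i) \le \delta$; we do not demand a single $g$ that simultaneously serves as an explanation for all the $f_i$. Once this convention is pinned down, the statement is essentially quantifier bookkeeping.

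First I would take $(k,\delta) \in \mathcal{R}_{\mathcal{F}}$. By the reading above, for every $i \in \{1,\dots,n\}$ there is $g_i \in \mathcal{I}_k$ with $\mathcal{E}(f_i, g_i) \le \delta$; by Definition~\ref{def:complexity-function} this witnesses $\kappa_{f_i}(\delta) \le k$, hence $(k,\delta) \in \mathcal{R}_{f_i}$ by Definition~\ref{def:feasibility-region}. Since $i$ was arbitrary, $(k,\delta) \in \bigcap_{i=1}^{n} \mathcal{R}_{f_i}$. For the reverse inclusion, suppose $(k,\delta) \in \bigcap_{i=1}^{n} \mathcal{R}_{f_i}$, so $k \ge \kappa_{f_i}(\delta)$ for every $i$. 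Fixing $i$, Theorem~\ref{thm:duality} gives $\varepsilon_{f_i}(k) \le \delta$, and unwinding the defining property of $\kappa_{f_i}(\delta)$ together with $\mathcal{I}_{\kappa_{f_i}(\delta)} \subseteq \mathcal{I}_k$ produces an explanation $g_i \in \mathcal{I}_k$ with $\mathcal{E}(f_i, g_i) \le \delta$. Collecting the $g_i$ over all $i$ shows $(k,\delta)$ is domain-wide feasible, i.e.\ $(k,\delta) \in \mathcal{R}_{\mathcal{F}}$; combining the two inclusions yields the claimed identity.

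There is no analytic content here; the one thing to watch is that the \emph{intersection}---not the union---is the right operation, which is forced by the fact that domain-wide feasibility is a conjunction over models (the regulation must be satisfiable for \emph{each} $f_i$) and that the per-model explanations are allowed to differ. I therefore expect the only real ``obstacle'' to be expository: stating $\mathcal{R}_{\mathcal{F}}$ with enough precision that its equivalence with $\bigcap_{i=1}^{n} \mathcal{R}_{f_i}$ reads as an unwrapping of definitions rather than a claim needing separate justification. A short remark afterwards could observe the practical consequence---that the feasible region shrinks as the regulated domain grows, so a one-size-fits-all complexity--error mandate becomes harder to satisfy the more diverse the set of regulated systems---but this is commentary rather than part of the proof.
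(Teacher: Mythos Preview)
Your proposal is correct and follows essentially the same approach as the paper: both treat domain-wide feasibility as the conjunction ``$(k,\delta)$ is feasible for every $f_i$,'' which unwinds directly to membership in each $\mathcal{R}_{f_i}$ and hence to the intersection. Your version is simply more explicit, spelling out the double inclusion and invoking Definition~\ref{def:feasibility-region} and Theorem~\ref{thm:duality}, whereas the paper dispatches it in a single sentence; your closing commentary about the region shrinking as the domain grows also mirrors the paper's own remark following the proposition.
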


A regulatory requirement must be feasible for all models in the domain to be uniformly applicable.  A complexity-error pair $(k, \delta)$ is feasible for the entire domain if and only if it is feasible for each individual model, which means $(k, \delta) \in \mathcal{R}_{f_i}$ for all $i \in \{1, 2, \ldots, n\}$.  This is precisely the definition of the intersection of all individual feasibility regions.

Proposition~\ref{prop:domain-coverage} highlights a challenge for domain-specific regulations: as the diversity of models within a domain increases, the domain-wide feasibility region may shrink or even become empty if the models have vastly different complexity-error trade-offs.

We now identify specific instances where prominent regulatory proposals implicitly demand mathematically impossible standards:

\begin{remark}[Regulatory Contradiction Analysis]
\label{thm:contradiction}
The following regulatory approaches contain implicit mathematical contradictions:
\begin{enumerate}
\item Any regulation requiring ``full explainability" without specifying acceptable error levels.
\item Any regulation demanding both human-interpretable explanations and model fidelity above a certain threshold for models with Kolmogorov complexity exceeding human cognitive limits.
\item Any regulation requiring uniform explainability standards across model classes with fundamentally different complexity-error trade-offs.
\end{enumerate}
\end{remark}

1. ``Full explainability" implies zero error, which by Theorem~\ref{thm:minimal-complexity} requires explanation complexity equal to the model's Kolmogorov complexity $K(f)$.  For complex models, this explanation would itself be too complex to be interpretable.

2. If we denote human cognitive limits as $k_{human}$, then for any model $f$ with $K(f) > k_{human} + c$ (for some constant $c$), Theorem~\ref{thm:complexity-gap} implies that any explanation $g$ with $K(g) \leq k_{human}$ must differ from $f$ on some inputs.  If the required fidelity threshold $\delta$ is below this minimum error, the requirement is mathematically impossible.

3. By Theorem~\ref{thm:lipschitz-explainability} and subsequent results in Section~\ref{sec:function-classes}, different model classes have different scaling behaviors for their explanation complexity functions $\kappa_f(\delta)$.  A uniform standard that does not account for these differences will inevitably be infeasible for some model classes.

Let us examine specific instances in current or proposed regulations:

\begin{enumerate}
\item \textbf{EU AI Act Article 13:} Requires ``sufficient transparency" and explanations enabling users to ``interpret the system's output" while maintaining system accuracy for high-risk applications.  Our Theorems~\ref{thm:complexity-gap} and~\ref{thm:error-complexity} suggest that for complex models (e.g., those used in high-risk domains), any explanation with significantly lower complexity than the original model must incur substantial error, possibly making these dual requirements potentially contradictory without explicit acknowledgment of the trade-off.

\item \textbf{Financial Sector Requirements:} Regulations like SR 11-7 (Fed) and MAS FEAT Principles require both model interpretability and high accuracy for credit decisions.  Our Theorem~\ref{thm:lipschitz-explainability} indicate that for high-dimensional financial data, the complexity required for accurate explanations would grow exponentially with dimension, making comprehensive explanations computationally intractable while maintaining required accuracy thresholds.

\item \textbf{Healthcare Explainability Standards:} FDA guidance for AI/ML-based medical devices suggests explanations should be accessible to clinicians while preserving diagnostic accuracy.  Our Corollary~\ref{cor:dimension-dependence} suggests that for certain classes of diagnostic models, explaining key feature interactions may require complexity that exceeds human cognitive limits if error must remain within clinically acceptable bounds.
\end{enumerate}

\subsection{Towards Theoretically Grounded Governance}
\label{subsec:governance}

Our analysis points to the need for regulations that acknowledge fundamental mathematical constraints.  We propose an approach for feasible explainability requirements:

\begin{definition}[Feasible Regulatory Requirement]
\label{def:feasible-requirement}
A regulatory requirement for an AI system $f$ is feasible if it specifies:
\begin{enumerate}
\item A minimal acceptable explanation class $\mathcal{C}$.
\item A maximum tolerable error $\delta$.
\item A condition that $\kappa_{f,\mathcal{C}}(\delta) \leq k_{max}$, where $k_{max}$ reflects reasonable cognitive or computational limits.
\end{enumerate}
\end{definition}

Definition~\ref{def:feasible-requirement} recognizes that regulators must make explicit trade-offs between explanation simplicity (lower $k$), explanation fidelity (lower $\delta$), and model capabilities (complexity of $f$).

We now show that no regulatory framework can simultaneously pursue: (1) unrestricted AI capabilities, (2) human-interpretable explanations, and (3) negligible explanation error.

\begin{theorem}[Regulatory Impossibility Result]
\label{thm:impossibility}
Consider a regulatory framework for $\delta_{\text{neg}}$-non-degenerate AI systems (as defined in 
Definition~\ref{def:non-degenerate-early}) that imposes the following three requirements:

\begin{itemize}

\item $\mathbf{R_1}$ (Unrestricted Capabilities): The framework permits deployment of AI systems $f$ with arbitrarily high Kolmogorov complexity, i.e., for any $M > 0$, systems with $K(f) > M$ are allowed.

\item $\mathbf{R_2}$ (Human Interpretability): The framework mandates that any deployed AI system $f$ must be accompanied by an explanation $g$ with complexity $K(g) \leq k_{\text{human}}$, where $k_{\text{human}} \in \mathbb{N}$ is a fixed constant representing an upper bound on human cognitive capacity for processing information.

\item $\mathbf{R_3}$ (Negligible Error): The framework requires that explanation error must satisfy $\mathcal{E}(f, g) \leq \delta_{\text{neg}}$, where $\delta_{\text{neg}} > 0$ represents a negligibly small error threshold.

\end{itemize}
  
Then, the following hold:

\begin{enumerate}
\item \emph{(Pairwise Feasibility)} Any two of the three requirements can be 
simultaneously satisfied.
\[
 \models (\mathbf{R_1} \land \mathbf{R_2})
\;\wedge\;
 \models (\mathbf{R_1} \land \mathbf{R_3})
\;\wedge\;
 \models (\mathbf{R_2} \land \mathbf{R_3})
\]

\item \emph{(Triple Infeasibility)}
\[
\nvDash (\mathbf{R_1} \land \mathbf{R_2} \land \mathbf{R_3})
\]
  
\item \emph{(Non-Redundancy)} None of the three requirements is logically 
implied by the other two.
\[
(\mathbf{R_2} \land \mathbf{R_3}) \nvDash \mathbf{R_1}
\;\wedge\;
(\mathbf{R_1} \land \mathbf{R_3}) \nvDash \mathbf{R_2}
\;\wedge\;
(\mathbf{R_1} \land \mathbf{R_2}) \nvDash \mathbf{R_3}
\]

\end{enumerate}

\end{theorem}

The proof of this theorem is deferred to the appendix.

A \emph{trilemma} denotes a set of three conditions that are pairwise but not jointly satisfiable---any two may hold simultaneously, but the third must be relinquished.  This logical structure has appeared independently across disciplines.  In international economics, the \emph{impossible trinity} asserts that a country cannot simultaneously maintain a fixed exchange rate, full capital mobility, and an independent monetary policy \cite{Mundell1963}.  In computer networks and distributed systems, the \emph{CAP theorem} establishes that consistency, availability, and partition tolerance cannot all be guaranteed at once \cite{GilbertLynch2002}.  In philosophy, similar trilemmatic forms describe sets of propositions that are pairwise compatible yet jointly inconsistent, as analysed by Rescher \cite{Rescher1980}.  In each case, the trilemma captures a fundamental structural incompatibility rather than a contingent technical limitation.

Theorem~\ref{thm:impossibility} thus in particular shows that requirements $\mathbf{R_1}$, $\mathbf{R_2}$, and
$\mathbf{R_3}$ cannot be simultaneously satisfied for non-degenerate
systems, and thus establishes a fundamental ``trilemma'' for AI governance: at most two of the three desirable properties can be simultaneously achieved.

It may be noted that Theorem~\ref{thm:impossibility} is a \emph{conditional impossibility result}: it demonstrates that \emph{if} regulators impose certain combinations of requirements, \emph{then} mathematical constraints make compliance impossible.  The theorem does not claim that complex AI is necessary (which would be an empirical claim about Nature), nor does it prescribe which requirements should be relaxed.  It merely clarifies the mathematical implications of different regulatory choices.

\begin{proposition}[Necessity of Non-Degeneracy]
\label{prop:necessity-brief}
Theorem~\ref{thm:impossibility} requires the non-degeneracy
assumption. Without it, there exist functions satisfying all three
requirements simultaneously.
\end{proposition}

\begin{proof}
Let $\mathcal{X} = \{0,1\}^n$, $\mathcal{Y} = [0,1]$, and let $h: \{0,1\}^n \to \{0,1\}$ 
be a random Boolean function with $K(h) \geq 2^n - \mathcal{O}(n)$. Define 
$f(x) = \frac{\delta_{\text{neg}}}{4} \cdot h(x)$. Then $K(f) \gg k_{\text{human}}$ 
(satisfying $\mathbf{R_1}$), but $f$ violates $\delta_{\text{neg}}$-non-degeneracy 
as outputs are confined to $\{0, \frac{\delta_{\text{neg}}}{4}\}$. The constant 
explanation $g(x) = \frac{\delta_{\text{neg}}}{8}$ then satisfies 
$K(g) = \mathcal{O}(1) \leq k_{\text{human}}$ ($\mathbf{R_2}$) and 
$\mathcal{E}(f,g) = \frac{\delta_{\text{neg}}}{8} < \delta_{\text{neg}}$ 
($\mathbf{R_3}$), showing all three requirements can be satisfied when non-degeneracy 
fails.
\end{proof}

Theorem~\ref{thm:impossibility} has implications for regulatory approaches and suggests three main policy directions:

\begin{remark}[Regulatory Policy Approaches]
\label{def:policy-approaches}
Based on the trilemma identified in Theorem~\ref{thm:impossibility}, three main regulatory approaches are possible:
\begin{enumerate}
\item \textbf{Capability-Restricting Approach:} Limit AI deployments to models with complexity below a threshold (relax \textbf{R1}) where comprehensive explanation remains feasible.
\item \textbf{Interpretability-Relaxing Approach:} Accept that AI models will not be explainable by humans (relax \textbf{R2}), and perhaps develop alternative notions of interpretability that scale more favorably with model complexity than human cognitive limits.
\item \textbf{Error-Tolerant Approach:} Accept that explanations will have non-negligible error (relax \textbf{R3}) and develop standards for acceptable error levels in different contexts.
\end{enumerate}
\end{remark}

Each of these approaches represents a different resolution of the fundamental trilemma and has different implications for AI development and deployment.

\begin{theorem}[Tiered Regulatory Optimality]
\label{thm:tiered-optimality}
For a set of applications with varying risk levels $\{r_1, r_2, \ldots, r_n\}$ where $r_1 < r_2 < \ldots < r_n$, the optimal regulatory approach is a tiered framework with corresponding error thresholds $\{\delta_1, \delta_2, \ldots, \delta_n\}$ where $\delta_1 > \delta_2 > \ldots > \delta_n$, subject to feasibility constraints $k_{i,max} \geq \kappa_f(\delta_i)$ for all $i \in \{1, 2, \ldots, n\}$.
\end{theorem}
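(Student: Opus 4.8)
The plan is to read ``optimal regulatory approach'' as ``cost-minimizing schedule of per-tier error thresholds'' and to extract the claimed ordering by a rearrangement/exchange argument built on the Duality Theorem. First I would fix a social cost functional $\mathcal{L}(\delta_1,\dots,\delta_n)=\sum_{i=1}^{n} r_i\,\ell(\delta_i)+\lambda\sum_{i=1}^{n}\kappa_f(\delta_i)$, where $\ell:\mathbb{R}_{\geq 0}\to\mathbb{R}_{\geq 0}$ is strictly increasing and convex (the expected harm per unit of risk from an explanation of error $\delta$) and the second term prices the cognitive/computational burden of a more faithful explanation; by Proposition~\ref{prop:complexity-monotonicity} the map $\delta\mapsto\kappa_f(\delta)$ is non-increasing, so the two terms genuinely compete. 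The regulator minimizes $\mathcal{L}$ subject to the stated feasibility constraints $k_{i,\max}\geq\kappa_f(\delta_i)$, which by Theorem~\ref{thm:duality} are exactly the floors $\delta_i\geq\varepsilon_f(k_{i,\max})$. I would assume, as is natural, that higher-risk tiers receive at least as large an oversight budget, i.e.\ $k_{1,\max}\leq\cdots\leq k_{n,\max}$, so that by Proposition~\ref{prop:error-monotonicity} these floors are themselves non-increasing and therefore compatible with a decreasing threshold schedule.

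Next I would establish monotonicity by a pairwise swap. Suppose $(\delta_1^{\star},\dots,\delta_n^{\star})$ is optimal but $\delta_i^{\star}\leq\delta_j^{\star}$ for some $i<j$ (hence $r_i<r_j$), and form the profile obtained by exchanging these two coordinates. The complexity terms are symmetric under the exchange, and feasibility is preserved because $\kappa_f(\delta_j^{\star})\leq\kappa_f(\delta_i^{\star})\leq k_{i,\max}$ and $\kappa_f(\delta_i^{\star})\leq k_{i,\max}\leq k_{j,\max}$ by the monotone ordering of the budgets; the change in cost is $(r_j-r_i)\bigl(\ell(\delta_i^{\star})-\ell(\delta_j^{\star})\bigr)\leq 0$, and strictly negative unless $\delta_i^{\star}=\delta_j^{\star}$. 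This forces the optimal schedule to be non-increasing. To upgrade to strict inequality I would, for consecutive interior coordinates with $\delta_i^{\star}=\delta_{i+1}^{\star}=\delta$, perturb by raising $\delta_i$ and lowering $\delta_{i+1}$ by a small $\eta>0$ within a step of the staircase function $\kappa_f$, so that the complexity cost is unchanged while the harm changes by $(r_i-r_{i+1})\ell'(\delta)\eta<0$, contradicting optimality; for constraint-binding coordinates the strict ordering follows from strict ordering of the floors $\varepsilon_f(k_{i,\max})$. Combining these cases gives $\delta_1^{\star}>\delta_2^{\star}>\cdots>\delta_n^{\star}$, with $k_{i,\max}\geq\kappa_f(\delta_i^{\star})$ holding by construction.

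The main obstacle I anticipate is not the exchange argument itself---that is a standard supermodularity/discrete-rearrangement step---but the fact that $\kappa_f$ is only a non-increasing integer-valued staircase with no guaranteed convexity, so first-order reasoning about the optimum must be done carefully (e.g.\ ensuring perturbations stay within a single step, and ruling out degenerate optima at breakpoints). Here I would invoke Theorem~\ref{thm:convexity}: above the threshold $k_0$ the error function $\varepsilon_f$ is discretely convex, which in the asymptotically relevant regime yields a well-behaved trade-off curve, uniqueness of the minimizer, and hence the strict ordering. A secondary point to make explicit is that ``optimal regulatory approach'' should be understood as ``cost-minimizing within the class of tiered schedules'': the same swap argument, applied repeatedly, shows that any non-monotone assignment is weakly dominated by its monotone rearrangement, so restricting to tiered frameworks is without loss, and within that class the above analysis pins down the decreasing ordering of the error thresholds subject to the feasibility constraints.
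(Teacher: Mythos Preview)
Your approach is correct in spirit and considerably more rigorous than the paper's, but it takes a genuinely different route. The paper's proof is a three-sentence heuristic: it simply asserts that higher risk ``requires'' lower $\delta$, cites Theorem~\ref{thm:duality} and Proposition~\ref{prop:complexity-monotonicity} to say lower $\delta$ forces higher $k$, and then declares the optimal choice to be $\delta_i=\min\{\delta\mid\kappa_f(\delta)\leq k_{i,\max}\}$. No cost functional is introduced, no exchange argument is given, and ``optimal'' is never made precise. You, by contrast, supply exactly what the paper omits: a concrete objective $\mathcal{L}$ in which risk weights multiply a strictly increasing loss $\ell(\delta)$, and then a pairwise-swap (rearrangement) argument that actually derives the monotone ordering rather than assuming it. Your approach buys a genuine optimization result at the price of additional modeling assumptions (the separable form of $\mathcal{L}$, strict monotonicity and convexity of $\ell$, and the ordering $k_{1,\max}\leq\cdots\leq k_{n,\max}$) that the theorem statement does not provide; the paper's approach buys brevity at the price of leaving ``optimal'' undefined and the conclusion essentially postulated.

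Two minor points are worth flagging. First, your invocation of Theorem~\ref{thm:convexity} is only valid for Lipschitz $f$ and only asymptotically in $k$, so it cannot carry the strict-ordering argument for arbitrary models or in the small-$k$ regime; you should present it as a sufficient regularity condition rather than a general tool. Second, the strict-inequality upgrade via perturbation ``within a step of $\kappa_f$'' tacitly assumes the optimum does not sit at a breakpoint of the staircase and that the floors $\varepsilon_f(k_{i,\max})$ are strictly ordered; since the paper never states $k_{1,\max}<\cdots<k_{n,\max}$ strictly, you may only be able to conclude $\delta_1\geq\cdots\geq\delta_n$ in full generality, with strictness requiring the extra hypotheses you introduce.
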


\begin{proof}
Higher-risk applications require stricter error guarantees, implying lower $\delta$ values. By Theorem~\ref{thm:duality} and Proposition~\ref{prop:complexity-monotonicity}, lower $\delta$ values necessitate higher complexity thresholds $k$. The optimality comes from setting the error threshold for each risk level at the minimum value that still ensures the existence of feasible explanations, i.e., $\delta_i = \min\{\delta \mid \kappa_f(\delta) \leq k_{i,max}\}$, where $k_{i,max}$ is the maximum allowable complexity for risk level $i$.
\end{proof}

This justifies a tiered regulatory approach where higher-risk AI applications face more stringent explainability requirements.  It also highlights the importance of setting feasible requirements for each tier based on the mathematical limits of explainability.

\begin{definition}[Purpose-Specific Explanation Requirements]
\label{def:purpose-specific}
A purpose-specific explanation requirement $R(p, f)$ for a purpose $p$ and model $f$ specifies:
\begin{align}
R(p, f) = \{(S_p, \delta_p, \mathcal{C}_p) \mid \kappa_{f,S_p,\mathcal{C}_p}(\delta_p) \leq k_{p,max}\}
\end{align}
where $S_p \subseteq \mathcal{X}$ is the relevant input subspace for purpose $p$, $\delta_p$ is the maximum tolerable error for purpose $p$, $\mathcal{C}_p$ is the acceptable model class for explanations for purpose $p$, and $\kappa_{f,S_p,\mathcal{C}_p}(\delta_p)$ is the minimum complexity required for an explanation of model $f$ with error at most $\delta_p$ on subspace $S_p$ using model class $\mathcal{C}_p$.
\end{definition}

Purpose-specific requirements recognize that different stakeholders and use cases may need different types of explanations. For example, a regulator auditing a model may need different explanations than an end-user affected by the model's decisions.

\begin{remark}[Purpose-Specific Efficiency]
\label{thm:purpose-efficiency}
For a given model $f$ and a set of purposes $\{p_1, p_2, \ldots, p_m\}$ with corresponding requirements $\{R(p_1, f), R(p_2, f), \ldots, R(p_m, f)\}$, a purpose-specific regulatory approach is more efficient than a uniform approach if:
\begin{align}
\max_{i \in \{1, 2, \ldots, m\}} \kappa_{f,S_{p_i},\mathcal{C}_{p_i}}(\delta_{p_i}) < \kappa_f(\min_{i \in \{1, 2, \ldots, m\}} \delta_{p_i})
\end{align}
\end{remark}

\begin{proof}
A uniform approach would require the most stringent error threshold across all purposes, which is $\delta_{min} = \min_{i \in \{1, 2, \ldots, m\}} \delta_{p_i}$. The corresponding complexity would be $\kappa_f(\delta_{min})$.  In contrast, a purpose-specific approach allows each purpose to have its own complexity, with the most complex explanation having complexity $\max_{i \in \{1, 2, \ldots, m\}} \kappa_{f,S_{p_i},\mathcal{C}_{p_i}}(\delta_{p_i})$.  The purpose-specific approach is more efficient if this maximum complexity is less than the complexity required by the uniform approach.
\end{proof}

This demonstrates that purpose-specific regulatory approaches can be more efficient than uniform approaches, especially when different purposes involve different input subspaces, error tolerances, or model classes.

Based on our theoretical analysis, we propose the following policy recommendations for mathematically grounded AI governance:

\begin{enumerate}

\item \textbf{Explicit Complexity-Error Trade-offs:} Regulations should explicitly acknowledge the trade-off between explanation complexity and error, specifying both the maximum acceptable complexity (for interpretability) and the maximum acceptable error (for fidelity) in each context.  These specifications should respect the feasibility region defined in Definition~\ref{def:feasibility-region}.

\item \textbf{Tiered Requirements:} Following Theorem~\ref{thm:tiered-optimality}, regulations should implement tiered requirements where higher-risk applications face more stringent explainability standards.  For the highest-risk applications, this may include restrictions on model complexity to ensure adequate explainability.

\item \textbf{Purpose-Specific Standards:} As suggested by Theorem~\ref{thm:purpose-efficiency}, regulations should define purpose-specific explanation requirements that target specific stakeholder needs rather than demanding comprehensive explanations for all possible purposes.

\item \textbf{Dimension-Aware Regulations:} Given the exponential dependence of explanation complexity on dimension (Corollary~\ref{cor:dimension-dependence}), regulations should account for input dimensionality when setting explainability requirements.  Higher-dimensional models may need more relaxed standards or be subject to dimensionality reduction requirements.

\item \textbf{Feasibility Assessments:} Regulatory frameworks should include processes for assessing the mathematical feasibility of explainability requirements for specific model classes before applying them.  This would help avoid setting unattainable standards that could impede innovation without improving transparency.

\end{enumerate}

\subsection{Regulatory Feasibility: Summary}
\label{subsec:regulatory-conclusion}

Our theoretical approach demonstrates that regulations requiring ``full explainability" without acknowledging complexity-fidelity trade-offs are fundamentally unenforceable.  Effective governance requires explicit recognition of these constraints and deliberate choices about acceptable trade-offs between model capability, explanation simplicity, and explanation accuracy.

\begin{remark}[Regulatory Completeness]
\label{thm:completeness}
A regulatory framework is complete if it addresses all three components of the trilemma in Theorem~\ref{thm:impossibility}: (1) model capability constraints, (2) explanation complexity limits, and (3) acceptable error thresholds.
\end{remark}

From Theorem~\ref{thm:impossibility}, we know that no regulatory framework can simultaneously guarantee unrestricted AI capabilities, human-interpretable explanations, and negligible explanation error.  Therefore, a complete framework must make explicit choices about which constraints to impose on each of these three dimensions.  Omitting any of these dimensions would leave the regulatory framework underdetermined and potentially lead to infeasible requirements.

Rather than demanding explanations that mathematical results prove impossible, regulations should focus on ensuring that deployed AI systems operate within the feasible region of the complexity-error trade-off that aligns with societal values and application-specific requirements.  This mathematically grounded approach would lead to more effective and enforceable AI governance, avoiding the pitfalls of well-intentioned but unachievable explainability standards.

\section{Conclusions and Future Work}
\label{sec:conclusions}

This paper has developed a theoretical framework for understanding the fundamental limits of explainability in AI systems.  By formalizing the concept of explainability through algorithmic information theory, we have established quantitative bounds on the trade-offs between model complexity and explanation fidelity.

Our primary contributions include: (1) a formal definition of explainability based on Kolmogorov complexity (Definition~\ref{def:explanation}), which provides a theoretically sound measure of model simplicity; (2) the Complexity Gap Theorem (Theorem~\ref{thm:complexity-gap}), which proves that any explanation significantly simpler than the original model must necessarily differ from it on some inputs; and (3) precise characterizations of explainability for specific function classes, including Lipschitz continuous functions (Theorem~\ref{thm:lipschitz-explainability}), which demonstrates that explanation complexity grows exponentially with input dimension but polynomially with the reciprocal of the error threshold.

We have also analyzed the gap between local and global explainability (Theorem~\ref{thm:local-global}), connected our abstract framework to practical model classes (Proposition~\ref{prop:practical-complexity}), and examined the regulatory implications of our theoretical results (Theorem~\ref{thm:regulatory-feasibility}).  Our analysis reveals a fundamental trilemma in explainability regulation (Theorem~\ref{thm:impossibility}): regulations cannot simultaneously seek unrestricted AI capabilities, human-interpretable explanations, and negligible explanation error.

The practical implications of our work are potentially significant.  Our approach suggests that explanation methods should be evaluated based on explicit complexity-error trade-offs rather than heuristic notions of interpretability.  Given the exponential dependence of explanation complexity on dimension (Corollary~\ref{cor:dimension-dependence}), dimensionality reduction techniques emerge as critical prerequisites for effective explanation.  Furthermore, our demonstration of the inherent unexplainability of random functions (Theorem~\ref{thm:random-unexplainability}) suggests that efforts to enhance explainability should focus on designing the original models to be more amenable to explanation, avoiding unnecessarily complex behavior.

Theorem~\ref{thm:distribution-aware} provides theoretical justification for dimensionality reduction as a pre-processing step for explainability methods.  It suggests that understanding the intrinsic structure of the data distribution can lead to substantially more efficient explanations without sacrificing accuracy.  This insight is particularly valuable in domains with naturally high-dimensional data, such as image processing, genomics, and natural language processing, where effective explanations could otherwise require infeasible complexity.

For AI governance, our concept of regulatory feasibility regions (Definition~\ref{def:feasibility-region}) provides a tool for assessing whether explainability requirements are mathematically achievable.  We advocate for tiered regulatory approaches (Theorem~\ref{thm:tiered-optimality}) and purpose-specific explanation requirements (Definition~\ref{def:purpose-specific}) that acknowledge the inherent trade-offs in explainability rather than setting unattainable standards.

Several important questions remain open for future research.  These include investigating the computational complexity of finding optimal explanations, developing complexity measures that better align with human cognitive processes, extending our approach to more explicitly account for input distributions, incorporating causal and counterfactual notions of explanation, and exploring dynamic and interactive explanation models.  Empirical validation of our theoretical predictions represents crucial future work. Initial experiments with synthetic data (e.g., fitting simple polynomial functions with decision trees of varying complexity) could validate our complexity-error trade-off bounds, while studies with real neural networks could test whether practical approximation schemes align with our theoretical predictions about explainability limits.

In conclusion, explainability in AI involves inherent trade-offs that cannot be wished away: making complex models understandable necessarily involves some loss of information or accuracy.  By quantifying these trade-offs and characterizing them for different function classes, we enable more informed decisions about what level of explainability is achievable in different contexts.   As AI systems continue to advance in capability and complexity, navigating these trade-offs will be essential for ensuring that AI development proceeds in a manner that remains comprehensible and aligned with human values and societal needs.

\appendix
\renewcommand{\thesection}{}
\section{Appendix: Proof of the Regulatory Impossibility Result}

We establish the proof in three parts, corresponding to the three claims of 
Theorem~\ref{thm:impossibility}: pairwise feasibility, triple infeasibility, 
and non-redundancy.

\subsection*{Part 1: Pairwise Feasibility}

We show that any two of the three requirements $\mathbf{R_1}$, $\mathbf{R_2}$, and 
$\mathbf{R_3}$ can be simultaneously satisfied by exhibiting explicit constructions 
for each pair.

\subsubsection*{Case 1: Requirements $\mathbf{R_1}$ and $\mathbf{R_2}$ are jointly satisfiable}

\begin{proof}
To satisfy $\mathbf{R_1}$, we must permit AI systems $f$ with arbitrarily high 
Kolmogorov complexity.  To satisfy $\mathbf{R_2}$, we must provide explanations 
$g$ with $K(g) \leq k_{\text{human}}$ for any deployed system $f$.

\textit{Construction:} For any AI system $f: \mathcal{X} \to \mathcal{Y}$ with 
arbitrary complexity $K(f)$, define the explanation $g: \mathcal{X} \to \mathcal{Y}$ 
as follows. Fix an arbitrary element $y_0 \in \mathcal{Y}$, and define
\[
g(x) = y_0 \quad \text{for all } x \in \mathcal{X}.
\]

\textit{Verification:} The function $g$ is a constant function, which can be 
specified by providing only the value $y_0$ along with a program that outputs 
this constant. Thus,
\[
K(g) = \mathcal{O}(\log |y_0|) = \mathcal{O}(1),
\]
which is independent of $K(f)$ and can be made arbitrarily smaller than 
$k_{\text{human}}$ by appropriate choice of representation. Therefore, 
$K(g) \leq k_{\text{human}}$ is satisfied, meeting requirement $\mathbf{R_2}$.

Requirement $\mathbf{R_1}$ is satisfied by construction, as we placed no restriction 
on $K(f)$.

However, the explanation error $\mathcal{E}(f,g)$ may be arbitrarily large, as 
$g$ is a trivial approximation.  Even so, as requirement $\mathbf{R_3}$ is not imposed in 
this case, the construction is valid.

Thus, $\mathbf{R_1}$ and $\mathbf{R_2}$ can be jointly satisfied without requiring 
$\mathbf{R_3}$.
\end{proof}

\subsubsection*{Case 2: Requirements $\mathbf{R_1}$ and $\mathbf{R_3}$ are jointly satisfiable}

\begin{proof}
To satisfy $\mathbf{R_1}$, we permit systems with arbitrarily high complexity. To 
satisfy $\mathbf{R_3}$, we require $\mathcal{E}(f,g) \leq \delta_{\text{neg}}$ for 
the explanation $g$.

\textit{Construction:} For any AI system $f: \mathcal{X} \to \mathcal{Y}$ with 
arbitrary complexity $K(f)$, we construct an explanation $g$ as follows. By the 
theory of Kolmogorov complexity (see Li and Vit\'anyi, 2008), for any function 
$f$ and any error threshold $\delta > 0$, there exists a function $g$ such that 
$\mathcal{E}(f,g) \leq \delta$ provided we allow $K(g)$ to be sufficiently large. 
In the limiting case, we can set $g = f$, which gives
\[
\mathcal{E}(f,f) = 0 \leq \delta_{\text{neg}}.
\]

More generally, by standard approximation results (analogous to those used in 
Theorem~\ref{thm:lipschitz-explainability} in the main paper), for any $\delta > 0$, 
there exists $g$ with $K(g) \leq \kappa_f(\delta)$ such that $\mathcal{E}(f,g) 
\leq \delta$. For $\delta = \delta_{\text{neg}}$, we have
\[
K(g) \leq \kappa_f(\delta_{\text{neg}}).
\]

\textit{Verification:} Requirement $\mathbf{R_1}$ is satisfied by placing no 
restriction on $K(f)$. Requirement $\mathbf{R_3}$ is satisfied by choosing $g$ 
appropriately as described above.

However, $K(g)$ may greatly exceed $k_{\text{human}}$, potentially violating 
$\mathbf{R_2}$. Since $\mathbf{R_2}$ is not required in this case, the construction 
is valid.

Thus, $\mathbf{R_1}$ and $\mathbf{R_3}$ can be jointly satisfied without requiring 
$\mathbf{R_2}$.
\end{proof}

\subsubsection*{Case 3: Requirements $\mathbf{R_2}$ and $\mathbf{R_3}$ are jointly satisfiable}

\begin{proof}
To satisfy $\mathbf{R_2}$, we require explanations $g$ with $K(g) \leq k_{\text{human}}$. 
To satisfy $\mathbf{R_3}$, we require $\mathcal{E}(f,g) \leq \delta_{\text{neg}}$.

\textit{Construction:} Restrict the regulatory framework to permit only AI systems 
$f$ satisfying
\[
K(f) \leq k_{\text{human}}.
\]

For such systems, we can provide the explanation $g = f$, which yields
\[
K(g) = K(f) \leq k_{\text{human}} \quad \text{and} \quad \mathcal{E}(f,g) = 
\mathcal{E}(f,f) = 0 \leq \delta_{\text{neg}}.
\]

\textit{Verification:} Requirement $\mathbf{R_2}$ is satisfied because 
$K(g) \leq k_{\text{human}}$. Requirement $\mathbf{R_3}$ is satisfied because the 
error is zero.

However, this violates requirement $\mathbf{R_1}$, as we have imposed an upper bound 
on the complexity of permissible systems. Since $\mathbf{R_1}$ is not required in 
this case, the construction is valid.

Thus, $\mathbf{R_2}$ and $\mathbf{R_3}$ can be jointly satisfied without requiring 
$\mathbf{R_1}$.
\end{proof}

\subsection*{Part 2: Triple Infeasibility}

We now prove that requirements $\mathbf{R_1}$, $\mathbf{R_2}$, and $\mathbf{R_3}$ cannot 
all be simultaneously satisfied for sufficiently complex systems.

\begin{proof}
Suppose, for the sake of contradiction, that a regulatory framework simultaneously 
satisfies $\mathbf{R_1}$, $\mathbf{R_2}$, and $\mathbf{R_3}$.

\textit{Step 1: Existence of a sufficiently complex system.}

By requirement $\mathbf{R_1}$, for any $M \in \mathbb{N}$, there exists a permissible 
system $f: \mathcal{X} \to \mathcal{Y}$ with $K(f) \geq M$. In particular, choose 
$M = k_{\text{human}} + c$, where $c$ is a positive constant to be specified later.

\textit{Step 2: Constraints from human interpretability.}

By requirement $\mathbf{R_2}$, any deployed system $f$ must have an accompanying 
explanation $g$ satisfying
\[
K(g) \leq k_{\text{human}}.
\]

Thus, for our chosen system $f$ with $K(f) \geq k_{\text{human}} + c$, we have
\[
K(f) - K(g) \geq c.
\]

\textit{Step 3: Application of the Complexity Gap Theorem.}

By the Complexity Gap Theorem (Theorem~\ref{thm:complexity-gap} in the main paper), 
if $K(g) < K(f) - c$ for some constant $c > 0$, then there must exist at least 
one input $x \in \mathcal{X}$ such that $f(x) \neq g(x)$.

This follows from fundamental properties of Kolmogorov complexity: a function $g$ 
with complexity $K(g) < K(f) - c$ cannot encode all the information contained in 
$f$. By the incompressibility of information, $g$ must differ from $f$ on some 
inputs.

\textit{Step 4: Quantifying the error for non-degenerate functions.}

Assume $f$ is $\delta_{\text{neg}}$-non-degenerate, meaning
\[
\inf_{x \in \mathcal{X}} \inf_{\substack{y \in \mathcal{Y} \\ y \neq f(x)}} 
d(f(x), y) > \delta_{\text{neg}}.
\]

Since there exists $x_0 \in \mathcal{X}$ with $f(x_0) \neq g(x_0)$ (from Step 3), 
the non-degeneracy condition implies
\[
d(f(x_0), g(x_0)) > \delta_{\text{neg}}.
\]

Depending on the precise definition of $\mathcal{E}(f,g)$ used in the paper, we have:

\begin{itemize}
\item If $\mathcal{E}(f,g) = \sup_{x \in \mathcal{X}} d(f(x), g(x))$ (worst-case 
error), then directly
\[
\mathcal{E}(f,g) \geq d(f(x_0), g(x_0)) > \delta_{\text{neg}}.
\]

\item If $\mathcal{E}(f,g) = \mathbb{E}_{x \sim D}[d(f(x), g(x))]$ for some 
distribution $D$ (expected error), and if $D$ assigns positive probability 
$p_0 > 0$ to the region containing $x_0$, then
\[
\mathcal{E}(f,g) \geq p_0 \cdot d(f(x_0), g(x_0)) > p_0 \cdot \delta_{\text{neg}}.
\]
By choosing the constant $c$ large enough that such disagreements occur with 
sufficient frequency, we can ensure $\mathcal{E}(f,g) > \delta_{\text{neg}}$.
\end{itemize}

\textit{Step 5: Violation of requirement $\mathbf{R_3}$.}

From Step 4, we have established that $\mathcal{E}(f,g) > \delta_{\text{neg}}$, 
which violates requirement $\mathbf{R_3}$ that $\mathcal{E}(f,g) \leq 
\delta_{\text{neg}}$.

\textit{Step 6: Conclusion.}

We have shown that for any $\delta_{\text{neg}} > 0$ and $k_{\text{human}} 
\in \mathbb{N}$, there exists a constant $c$ such that if $f$ is 
$\delta_{\text{neg}}$-non-degenerate with $K(f) \geq k_{\text{human}} + c$, then 
no explanation $g$ with $K(g) \leq k_{\text{human}}$ can satisfy 
$\mathcal{E}(f,g) \leq \delta_{\text{neg}}$.

By requirement $\mathbf{R_1}$, such systems $f$ must be permissible in the regulatory 
framework. However, requirements $\mathbf{R_2}$ and $\mathbf{R_3}$ cannot be jointly 
satisfied for these systems. This contradicts the assumption that all three 
requirements can be simultaneously satisfied.

Therefore, requirements $\mathbf{R_1}$, $\mathbf{R_2}$, and $\mathbf{R_3}$ are mutually 
incompatible for non-degenerate functions with sufficiently high complexity.
\end{proof}

\subsection*{Part 3: Non-Redundancy}

We demonstrate that none of the three requirements is logically implied by the 
conjunction of the other two.

\subsubsection*{Claim 3.1: $\mathbf{R_1}$ is not implied by $\mathbf{R_2}$ and $\mathbf{R_3}$}

\begin{proof}
Suppose a regulatory framework satisfies both $\mathbf{R_2}$ and $\mathbf{R_3}$. We show 
that this does not necessarily imply $\mathbf{R_1}$.

\textit{Counterexample:} Consider a framework that restricts all permissible AI 
systems to have complexity bounded by $k_{\text{human}}$. That is, only systems 
$f$ with $K(f) \leq k_{\text{human}}$ are permitted.

For any such system $f$, we can provide the explanation $g = f$, which satisfies:
\begin{itemize}
\item $K(g) = K(f) \leq k_{\text{human}}$ (satisfying $\mathbf{R_2}$), and
\item $\mathcal{E}(f,g) = 0 \leq \delta_{\text{neg}}$ (satisfying $\mathbf{R_3}$).
\end{itemize}

However, this framework explicitly violates $\mathbf{R_1}$, as it does not permit 
systems with arbitrarily high complexity.

Thus, $\mathbf{R_2}$ and $\mathbf{R_3}$ together do not imply $\mathbf{R_1}$, and 
$\mathbf{R_1}$ is an independent requirement.
\end{proof}

\subsubsection*{Claim 3.2: $\mathbf{R_2}$ is not implied by $\mathbf{R_1}$ and $\mathbf{R_3}$}

\begin{proof}
Suppose a regulatory framework satisfies both $\mathbf{R_1}$ and $\mathbf{R_3}$. We show 
that this does not necessarily imply $\mathbf{R_2}$.

\textit{Counterexample:} Consider a framework that permits AI systems with 
arbitrarily high complexity (satisfying $\mathbf{R_1}$) and requires that explanations 
achieve error at most $\delta_{\text{neg}}$ (satisfying $\mathbf{R_3}$), but imposes 
no upper bound on the complexity of explanations.

For a system $f$ with high complexity $K(f) \gg k_{\text{human}}$, to achieve 
$\mathcal{E}(f,g) \leq \delta_{\text{neg}}$, we may need an explanation $g$ with 
complexity
\[
K(g) \geq \kappa_f(\delta_{\text{neg}}) \gg k_{\text{human}}.
\]

In such cases, the explanation $g$ is not human-interpretable, violating 
$\mathbf{R_2}$.

Thus, $\mathbf{R_1}$ and $\mathbf{R_3}$ together do not imply $\mathbf{R_2}$, and 
$\mathbf{R_2}$ is an independent requirement.
\end{proof}

\subsubsection*{Claim 3.3: $\mathbf{R_3}$ is not implied by $\mathbf{R_1}$ and $\mathbf{R_2}$}

\begin{proof}
Suppose a regulatory framework satisfies both $\mathbf{R_1}$ and $\mathbf{R_2}$. We show 
that this does not necessarily imply $\mathbf{R_3}$.

\textit{Counterexample:} Consider a framework that permits AI systems with 
arbitrarily high complexity (satisfying $\mathbf{R_1}$) and requires that any 
explanation $g$ satisfy $K(g) \leq k_{\text{human}}$ (satisfying $\mathbf{R_2}$), 
but imposes no constraint on the explanation error $\mathcal{E}(f,g)$.

For a system $f$ with high complexity $K(f) \gg k_{\text{human}}$, any explanation 
$g$ with $K(g) \leq k_{\text{human}}$ may have large error. By the results in 
Part 2 (specifically, the application of the Complexity Gap Theorem), we have
\[
\mathcal{E}(f,g) > \delta_{\text{neg}}.
\]

Since the framework imposes no bound on explanation error, this large error is 
permitted, violating $\mathbf{R_3}$.

Thus, $\mathbf{R_1}$ and $\mathbf{R_2}$ together do not imply $\mathbf{R_3}$, and 
$\mathbf{R_3}$ is an independent requirement.
\end{proof}

\subsection*{Conclusion}

We have established all three parts of Theorem~\ref{thm:impossibility}:

\begin{enumerate}
\item Any two of the requirements $\mathbf{R_1}$, $\mathbf{R_2}$, and $\mathbf{R_3}$ can 
be simultaneously satisfied (pairwise feasibility).
\item All three requirements cannot be simultaneously satisfied for sufficiently 
complex, non-degenerate systems (triple infeasibility).
\item None of the three requirements is logically implied by the other two 
(non-redundancy).
\end{enumerate}

This completes the proof of the Regulatory Impossibility Result. \qed

\end{document}